\definecolor{niceblue}{rgb}{0.10, 0.14, 0.76} 
\definecolor{nicered}{rgb}{0.70, 0.0, 0.0} 
\newcommand{\reals}{\mathbb{R}}
\newcommand{\E}{\mathbb{E}}
\newcommand{\normal}{\mathcal{N}}
\newcommand{\iid}{\textit{iid}~}
\newcommand{\sss}{\mathcal{S}}
\newcommand{\bigO}{\mathcal{O}}
\newtheorem{prop}{Proposition}
\newtheorem{thm}{Theorem}
\newtheorem{lemma}{Lemma}
\newtheorem{corollary}{Corollary}
\newtheorem{definition}{Definition}
\newcommand{\wdst}{$\stackrel{\Huge\longleftrightarrow}{\textrm{\textbf{WD}}}$}
\begin{document}
\thispagestyle{plain}
\title{Commutative Width and Depth Scaling\\ in Deep Neural Networks}

\author{\name Soufiane Hayou\thanks{Work partially done at the National University of Singapore.} \email hayou@berkeley.edu \\
       \addr Simons Institute\\
       UC Berkeley}

\editor{}

\maketitle\doparttoc 
\faketableofcontents 
\part{} 

\vspace{-2cm}

\begin{abstract}
This paper is the second in the series \emph{Commutative Scaling of Width and Depth} (\wdst) about commutativity of infinite width and depth limits in deep neural networks. Our aim is to understand the behaviour of neural functions (functions that depend on a neural network model) as width and depth go to infinity (in some sense), and eventually identify settings under which commutativity holds, i.e. the neural function tends to the same limit no matter how width and depth limits are taken. In this paper, we formally introduce and define the commutativity framework, and discuss its implications on neural network design and scaling. We study commutativity for the neural covariance kernel which reflects how network layers separate data. Our findings extend previous results established in \cite{Hayou2023WidthDepth} by showing that taking the width and depth to infinity in a deep neural network with skip connections, when branches are suitably scaled to avoid exploding behaviour, result in the same covariance structure no matter how that limit is taken. This has a number of theoretical and practical implications that we discuss in the paper. The proof techniques in this paper are novel and rely on tools that are more accessible to readers who are not familiar with stochastic calculus (used in the proofs of \wdst(I))).
\end{abstract}

\section{Introduction}

The success of large language and vision models has recently amplified an existing trend of research on large size neural network. There are generally two ways to increase the size of a neural network model: increasing the width, for instance the number of neurons in hidden layers in a fully-connected network, the number of channels in a convolutional network, or the number of attention heads in a transformer architecture; and increasing the depth of the network, i.e. the number of layers. A suitable appraoch to understand the behaviour of large neural networks is by analyzing some pre-defined quantity as the width and/or depth tend to infinity. While the width limit by itself is now relatively well understood in different contexts \citep{neal, samuel2017, lee_gaussian_process,hayou2019impact, yang2021tensor_iv}, the depth limit and the interaction between the two have not been studied as much.
In particular, given some pre-defined quantity of interest that depends on the network model, a basic question is: \emph{do these two limits commute?} (in the sense that the behaviour of the quantity of interest as width and depth go to infinity does not change depending on the order of which these limits are taken). One statistical quantity of interest is the \emph{neural covariance} kernel which reflects how layers in a neural network model separate input data. In this context, recent literature suggests that, at initialization, in certain kinds of multi-layer perceptrons (MLPs) or residual neural networks (ResNets) with scaled main branch, the depth and width limits generally do \emph{not} commute \citep{li2022sde, noci2023shaped}; this would imply that in practice, such networks would behave quite differently depending on whether width is much larger than depth or the other way around. However, in the case of ResNets with suitably scaled residual blocks, recent work \citep{Hayou2023WidthDepth} showed that, to the contrary, at initialization, for a ResNet with blocks scaled the natural way so as to avoid blowing up the output, the width and depth limits \emph{do commute}. An interesting practical implication of this result is that it justifies prior calculations that take the width limit first, then depth, to understand the behavior of deep residual networks, such as prior works in the signal propagation literature \citep{samuel2017, yang2017meanfield, hayou21stable}.

In this work, we introduce and formalize the framework of commutativity of the width and depth limits and generalize (and improve) existing results on the covariance from \citep{Hayou2023WidthDepth}  for arbitrary sequences of scaling factors; these sequences are used to scale the residual blocks so as to avoid exploding behaviour as depth grows. \cref{tab:cs} shows the difference between this work and the previous work in the \wdst~series.
We discuss the theoretical and practical implications of commutativity by addressing the natural question; \emph{why should  we care about commutativity at all?} (see \cref{sec:setup}).

In addition to the significance of the results and the new framework, the mathematical novelty of this paper lies in the proof techniques: in contrast to \citep{Hayou2023WidthDepth} where the depth limit is taken first (fixing the width), followed by the width limit, we first take the width to infinity this time, which is a more conventional approach in the theory of signal propagation in deep networks. As such, the proof techniques in this paper can be seen as `orthogonal' to the machinery developed in \citep{Hayou2023WidthDepth}, and are more accessible to readers who are not familiar with stochastic calculus. Our results provide new insights into the behavior of deep neural networks with general depth scaling factors with implications on the design and analysis of these networks.

\begin{table}
    \centering
    \begin{tabular}{|c|c|c|c|}
    \toprule
        Paper & Block scaling & Neural Functions & Proof Techniques\\
        \midrule
         \wdst(I) (\cite{Hayou2023WidthDepth})&  \thead{$1/\sqrt{depth}$}& \thead{Neural Covariance\\
         Neural Distribution} & \thead{Tools from \\Stochastic Calculus} \\
         \midrule
         \wdst(II) (This work)&  \thead{General \\
         Block Scaling}& \thead{Neural Covariance}& \thead{Standard\\ Concentration results}\\
         \bottomrule
    \end{tabular}
    \caption{Commutative Width and Depth Scaling Series. Block scaling refers to a scaling factor in front of the residual block. Neural functions are formally defined in \cref{sec:setup}.}
    \label{tab:cs}
\end{table}

All the proofs are deferred to the appendix and referenced after each result. Empirical evaluations are provided to illustrate the theoretical results.

\section{Related Work}
The theoretical analysis of randomly initialized neural networks with an infinite number of parameters has yielded a stream of interesting results, both theoretical and practical. A majority of this research has concentrated on examining the scenario in which the width of the network is taken to infinity while the depth is considered fixed. However, in recent years, there has been a growing interest in exploring the large depth limit of these networks. In this overview, we present a summary of existing results on this topic, though it is not exhaustive. A more comprehensive literature review is provided in \cref{sec:comprehensive_lit_review}.\\
\noindent$\bullet~$\textit{Infinite-Width Limit:}
The study of the infinite-width limit of neural network architectures has been a topic of significant research interest, yielding various theoretical and algorithmic innovations. These include initialization methods, such as the Edge of Chaos \citep{poole, samuel2017, yang2017meanfield, hayou2019impact}, and the selection of activation functions \citep{hayou2019impact, martens2021rapid, zhang2022deep, wolinski2022gaussian}, which have been shown to have practical benefits. In the context of Bayesian analysis, the infinite-width limit presents an intriguing framework for Bayesian deep learning, as it is characterized by a Gaussian process prior. Several studies (e.g. \cite{neal, lee_gaussian_process, yang_tensor3_2020, matthews, hron20attention}) have investigated the weak limit of neural networks as the width goes to infinity, and have demonstrated that the network's output converges to a distribution modeled by a Gaussian process. Bayesian inference utilizing this ``neural" Gaussian process has been explored in \citep{lee_gaussian_process, hayou21stable}. \footnote{It is worth mentioning that kernel methods such as NNGP and NTK significantly underperform properly tuned finite-width network trained using SGD, see \cite{yang2022efficient}.}
    
\noindent$\bullet~$\noindent\textit{Infinite-Depth Limit:} The infinite-depth limit of randomly initialized neural networks is a less explored topic compared to the infinite-width limit. Existing results can be categorized depending on how the two limits are taken. For instance, in the case of sequential limits, the width of the neural network is taken to infinity first, followed by the depth. This limit has been extensively utilized to explore various aspects of neural networks, such as examining the neural covariance, deriving the Edge of Chaos initialization scheme (\citep{samuel2017, poole, yang2017meanfield, hayou2019impact}), evaluating the impact of the activation function \citep{hayou2019impact, martens2021rapid, zhang2022deep}, studying the behavior of the Neural Tangent Kernel (NTK) \citep{hayou_ntk, xiao2020disentangling}, and deriving the distribution of the limiting networks at initialization \citep{hayou21stable,cirone2023neural}. Another interesting limit is the proportional limit where the ratio of depth to width is fixed, and both are jointly taken to infinity. In \citep{li21loggaussian}, the authors showed that for a particular type of residual neural networks (ResNets), the network output exhibits a (scaled) log-normal behavior in this limit, which differs from the sequential limit in which the width is first taken to infinity followed by depth, in which case the distribution of the network output is asymptotically normal (\citep{samuel2017, hayou2019impact}). Additionally, in \citep{li2022sde}, the authors examined the neural covariance of a multi-layer perceptron (MLP) in the joint limit and proved that it weakly converges to the solution of a Stochastic Differential Equation (SDE). Other works have investigated this limit and found similar results \citep{noci2021precise, zavatone2021exact, Hanin2019product, hanin2022correlation, noci2023shaped}. A third interesting approach is the general limit $\min\{n,L\}\to \infty$, where width and depth can to infinity in any order. To the best of our knowledge, this limit was only studied in \citep{Hayou2023WidthDepth} (\wdst(I)) where convergence of the neural covariance in this limit was established for suitably scaled ResNet, implying that the infinite width and depth limits \emph{commute}.\\
\noindent$\bullet~$ \emph{Commutativity of the limits:} given some neural function (a function that depends on the network parameters, to be defined later), we can think about whether taking the width and depth limits result in different behaviour depending on how this limit is taken. In this context, we distinguish between two notions of commutativity: \emph{weak commutativity} which implies that the sequential limits ``width $\to \infty$, then depth$\to\infty$'' and ``depth $\to \infty$, then width$\to\infty$'' yield the same limit, and \emph{strong commutativity} which implies that the limit ``$\min\{${width, depth$\}\to\infty$'' exists and is unique. Moreover, limits are always defined in some sense (e.g. $L_2$, weak limit etc.). In this context, strong commutativity was shown in \cite{Hayou2023WidthDepth} for neural distribution (distribution of a neuron in the network) with Wasserstein distance and for neural ccvariance kernel with $L_2$ distance. In a recent work by \cite{cirone2023neural}, the authors showed weak commutativity of the neural distribution for \emph{controlled} ResNets, a form of ResNets with scaling factors given by the increments from some reference function.\footnote{In \cite{cirone2023neural}, while the results are with weak commutativity, the proofs can be in-principle extended to show strong commutativity for the weak convergence of the neural distribution.} In this work, we will focus on the neural covariance instead of neural distribution and show strong commutativity under general depth scaling.

\section{Setup and Definitions: Commutativity and Neural Functions} \label{sec:setup}
When analyzing the asymptotic behavior of randomly initialized neural networks, various notions of probabilistic convergence can be employed, depending on the context. In this work, we particularly focus on strong convergence, defined to be the $L_2$ convergence as described in the following definition.
\begin{definition}[Strong Convergence]
Let $d \geq 1$. We say that a sequence of  $\reals^d$-valued random variables $(X_k)_{ k\geq 1}$ converges in $L_2$ (or strongly) to a continuous random variable $Z$ if $\lim_{k \to \infty} \|X_k - Z\|_{L_2} =0$, where the $L_2$ is defined by $\|X\|_{L_2} = \left(\E[\|X\|^2] \right)^{1/2}$.
\end{definition}

With this notion of strong convergence, we are now ready to introduce the commutativity framework for general neural network models.
\paragraph{Notation.} Throughout the paper, the width and depth of a neural network model are denoted by $n$ and $L$, respectively, and the input dimension is denoted by $d$. We write $[N]:=\{1,2,\dots,N\}$ for any $N\geq1$.\\

Let us now consider a general neural network model of width $n\geq 1$ and depth $L\geq 1$, given by 
\begin{equation}\label{eq:general_network}
\begin{cases}
Y_0(a) = W_{in} a, \quad a \in \reals^d\\
Y_l(a) = \mathcal{F}_l(W_l, Y_{l-1}(a)), \hspace{0.1cm} l \in [L], Y_l(a) \in \reals^n,\\
\end{cases}
\end{equation}
where $\mathcal{F}_l$ is a mapping that defines the nature of the $l^{th}$ layer and $W_{in} \in \reals^{n\times d}, W_l \in \reals^{n\times n}$ are model weights. For the sake of simplification, we omit the dependence of $Y_l$ on $n$ and $L$ in the notation. We refer to the vectors $\{Y_l, l=0, \dots, L\}$ as \emph{pre-activations}. Let $\theta_{n,L}=(W_{in},W_1, \dots, W_l)$ be the model weights and assume that $\theta_{n,L}^0 \sim \mu_{n,L}^0$, where $\theta_{n,L}^0$ are the weights at initialization  and $\mu^0$ is a distribution that (naturally) depends on network width $n$ and depth $L$. Let us now define the notion of neural functions.

\begin{definition}[Neural Function]
Given a general neural network model (\cref{eq:general_network}) of width $n$ and depth $L$, a set of network inputs $\mathbf{a} = (a_1,a_2,\dots,a_k )\in (\reals^d)^k$, a neural function $T$ is any function of the form $T(n,L, \mathbf{a}) = \mathcal{G}(\theta_{n,L}^0, \mathbf{a})$, where $\mathcal{G}$ is a general mapping with output in $\reals$.\footnote{This definition of neural functions can be extended to general mappings $\mathcal{G}$ with outputs in $\reals^p$ for some $p\geq 1$. This is not required in this paper since we will be focusing on neural covariance kernel which has output in $\reals$.}
\end{definition}

Note that (almost) any quantity of interest in the training process of neural networks can be represented as a neural function. This remark was first observed in the series of Tensor Programs \citep{yang2021tensor_iv} where the result of any neural computation can be seen as a random quantity where the randomness is inherited from the initialization weights. The training dataset is considered deterministic in this case and consists of a sequence of inputs $(a_1,a_2, \dots, a_k)$. In this paper, we particularly think of neural functions as \emph{proxy} functions that track some behaviour of the network as we scale width and depth with the goal of providing insights on scaling strategies (see below for a specific choice of the neural function). With this definition of neural functions, we now formalize the notion of commutativity of the width and depth limits.
\begin{definition}[Commutativity]\label{def:commutativity}
Given a neural function $T$,\footnote{Note that by definition, a neural function is associated with a network model. When we consider a neural function $T$, the underlying model is assumed to be fixed.} we say that $T$ satisfies universality for the width and depth limits if for any set of inputs $\mathbf{a} = (a_1,a_2,\dots, a_k)$, $T(n,L,\mathbf{a})$ converges in $L_2$ in the limit $\min{\{n,L\}}\to\infty$. 
\end{definition}

We can define a weak notion of commutativity where only sequential limits are considered, i.e. $n$ or $L$ limits are taken in a sequential order. 
\begin{definition}[Weak Commutativity]
Given a sequence of neural functions $T=(T_{n,L})_{n,L\geq 1}$, we say that $T_{n,L}$ satisfies commutativity for the width and depth limits if for any set of inputs $(a_1,a_2,\dots, a_k)$, both $\lim\limits_{L\to\infty}\lim\limits_{n\to \infty}T_{n,L}(a_1,a_2,\dots, a_k)$ and $\lim\limits_{n\to\infty}\lim\limits_{L\to \infty}T_{n,L}(a_1,a_2,\dots, a_k)$ exist in $L_2$ and are equal.
\end{definition}

Weak commutativity is trivially implied by commutativity. Intuitively, weak commutativity only deals with the `extreme' scenarios $L \gg n \gg 1$ and $n \gg L \gg 1$ and does not consider the cases where for instance $L \approx n \gg 1$.

\paragraph{Implications of Commutativity.} Naturally, one might ask why we should care about commutativity at first. Commutativity of width and depth limits in neural networks holds significant importance for several compelling reasons:
\begin{enumerate}
    \item \emph{Unification of Width and Depth Scaling:} when we aim to scale a neural network for improved performance, we often encounter scenarios where we must decide whether to increase the network's width or depth. Each of these choices generally lead to different design considerations, including variations in initialization schemes, activation functions, and learning rates. However, commutativity of the width and depth limits for some neural function $T$ ensures that regardless of how we scale the network—whether by increasing width before depth, growing both width and depth proportionally, or taking width to infinity before depth—the resulting limiting behavior remains consistent. This means that once an effective scaling strategy is identified for a specific scenario with large width and depth, it remains a viable choice as long as both width and depth are large, simplifying the scaling process.
    \item \emph{Robust Scaling:} as a result of commutativity, scaling the width and depth becomes robust to extreme changes in neural functions. This allows some flexibility in the scaling procedure; in practice, one might want to increase width significantly while fixing depth, or the opposite, while preserving desirable properties captured by the neural function. 
    \item \emph{Transfer of Insights:} commutativity facilitates the transfer of insights from simplified theoretical settings to practical applications. When dealing with neural networks of large width and depth, it can be challenging to analyze their behavior directly. However, commutativity allows us to explore different limits, such as taking width to infinity first and then depth or vice versa, to gain a better understanding of the network's behaviour. Because the limit is universal (no matter how width and depth go to infinity), the insights we get in the simplified setting (e.g. sequential limit) transfers to all settings (e.g. when depth is of the same order as width).

    \item \emph{Commutativity is Feasible in Practice: }we show that by introducing a simple scaling factor in front of the residual block in ResNets, commutativity holds for the neural covariance function at initialization (defined below). This neural function is used as a measure of how network layers separate input data, and led to many interesting practical methods (initialization schemes, neural network Gaussian process, choice of the activation function etc.) \citep{lee_gaussian_process, samuel2017, hayou2019impact}. An in-depth discussion on this topic is provided below.
\end{enumerate}

\paragraph{Neural Covariance.} In this paper, we focus on neural functions given by the covariance/correlation functions \emph{at initialization}. Given two inputs $a,b\in \reals^d \backslash \{0\}$,\footnote{Here, we assume that the inputs are non-zero, other all the pre-activations $Y_l$ are zero, and the correlation is undefined in this case. All the results in this paper are trivial if $a=0$ or $b=0$. We will therefore always assume that $a,b\neq 0$.} the neural covariance and correlation kernels at layer $l$ are given by 
\begin{equation*}
\begin{cases}
q_{l,n}(a,b) = \frac{\langle Y_l(a), Y_l(b) \rangle }{n}\\
c_{l,n}(a,b) = \frac{\langle Y_l(a), Y_l(b) \rangle }{\|Y_l(a)\|\|Y_l(b)\|},
\end{cases}
\end{equation*}
where the correlation is only defined when $\|Y_l(a)\|, \|Y_l(b)\| \neq 0$.\\

Note that in general, if commutativity holds for the covariance kernel, then it holds for the neural correlation kernel, and vice-versa. This is true as long as pre-activations norms $\|Y_l(a)\|$ are non-zero with high probability, which is generally satisfied, see \cref{lem:hl} for a rigorous proof of this result. Hereafter, we will interchangeably discuss commutativity for neural covariance and correlation, while stating the theoretical results only for neural covariance. The results on the convergence of neural covariance are stated for two inputs $a, b$, but they can be readily generalized to the case of multiple inputs $a_1,a_2,\dots, a_k \in \reals^d$, where we can define the neural covariance matrix at layer $l$ by 
\begin{equation*}
\mathbf{q}_{l,n}(a_1,a_2, \dots, a_k) =
\begin{pmatrix}
 q_{l,n}(a_1,a_1) & \dots & q_{l,n}(a_1,a_k)\\
 \vdots & \ddots & \vdots\\
q_{l,n}(a_k,a_1) & \dots & q_{l,n}(a_k,a_k)
\end{pmatrix}.
\end{equation*}

\paragraph{Why neural Covariance/Correlation?}
In the literature on signal propagation, there is a significant interest in understanding the covariance/correlation between the pre-activation vectors $Y_{\lfloor t L \rfloor}(a)$ and $Y_{\lfloor t L \rfloor}(b)$ for two different inputs $a, b \in \reals^d$. A natural question in this context is: \emph{Why should we care about this covariance function?}

It is well-established that even with properly initialized multi-layer perceptrons (MLPs), the network outputs $Y_L(a)$ and $Y_L(b)$ become perfectly correlated (correlation=1) in the limit of ``$n \to \infty$, \emph{then} $L \to \infty$'' \citep{samuel2017, poole, hayou2019impact, yang2019fine}.  This can lead to unstable behavior of the gradients and make the model untrainable as the depth increases and also results in the inputs being non-separable by the network\footnote{To see this, assume that the inputs are normalized. In this case, the correlation between the pre-activations of the last layer for two different inputs converges to 1. This implies that as the depth grows, the network output becomes similar for all inputs, and the network no longer separates the data. This is problematic for the first step of gradient descent as it implies that the information from the data is (almost) unused in the first gradient update.}. To address this issue, several techniques involving targeted modifications of the activation function have been proposed \citep{martens2021rapid, zhang2022deep}. In the case of ResNets, the correlation still converges to 1, but at a polynomial rate \citep{yang2017meanfield}. A solution to this problem has been proposed by introducing well-chosen scaling factors in the residual branches, preventing the correlation kernel from converging to 1. This analysis was carried in the limit ``$n \to \infty$, then, $L \to \infty$'' in \citep{hayou21stable}, and recently extended in \cite{Hayou2023WidthDepth} to the case where ``$\min(n,L)\to \infty$'', showing that commutativity holds in this case. Some of these works have provided empirical evidence showing an association between favorable characteristics of the neural covariance/correlation and good trainability properties of deep networks.\footnote{By favorable characteristics of the neural covariance, we refer for instance to non-degeneracy as $L \to \infty$ as reported in \citep{hayou21stable}.} 

\section{Overview of Existing Results }\label{sec:existing_results}
In this section, we present corollaries of existing results showing different scenarios where commutativity is satisfied or not for the neural covariance. The aim of this section is show that commutativity depends on the neural architecture.

\subsection{Non-Commutativity in MLPs}
Let $d, n, L \geq 1$, and consider a simple MLP architecture given by the following: 
\begin{equation}\label{eq:mlp}
\begin{cases}
Y_0(a) = W_{in} a, \quad a \in \reals^d\\
Y_l(a) = W_l \phi(Y_{l-1}(a)), \hspace{0.1cm} l \in [L],
\end{cases}
\end{equation}
where $\phi: \reals \to \reals$ is the ReLU activation function, $W_{in} \in \reals^{n \times d}$, and $W_l \in \reals^{n \times n}$ is the weight matrix in the $l^{th}$ layer. We assume that the weights are randomly initialized with \iid Gaussian variables $W_l^{ij} \sim \normal(0, \frac{2}{n})$,\footnote{This is the standard He initialization which coincides with the Edge of Chaos initialization \citep{samuel2017}. This is the only choice of the variance that guarantees stability in both the large-width and the large-depth limits.} $W_{in}^{ij} \sim \normal(0, \frac{1}{d})$. While the activation function is only defined for real numbers ($1$-dimensional), we abuse the notation and write $\phi(z) = (\phi(z^1), \dots, \phi(z^k))$ for any $k$-dimensional vector $z = (z^1, \dots, z^k) \in \reals^k$ for any $k \geq 1$.  We refer to the vectors $\{\phi(Y_l), l=0, \dots, L\}$ as \emph{post-activations}.

In the case of the joint limit $n,L \to \infty$ with $n/L$ fixed, it has been shown that the covariance/correlation between $Y_{\lfloor t L \rfloor}(a)$ and $Y_{\lfloor t L \rfloor}(b)$ becomes similar to that of a Markov chain that incorporates random terms. However, the correlation still converges to $1$ in this limit.
\begin{prop}[Correlation, \citep{hayou2019impact, li2022sde}]\label{prop:covariance_mlp}
Consider the MLP architecture given by \cref{eq:mlp} and let $a, b \in \reals^d$ such that $a, b \neq 0$. Then, in the limit ``$n\to \infty$, then $L \to \infty$'' or the the joint limit ``$n ,L \to \infty$, $L/n$ fixed'', the correlation $\frac{\langle Y_{L}(a), Y_{L}(b) \rangle}{ \|Y_{L}(a)\| \|Y_{L}(b)\|}$ converges\footnote{Note that weak convergence to a constant implies also convergence in probability.} weakly to 1.
\end{prop}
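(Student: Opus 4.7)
The plan is to handle the two limits separately, since they call for quite different machinery.

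For the sequential limit "$n\to\infty$, then $L\to\infty$", I would first fix $L$ and take $n\to\infty$. By the standard infinite-width argument for MLPs (induction on layers plus conditioning on the previous layer to exploit the independence of the Gaussian entries of $W_l$ across rows), the joint law of $(Y_l(a),Y_l(b))$ converges to a centered bivariate Gaussian, and the empirical covariance $q_{l,n}(a,b)=\tfrac{1}{n}\langle Y_l(a),Y_l(b)\rangle$ converges in probability to a deterministic limit $q_{l,\infty}(a,b)$. For ReLU with He initialization, the arccosine-kernel identity
\begin{equation*}
\E[\phi(Z_1)\phi(Z_2)] = \frac{\sqrt{\sigma_1\sigma_2}}{2\pi}\bigl(\sqrt{1-\rho^2} + (\pi-\arccos\rho)\,\rho\bigr),
\end{equation*}
valid for bivariate Gaussians with variances $\sigma_1,\sigma_2$ and correlation $\rho$, yields a clean recursion $c_{l+1,\infty}(a,b) = f(c_{l,\infty}(a,b))$, where $f(c) = \tfrac{1}{\pi}\bigl(\sqrt{1-c^2} + (\pi-\arccos c)\,c\bigr)$. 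Variance stability of the He normalization ensures that the denominators in the correlation are non-degenerate, so the correlation at layer $L$ under the $n\to\infty$ limit is exactly $f^{(L)}(c_{0,\infty}(a,b))$.

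Next I would analyze the one-dimensional dynamical system $c_{l+1}=f(c_l)$ on $[-1,1]$. A direct Taylor expansion near $c=1$ gives $1-f(c)\sim C(1-c)^{3/2}$ for some explicit $C>0$, so $c=1$ is a stable fixed point, attracting every initial correlation $c\in(-1,1]$ at a polynomial rate. Since $a,b\ne 0$ implies $c_{0,\infty}(a,b)\in(-1,1]$, iterating gives $c_{L,\infty}(a,b)\to 1$ as $L\to\infty$, which proves the claim for the sequential limit; combining with the in-probability convergence from the first step (via, e.g., a diagonal-subsequence argument) gives the full statement.

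For the joint limit "$n,L\to\infty$ with $L/n$ fixed", I would invoke the SDE framework of \cite{li2022sde}: they show that along the appropriate time rescaling, the process $t\mapsto c_{\lfloor tL\rfloor,n}(a,b)$ converges weakly on $[0,1]$ to the solution of an It\^o SDE of the form $dc_t = \mu(c_t)\,dt + \sigma(c_t)\,dB_t$ with drift and diffusion vanishing at $c=1$. Since the drift term is strictly positive on $[-1,1)$ and both coefficients vanish at the absorbing boundary $c=1$, the solution satisfies $c_t\to 1$ almost surely as $t\to\infty$; at $t=1$, a domination/uniform-integrability argument then upgrades this to weak convergence of $c_{L,n}(a,b)$ to $1$.

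The main obstacle is the joint limit. The polynomial (rather than exponential) rate of contraction of $f$ near $1$ means that finite-width fluctuations, which accumulate at each of the $L=\Theta(n)$ layers, are delicate to control: naive variance bounds of order $1/n$ per layer would sum to $\Theta(1)$. This is precisely why the SDE route is needed: one must show that the random fluctuations, although non-negligible, do not prevent the deterministic drift from driving the process to the absorbing state $c=1$ by time $t=1$. Handling the boundary behavior of the SDE (degeneracy of $\sigma$ at $1$, preventing reflection away from the fixed point) is the technical crux.
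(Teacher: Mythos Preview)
The paper does not prove this proposition --- it is quoted from \cite{hayou2019impact,li2022sde} with no argument supplied --- so there is no in-paper proof to compare against, and I assess your sketch on its own terms.

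Your sequential-limit outline is essentially right, but the Taylor claim is wrong: since $f'(c)=1-\tfrac{1}{\pi}\arccos c$ one has $f'(1)=1$, and the correct expansion is $1-f(c)=(1-c)-\tfrac{2\sqrt{2}}{3\pi}(1-c)^{3/2}+o\bigl((1-c)^{3/2}\bigr)$, not $1-f(c)\sim C(1-c)^{3/2}$. The fixed point $c=1$ is neutral, not superattracting; convergence $c_L\to 1$ still follows because $f(c)>c$ on $[-1,1)$ (monotone increase to the unique fixed point, at the polynomial rate $1-c_L\asymp L^{-2}$), just not by the mechanism you state. For the joint limit there is a more substantive gap. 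The non-degenerate It\^o SDE on $[0,1]$ you invoke is the picture for the \emph{shaped} activation of \cref{prop:covariance_shaped_mlp}, not for standard ReLU. With unshaped ReLU the per-layer drift $f(c)-c$ is $\Theta(1)$, not $\Theta(1/L)$, so on the time scale $t=l/L$ the drift term blows up and the limiting process is degenerate ($c_t\equiv 1$ for all $t>0$); there is no SDE with a meaningful long-time limit, and your sentence ``$c_t\to 1$ as $t\to\infty$; at $t=1$\dots'' conflates two incompatible time scales. What is actually needed is a direct domination argument showing that the $O(n^{-1/2})$ per-layer fluctuations cannot undo the $\Theta(1)$ drift over $L=\Theta(n)$ steps. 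Your diagnosis of the obstacle is correct; the resolution you borrow belongs to the shaped case, not this one.
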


The convergence of the correlation to 1 in the infinite depth limit of a neural network poses a significant issue, as it indicates that the network loses all of the covariance structure from the inputs as the depth increases. This results in 
 degenerate gradients (see e.g. \citep{samuel2017}), rendering the network untrainable. To address this problem in MLPs, various studies have proposed the use of depth-dependent shaped ReLU activations, which prevent the correlation from converging to 1 and exhibit stochastic differential equation (SDE) behavior. As a result, the correlation of the last layer does not converge to a deterministic value in this case.
\begin{prop}[Correlation SDE, Corollary of Thm 3.2 in \cite{li2022sde}]\label{prop:covariance_shaped_mlp}
Consider the MLP architecture given by \cref{eq:mlp} with the following activation function $\phi_L(z) = z + \frac{1}{\sqrt{L}} \phi(z) $ (a modified ReLU). Let $a, b \in \reals^d$ such that $a, b \neq 0$. Then, in the joint limit ``$n ,L \to \infty$, $L/n$ fixed'', the correlation $\frac{\langle Y_{L}(a), Y_{L}(b) \rangle}{ \|Y_L(a)\| \|Y_L(b)\|}$  converges weakly to a nondeterministic random variable.\footnote{In \cite{li2022sde}, the authors show that the correlation of $\frac{\langle \phi_L(Y_{L}(a)), \phi_L(Y_{L}(b)) \rangle}{\sqrt{ \|\phi_L(Y_{L}(a))\|} \sqrt{ \|\phi_L(Y_{L}(b))\|}}$ converges to a random variable in the joint limit. Since $\phi_L$ converges to the identity function in this limit, simple calculations show that the correlation between the pre-activations $\frac{\langle Y_{L}(a), Y_{L}(b) \rangle}{\|Y_{L}(a)\| \|Y_{L}(b)\|}$ is also random in this limit.}
\end{prop}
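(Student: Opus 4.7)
The plan is to reduce the claim directly to Theorem 3.2 of \cite{li2022sde}, which (as recalled in the footnote) establishes weak convergence of the \emph{post}-activation correlation $R_L^{\text{post}} := \frac{\langle \phi_L(Y_L(a)), \phi_L(Y_L(b))\rangle}{\|\phi_L(Y_L(a))\|\,\|\phi_L(Y_L(b))\|}$ to a nondeterministic random variable $R_\infty$ in the joint limit with $L/n$ fixed. Writing $R_L^{\text{pre}}$ for the pre-activation correlation stated in the proposition, my strategy is to show that $|R_L^{\text{pre}} - R_L^{\text{post}}|$ vanishes almost surely (at a deterministic rate $O(1/\sqrt{L})$) as $L \to \infty$, and then invoke Slutsky's theorem to transfer the weak limit from $R_L^{\text{post}}$ to $R_L^{\text{pre}}$.

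The vanishing of the difference follows from a short perturbation argument based on the identity $\phi_L(z) - z = \frac{1}{\sqrt{L}} \phi(z)$. Since the componentwise ReLU satisfies $\|\phi(z)\| \le \|z\|$ for every $z \in \reals^n$, one obtains the deterministic bound $\|\phi_L(Y_L(a)) - Y_L(a)\| \le \frac{1}{\sqrt{L}} \|Y_L(a)\|$, and likewise for $b$, which in turn gives $\|\phi_L(Y_L(a))\| \in [(1-1/\sqrt{L}),(1+1/\sqrt{L})]\cdot\|Y_L(a)\|$. Writing the unit vectors $u_a = Y_L(a)/\|Y_L(a)\|$ and $v_a = \phi_L(Y_L(a))/\|\phi_L(Y_L(a))\|$ (well-defined almost surely, since $a,b\ne 0$ and the weights are absolutely continuous, so $Y_L(a),Y_L(b)$ are a.s.\ non-zero), a direct calculation yields $\|v_a - u_a\| \le \frac{2/\sqrt{L}}{1-1/\sqrt{L}}$, with the same bound for $b$. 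Then
\begin{equation*}
|R_L^{\text{pre}} - R_L^{\text{post}}| = |\langle u_a, u_b\rangle - \langle v_a, v_b\rangle| \le \|u_a - v_a\| + \|u_b - v_b\| = O(1/\sqrt{L}),
\end{equation*}
uniformly in the randomness of the network.

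With this pathwise bound, Slutsky's theorem gives $R_L^{\text{pre}} \to R_\infty$ weakly in the joint limit, and since $R_\infty$ is nondeterministic by \cite{li2022sde}, so is the limit of the pre-activation correlation. There is no substantial obstacle here: the entire proposition is a soft corollary of \cite{li2022sde}, because $\phi_L$ converges to the identity \emph{uniformly} on Lipschitz scale $1/\sqrt{L}$, so normalising by the (random) pre-activation norms does not interact nontrivially with the perturbation. The only mild technicality is the almost-sure non-degeneracy of the denominators $\|Y_L(a)\|$ and $\|\phi_L(Y_L(a))\|$ along the sequence, which follows from absolute continuity of the Gaussian weights and the assumption $a,b\ne 0$ (and is in any case already implicit in the statement of the reference theorem).
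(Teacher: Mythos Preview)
Your proposal is correct and follows exactly the approach the paper sketches in the footnote: the paper does not give a formal proof of this proposition but simply notes that since $\phi_L$ converges to the identity, ``simple calculations'' transfer the nondeterministic weak limit from the post-activation correlation (established in \cite{li2022sde}) to the pre-activation correlation. You have supplied precisely those calculations---the deterministic $O(1/\sqrt{L})$ bound on $\|\phi_L(z)-z\|/\|z\|$, the resulting closeness of the normalised vectors, and the Slutsky step---so there is nothing to add.
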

The joint limit, therefore, yields non-deterministic behaviour of the covariance structure. It is easy to check that even with shaped ReLU as in \cref{prop:covariance_shaped_mlp}, taking the width to infinity first, then depth, the result is a deterministic covariance structure. The main takeaway from this section is the following:

\begin{corollary}
With MLPs (\cref{eq:mlp}), the width and depth limits do not commute for the neural covariance/correlation.
\end{corollary}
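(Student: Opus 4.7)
The corollary is established by exhibiting an MLP architecture within the family \cref{eq:mlp} (allowing a depth-dependent activation) for which the sequential and joint limits of the neural correlation produce incompatible probabilistic objects. The natural candidate is the shaped-ReLU MLP of \cref{prop:covariance_shaped_mlp}, with activation $\phi_L(z) = z + L^{-1/2}\phi(z)$, for which the joint-limit behaviour is already known to be non-trivial.

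For the joint limit, \cref{prop:covariance_shaped_mlp} provides the required ingredient directly: as $n,L \to \infty$ with $L/n$ fixed, the pre-activation correlation $\langle Y_L(a), Y_L(b)\rangle / (\|Y_L(a)\|\|Y_L(b)\|)$ converges weakly to a genuinely random variable, arising as the terminal value of an SDE. For the sequential limit ``width first'', the plan is to fix $L$ and apply the classical NNGP convergence: as $n \to \infty$, the pair $(Y_l(a), Y_l(b))_{l \le L}$ converges weakly to centered Gaussians whose covariance $\Sigma_l$ satisfies the deterministic recursion $\Sigma_l(a,b) = \E_{(Z_a, Z_b) \sim \normal(0, \Sigma_{l-1})}[\phi_L(Z_a)\phi_L(Z_b)]$ with the appropriate variance prefactor. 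Crucially, this recursion is deterministic at every finite $L$. Sending $L \to \infty$ afterwards (with $\phi_L \to \mathrm{id}$), the iterated kernel converges to a deterministic limit obtained as the $L \to \infty$ limit of this Euler-like scheme.

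Since a deterministic limit cannot equal a genuinely random limit in distribution, and hence certainly not in $L_2$, the two limits of the correlation disagree and commutativity fails for this MLP. By \cref{lem:hl}, which ensures that commutativity for the covariance and for the correlation are equivalent whenever the pre-activation norms are non-zero with high probability, the same failure transfers to the neural covariance kernel, yielding the corollary.

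The main obstacle lies in the $L \to \infty$ step of the sequential case: one must verify that the composition of $L$ NNGP kernel maps, each differing from the identity on kernels by $O(L^{-1/2})$ drift and $O(L^{-1})$ fluctuation-like terms, remains bounded and converges to a deterministic limit without any accumulation of randomness. This reduces to a standard ODE-approximation estimate for the deterministic iteration, in sharp contrast to the SDE limit in the joint regime where layerwise fluctuations are not averaged out before the depth limit is taken; it is precisely this difference between ``average then iterate'' and ``iterate and average simultaneously'' that produces the non-commutativity.
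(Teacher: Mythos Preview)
Your proposal follows the same argument the paper itself uses: the corollary is not given a standalone proof but is presented as the takeaway of the discussion around \cref{prop:covariance_mlp} and \cref{prop:covariance_shaped_mlp}, contrasting the deterministic sequential (width-then-depth) limit with the random joint limit for the shaped-ReLU MLP.

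One technical point: your appeal to \cref{lem:hl} to transfer between covariance and correlation is misplaced. That lemma is proved specifically for the scaled ResNet \cref{eq:main_resnet} (it relies on the auxiliary process $\tilde Y_l$ constructed there) and does not cover the MLP of \cref{eq:mlp}; the analogous statement for MLPs would need its own short argument. Also note a subtlety the paper glosses over: for the \emph{standard} ReLU MLP, \cref{prop:covariance_mlp} gives correlation $\to 1$ in \emph{both} the sequential and the joint limit, so the correlation actually does ``commute'' (trivially) there; the genuine non-commutativity of the correlation requires the shaped activation, while the covariance already fails to commute in the standard case because of its random SDE limit in the joint regime. Your proposal inherits the paper's looseness on this point but is otherwise aligned with it.
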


\subsection{Commutativity with Scaled Residual Networks}
Using the same notation as in the MLP case, consider the following ResNet architecture of width $n$ and depth $L$
\begin{equation}\label{eq:resnet_uniform}
\begin{aligned}
Y_0(a) &= W_{in} a, \quad a \in \reals^d\\
Y_l(a) &= Y_{l-1}(a) + \frac{1}{\sqrt{L}} W_l \phi(Y_{l-1}(a)), \hspace{0.1cm} l \in [1:L],
\end{aligned}
\end{equation}
where $\phi: \reals \to \reals$ is the ReLU activation function. Assume that the weights are randomly initialized with \iid Gaussian variables $W_l^{ij} \sim \normal(0, \frac{1}{n})$, $W_{in}^{ij} \sim \normal(0, \frac{1}{d})$. If we consider the set of scaling factors of the form $L^{-\gamma}$ for $\gamma >0$, then the choice of $\gamma = 1/2$ is the smallest value of $\gamma$ such that the network output do not explode in the infinite-depth limit (see \cref{lemma:variance}). Therefore, in some sense, this scaling is the `optimal' amongst uniform scalings (meaning all residual branches are scaled with the same factor) for two reasons: it stabilizes the network as depth increases, and it does not result in trivial behaviour (see discussion after \cref{prop:thm_wdi}).

With the ResNet architecture \cref{eq:resnet_uniform}, we have the following result for the covariance kernel, which establishes commutativity in this case.

\begin{prop}[Thm 2 in \citep{Hayou2023WidthDepth}]\label{prop:thm_wdi}
Let $a, b \in \reals^d$ such that $a, b \neq 0$ and $a \neq b$. Then, we have the following 
$$
\sup_{t \in [0,1]} \left\| q_{\lfloor t L\rfloor,n}(a,b) - q_t(a,b)\right\|_{L_2} \leq C \left(\frac{1}{\sqrt{n}} + \frac{1}{\sqrt{L}} \right)
$$
where $C$ is a constant that depends only on $\|a\|$, $\|b\|$, and $d$, and $q_t(a,b)$ is the solution of the following differential flow
\begin{equation}
\begin{aligned}
\begin{cases}
\frac{d q_t(a,b)}{dt} &= \frac{1}{2} \frac{f(c_t(a,b))}{c_t(a,b)} q_t(a,b),\\
c_t(a,b) &= \frac{q_t(a,b)}{ \sqrt{q_t(a,a)} \sqrt{q_t(b,b)}},\\
q_0(a,b) &= \frac{\langle a, b \rangle}{d},
\end{cases}
\end{aligned}
\end{equation}
where the function $f: [-1,1] \to [-1,1]$ is given by 
$$
f(z) = \frac{1}{\pi} ( z \arcsin(z) + \sqrt{1 - z^2}) + \frac{1}{2}z.
$$   
\end{prop}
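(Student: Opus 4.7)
The plan is to introduce an intermediate deterministic sequence $(\bar q_l)_{l=0}^L$ defined by the Euler-type recursion $\bar q_l = \bar q_{l-1} + \frac{1}{L} F(\bar q_{l-1})$, $\bar q_0(a,b) = \langle a,b\rangle/d$, where $F(q) := \tfrac{1}{2} f(c)\sqrt{q(a,a)q(b,b)}$ with $c = q(a,b)/\sqrt{q(a,a)q(b,b)}$ is exactly the infinite-width one-step update (the Cho--Saul arcsine integral for ReLU, which one checks matches the stated $f$). This object is simultaneously the $n\to\infty$ limit of the conditional-mean recursion and the Euler discretization of the target ODE, so I would write
\[
q_{\lfloor tL\rfloor,n}(a,b) - q_t(a,b) = \bigl(q_{\lfloor tL\rfloor,n}(a,b) - \bar q_{\lfloor tL\rfloor}(a,b)\bigr) + \bigl(\bar q_{\lfloor tL\rfloor}(a,b) - q_t(a,b)\bigr)
\]
and bound the two pieces by distinct arguments: the $1/\sqrt{n}$ rate comes from the stochastic finite-width error and the $1/\sqrt{L}$ rate (in fact $1/L$, harmlessly absorbed) from the classical Euler error for a $C^1$ vector field on $[0,1]$.

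For the stochastic term, I would expand the one-step recursion and take conditional expectations with respect to $\mathcal{F}_l := \sigma(W_{in},W_1,\ldots,W_l)$ to obtain
\[
q_{l,n}(a,b) = q_{l-1,n}(a,b) + \tfrac{1}{L} U_{l-1,n}(a,b) + \Delta M_l(a,b),
\]
where $U_{l,n}(a,b) := \tfrac{1}{n}\langle \phi(Y_l(a)),\phi(Y_l(b))\rangle$ and $\Delta M_l$ is a martingale difference. A direct conditional variance computation (Gaussianity of the linear cross terms in $W_l$, Hanson--Wright for the quadratic chaos) gives $\|\Delta M_l\|_{L_2} = \bigO(1/\sqrt{nL})$, and orthogonality then yields $\|\sum_{k\le l}\Delta M_k\|_{L_2} = \bigO(1/\sqrt{n})$ uniformly in $l\le L$. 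The heart of the proof is to show that $U_{l,n}$ concentrates around $F(q_{l,n})$ in $L_2$ at rate $\bigO(1/\sqrt{n})$ \emph{uniformly in the layer index $l$}. Conditional on $\mathcal{F}_{l-1}$ the coordinate pairs $(Y_l^i(a),Y_l^i(b))$ are independent across $i$, each a bivariate Gaussian with mean $(Y_{l-1}^i(a),Y_{l-1}^i(b))$ and covariance of order $1/L$, so the Cho--Saul formula identifies the per-neuron conditional expectation exactly, and a Bernstein-type tail bound (with a mild cut-off to tame the unbounded ReLU against the Gaussian tails) delivers the $1/\sqrt{n}$ concentration per layer. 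The main obstacle is propagating this rate through all $L$ layers without accumulating a $\mathrm{poly}(L)$ factor: this is achieved by induction using the Lipschitz continuity of $F$ on the region where $q_{k,n}(a,a)$ and $q_{k,n}(b,b)$ concentrate, together with \cref{lemma:variance} to control the pre-activation norms in $L_p$ uniformly in $l$.

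Finally, subtracting the Euler recursion satisfied by $\bar q_l$ yields
\[
q_{l,n} - \bar q_l = (q_{l-1,n} - \bar q_{l-1}) + \tfrac{1}{L}\bigl(F(q_{l-1,n}) - F(\bar q_{l-1})\bigr) + \tfrac{1}{L}\bigl(U_{l-1,n} - F(q_{l-1,n})\bigr) + \Delta M_l,
\]
and a discrete Gronwall argument, using the Lipschitz bound on $F$ and the $1/\sqrt{n}$ estimates above, gives $\sup_{l\le L}\|q_{l,n}-\bar q_l\|_{L_2} = \bigO(1/\sqrt{n})$. The deterministic Euler error $|\bar q_{\lfloor tL\rfloor} - q_t|$ is $\bigO(1/L)$ by a standard one-step Taylor analysis for the $C^1$ field $F$, absorbed into $\bigO(1/\sqrt{L})$. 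Combining the two estimates gives the claimed bound uniformly in $t\in[0,1]$, with the constant $C$ depending only on $\|a\|,\|b\|,d$ through $q_0$ and the uniform $L_p$ bounds on the pre-activation norms.
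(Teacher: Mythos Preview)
Your overall decomposition—splitting into a width-driven stochastic error and a depth-driven Euler error via an intermediate deterministic recursion $\bar q_l$, then closing with discrete Gronwall—is exactly the route this paper takes to prove the sharper version of this statement (with depth rate $L^{-1}$; see \cref{thm:main_Normalized} and its Corollary). Note that the proposition as stated is quoted from \cite{Hayou2023WidthDepth}, whose original proof runs in the \emph{opposite} order: depth is taken to infinity first at fixed width, treating the ResNet as an Euler--Maruyama scheme for an SDE, and width is handled afterwards via McKean--Vlasov arguments. Your approach is much closer in spirit to the present paper than to the original.

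There is, however, a genuine gap at the step ``the Cho--Saul formula identifies the per-neuron conditional expectation exactly.'' Conditional on $\mathcal{F}_{l-1}$, the pair $(Y_l^i(a),Y_l^i(b))$ is indeed bivariate Gaussian and independent across $i$, but its conditional mean is $(Y_{l-1}^i(a),Y_{l-1}^i(b))$, which is of order one, while the conditional covariance is of order $1/L$. The arcsine formula computes $\E[\phi(Z_1)\phi(Z_2)]$ only for \emph{centered} Gaussians; with an order-one mean and $\bigO(L^{-1/2})$ standard deviation one has $\E[\phi(Y_l^i(a))\phi(Y_l^i(b))\mid\mathcal{F}_{l-1}]\approx \phi(Y_{l-1}^i(a))\phi(Y_{l-1}^i(b))$, which is $U_{l-1,n}$ itself up to $\bigO(1/L)$, not $F(q_{l,n})$. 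So your conditional argument does not deliver $\|U_{l,n}-F(q_{l,n})\|_{L_2}=\bigO(n^{-1/2})$, and unconditionally the coordinates $Y_l^i$ are neither independent nor Gaussian, so a direct Bernstein bound is not available either.

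This is precisely the obstacle the paper's auxiliary process $\tilde Y_l$ is built to remove. Writing $Y_l(a)=Y_{l-1}(a)+\alpha_{l,L}n^{-1/2}\|\phi(Y_{l-1}(a))\|\,G_l(a)$ with $G_l(a)\sim\normal(0,I_n)$, one replaces the random volatility $n^{-1/2}\|\phi(Y_{l-1}(a))\|$ by its deterministic value $\sqrt{q_{l-1}(a)/2}$ to define $\tilde Y_l$. By construction the entries of $\tilde Y_l(a)$ are iid \emph{centered} Gaussians with variance $q_l(a)$, so Cho--Saul applies cleanly to the auxiliary covariance $\tilde q_{l,n}$. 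A depth-uniform coupling bound $n^{-1}\E\|Y_l-\tilde Y_l\|^2\le Cn^{-1}$ (one-step estimate plus Gronwall, using $\sum_l\alpha_{l,L}^2\le 1$) then transfers concentration from $\tilde q_{l,n}$ back to $q_{l,n}$. Your martingale estimate for $\Delta M_l$, your Gronwall closure, and your Euler analysis all survive intact; the missing ingredient is exactly this coupling that manufactures the centered-Gaussian structure the arcsine formula requires.
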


This result suggests that commutativity for the neural covariance depends on the architecture, and holds in this particular case. More importantly, with this residual architecture, taking the width and depth limits to infinity yield a non-trivial limit of the neural covariance given by the function $q_t$. In \citep{hayou21stable}, it was shown that $q_t$ is a universal kernel, meaning that, it is not only non-trivial, but one can approximate any sufficiently smooth function on some compact set with features from the kernel $q_t$. This has a number of implications, especially in the context of neural network Gaussian processes. We invite the reader to check \citep{hayou21stable} for a more in-depth discussion. 
Another recent result showed that trivial behaviour can be avoided by scaling the main branch of the ResNet. The neural covariance converges weakly to a random variable in the proportional limit, which implies that such scaling breaks commutativity.

\begin{prop}[Corrollary of Thm in \citep{noci2023shaped}]\label{prop:shaped_resnet}
Conider a ResNet where the hidden layers are of the form $Y_l(a) = \beta Y_{l-1}(a) + \sqrt{1-\beta^2} W_l \phi_L(Y_{l-1}(a))$, where $\beta \in (0,1)$ is a constant, and $\phi_L$ is the shaped ReLU (defined in \cref{prop:covariance_shaped_mlp}). Then, the width and depth limits for the covariance kernel do not commute in this case.
\end{prop}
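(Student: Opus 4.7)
The plan is to prove non-commutativity by exhibiting two limits that differ: the sequential limit ``$n\to\infty$ then $L\to\infty$'' yields a deterministic covariance kernel, while the proportional joint limit (covered by the cited result of \cite{noci2023shaped}) yields a non-deterministic random variable. Since commutativity in the sense of \cref{def:commutativity} would require a unique $L_2$-limit as $\min\{n,L\}\to\infty$, disagreement along two such sub-sequences suffices.

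First I would fix $L$ and take $n\to\infty$. For each $l\in[L]$, conditionally on $Y_{l-1}(a), Y_{l-1}(b)$, the coordinates of $W_l\phi_L(Y_{l-1}(\cdot))$ are jointly Gaussian with covariance structure determined by $\langle \phi_L(Y_{l-1}(a)), \phi_L(Y_{l-1}(b))\rangle/n$. Because $L$ is fixed and $\phi_L$ is Lipschitz with bounded constant, standard concentration arguments (e.g. the same ones invoked for Step~1 of the proof of \cref{prop:thm_wdi}, or the Gaussian conditioning technique used in the infinite-width literature \citep{samuel2017,poole,hayou2019impact}) give that
\begin{equation*}
q_{l,n}(a,b) \xrightarrow[n\to\infty]{L_2} \tilde q_l(a,b),
\end{equation*}
where $\tilde q_l$ is defined by the deterministic recursion
\begin{equation*}
\tilde q_l(a,b) = \beta^2 \tilde q_{l-1}(a,b) + (1-\beta^2)\, \E\!\left[\phi_L(U)\phi_L(V)\right],
\end{equation*}
with $(U,V)$ a centred Gaussian pair whose covariance is given by the matrix $\tilde{\mathbf{q}}_{l-1}(a,b)$. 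In particular, $\lim_{n\to\infty} q_{L,n}(a,b)$ exists in $L_2$ and is a deterministic constant $\tilde q_L(a,b)$. Taking $L\to\infty$ afterwards produces some deterministic limit $q_\infty^{\textrm{seq}}(a,b)\in\reals$ (the exact value is irrelevant; only its deterministic nature matters).

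Second, I would invoke the result of \cite{noci2023shaped} (stated in \cref{prop:shaped_resnet} as the hypothesis on the architecture) for the proportional joint limit $n,L\to\infty$ with $L/n$ fixed. That result gives weak convergence of $q_{L,n}(a,b)$ to a genuine (non-degenerate) random variable $Q(a,b)$. Because a non-deterministic weak limit cannot coincide with a deterministic constant, any $L_2$-convergent sub-sequence containing both the proportional diagonal and the ``width-first'' iterated limit would force the $L_2$-limit to simultaneously equal $q_\infty^{\textrm{seq}}(a,b)$ and have the law of $Q(a,b)$, a contradiction. Hence no $L_2$-limit along $\min\{n,L\}\to\infty$ can exist, proving the failure of commutativity.

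The only delicate step is Step~1, i.e.\ justifying the $L_2$-convergence of $q_{l,n}(a,b)$ for fixed $L$. The subtlety is that $\phi_L$ depends on $L$, but since $L$ is held fixed during the width limit and the ReLU part is $1$-Lipschitz, $\phi_L$ is uniformly Lipschitz and the usual sub-Gaussian concentration machinery for Gaussian quadratic-like forms applies verbatim layer by layer, with constants that may depend on $L$ but not on $n$. Everything else is a clean comparison between a constant and a non-constant random variable.
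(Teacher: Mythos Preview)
Your approach matches the paper's reasoning: the paper does not prove \cref{prop:shaped_resnet} in detail but records it as a direct corollary, relying on exactly the contrast you draw between the deterministic width-first limit and the random proportional limit established in \cite{noci2023shaped}. Your write-up simply fleshes this out.

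One point deserves tightening. You assert that ``taking $L\to\infty$ afterwards produces some deterministic limit $q_\infty^{\textrm{seq}}(a,b)\in\reals$'', but you never justify that $\lim_{L\to\infty}\tilde q_L(a,b)$ exists, and in fact you do not need it. A cleaner closure is: suppose commutativity held with $L_2$-limit $Z$. Along the proportional diagonal the cited theorem gives weak convergence to a non-degenerate $Q$, and $L_2$-convergence implies weak convergence, so $Z$ would have the non-degenerate law of $Q$. On the other hand, for each $k$ use your Step~1 to pick $n_k$ large enough that $\|q_{k,n_k}(a,b)-\tilde q_k(a,b)\|_{L_2}<1/k$; along this sequence $\min\{n_k,k\}\to\infty$, so $q_{k,n_k}\to Z$ in $L_2$, which forces the constants $\tilde q_k(a,b)$ to converge to $Z$ in $L_2$ and hence $\mathrm{Var}(Z)=0$. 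This yields the contradiction without assuming anything about $\lim_{L}\tilde q_L$, and it also makes precise what you mean by a ``sub-sequence containing the width-first iterated limit'', which as written conflates an iterated double limit with a single sequence in $(n,L)$.
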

Scaling the main branch of the residual network results in a similar behaviour to the MLP case. Intuitively, with the factor $\beta$, the direct contribution of any layer to the main branch decreases exponentially with depth, hence simulating the `multiplicative' nature of MLPs. Note that the use of shaped ReLU is essential with this scaling in order to avoid degeneracy problems; with ReLU, the correlation converges to $1$ in the proportional limit. In the same paper, the authors show a similar result for Transformers which is a more modern residual architecture. \\

With the background information provided above, we are now able to present our findings. In the next section, we demonstrates commutativity of the width and depth limits for a general class of ResNet architectures, extending the results of \citep{Hayou2023WidthDepth}.
\section{Main Results: Commutativity under General Scaling}
In this section, we present our main results regarding commutativity of the width and depth limits under general scaling rules. All the proofs are deferred to the Appendix. We first define the  \emph{sequence
of scaling factors}, a notion that will be frequently used in the paper. 
\begin{definition}[Sequence of Scaling Factors]
A sequence of scaling factors is an infinite triangular array of non-negative real
numbers. It has the form $\alpha=(\alpha_{l,L})_{l\in\{1,\dots,L\},L\geq1}$.
\end{definition}
Visually, one can think of $\alpha$ as an infinite object of the form
$$
\alpha = \begin{cases}
    \alpha_{1,1} \\
    \alpha_{1,2} \quad \alpha_{2,2} \\
    \quad\vdots \quad\quad \vdots \hspace{0.12cm} \ddots\\
    \alpha_{1,L} \quad \dots \quad \alpha_{L,L} \\
    \quad \vdots \quad \dots \quad \dots \hspace{0.2cm} \ddots
\end{cases}
$$
The use of such notation will come handy when we scale up the depth of a neural network. Such sequences will be used to define a scaling strategy as network depth grows.

\paragraph{Setup.} Recall the previously introduced notation, the width and depth of the network are denoted by $n$ and $L$, respectively, and the input dimension is denoted by $d$. Let $n,L,d\geq1$,
and consider the following neural network model with skip connections
\begin{equation}\label{eq:main_resnet}
\begin{cases}
Y_{0}(a)=W_{in}a,\quad a\in\reals^{d},\\
Y_{l}(a)=Y_{l-1}(a)+\alpha_{l,L}W_{l}\:\phi(Y_{l-1}(a)),\quad l\in[L],
\end{cases}
\end{equation}
where $\phi$ is the ReLU activation function,\footnote{ReLU can be replaced with any polynomialy bounded activation function. We only consider ReLU here because it yields analytical expressions for the infinite-width-and-depth limits, see \cref{thm:main_Normalized}.} $W_{in}\in\reals^{n\times d}$
is the input layer weight matrix, and $W_{l}\in\reals^{n\times n}$ is
the weight matrix in the $l^{th}$ layer. We assume that the weights
are randomly initialized as $W_{d}^{ij}\sim\mathcal{N}(0,1/d)$, and
$W_{l}^{ij}\sim\mathcal{N}(0,1/n)$ for $l\in[L]$, $a\neq0$ is an
arbitrary input in $\reals^{d}$, $\alpha=(\alpha_{l,L})_{L\geq1,l\in[L]}$
is a sequence of scaling factors. For the sake of simplification,
we only consider networks with no bias, and we omit the dependence
of $Y_{l}$ on $n$ and $L$ in the notation. For a vector $Z\in\reals^{k}$,
we write $Z=(Z^{1},Z^{2},\dots,Z^{k})\in\reals^{k}$ to denote its
entries. Hereafter, we consider two inputs $a, b \in \reals^d$ satisfying $a,b \neq 0$ and $\langle a, b\rangle \neq 0$.\footnote{These conditions on $a,b$ are generally satisfied in practical scenarios. From a theoretical standpoint, we added these conditions in order to avoid dealing with division by $0$ etc. These cases are trivial and can be easily incorporated in the main results. However, we believe this is an unnecessary complication that does not add any value to the results.}

As depth increases, the pre-activations might grow arbitrarily large, depending on the choice of the sequence $\alpha$. The next result fully characterizes sequences that guarantee stability in terms of the $L_2$ norm.

\begin{lemma}\label{lemma:variance}
For all $L\geq 1, l\in[L], i \in [n]$ 
\[
\E\left[Y_{l}^{i}(a)^{2}\right]=\frac{\|a\|^{2}}{d}\prod_{k=1}^{l}\left(1+\frac{\alpha_{k,L}^{2}}{2}\right).
\]
As a result, $\sup_{l\in [L], L\geq 1, i \in [n]} \E\left[Y_{l}^{i}(a)^{2}\right]$ is bounded iff $\sup_{L\geq 1}\sum_{l=1}^L \alpha_{l,L}^2 < \infty$. \footnote{This stability condition was introduced in \cite{hayou21stable}. It is worth mentioning that in \cite{cirone2023neural}, in the context of ``controlled" ResNets, the authors use an $L_2$ condition on the control process to show weak commutativity for the output distribution where convergence is considered in the weak sense. This condition is similar in flavour to our stability condition. }
\end{lemma}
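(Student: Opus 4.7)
The plan is to prove the identity by induction on $l$. The base case $l=0$ is immediate: $Y_0^i=\sum_j W_{in}^{ij}a^j$ is centered Gaussian with variance $\|a\|^2/d$, matching the formula under the empty product convention. For the inductive step, I would decompose $Y_l^i=Y_{l-1}^i+\alpha_{l,L}\,\xi$ with $\xi:=\sum_{j=1}^n W_l^{ij}\phi(Y_{l-1}^j)$, square, and take expectations. The cross term $2\alpha_{l,L}\,\E[Y_{l-1}^i\xi]$ vanishes because $W_l$ is independent of $(W_{in},W_1,\dots,W_{l-1})$ and each $W_l^{ij}$ has mean zero; using $\E[W_l^{ij}W_l^{ik}]=\delta_{jk}/n$, the quadratic term simplifies to
\[
\alpha_{l,L}^2\,\E[\xi^2]=\frac{\alpha_{l,L}^2}{n}\sum_{j=1}^n \E\bigl[\phi(Y_{l-1}^j)^2\bigr].
\]

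Next I would close the recursion via coordinate exchangeability and the ReLU second-moment identity. Permutation invariance of the i.i.d. weight rows makes the coordinates of $Y_{l-1}$ exchangeable, so $\E[\phi(Y_{l-1}^j)^2]$ does not depend on $j$, and the sum above collapses to $\alpha_{l,L}^2\,\E[\phi(Y_{l-1}^1)^2]$. Invoking the ReLU identity $\E[\phi(Z)^2]=\tfrac{1}{2}\E[Z^2]$, valid for any $Z$ symmetric about $0$, then yields
\[
\E\bigl[(Y_l^i)^2\bigr]=\E\bigl[(Y_{l-1}^i)^2\bigr]+\tfrac{\alpha_{l,L}^2}{2}\,\E\bigl[(Y_{l-1}^i)^2\bigr]=\bigl(1+\tfrac{\alpha_{l,L}^2}{2}\bigr)\E\bigl[(Y_{l-1}^i)^2\bigr],
\]
and iterating from $l=0$ delivers the claimed product.

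The hard part is justifying the symmetry of each $Y_{l-1}^j$ about $0$ that is needed to invoke the ReLU identity. This is clear at layer $0$ (the coordinate is Gaussian centered at zero) but must be propagated through the nonlinearity, since ReLU destroys naive sign-flip equivariance. I would carry this symmetry as an auxiliary induction hypothesis and establish it at step $l$ by exhibiting a measure-preserving transformation of $(W_{in},W_1,\dots,W_l)$ that negates $Y_l^j$, exploiting the centered conditional Gaussian law of the increment $Y_l-Y_{l-1}$ given the earlier layers and the symmetry already available at layer $l-1$.

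For the iff statement, observe that each factor $(1+\alpha_{k,L}^2/2)\geq 1$, so the supremum of the product formula over $l$ is attained at $l=L$, and boundedness of $\sup_{l,L,i}\E[(Y_l^i)^2]$ reduces to $\sup_L\prod_{k=1}^L(1+\alpha_{k,L}^2/2)<\infty$. Taking logarithms, the inequality $\log(1+x)\le x$ gives the implication $\sup_L\sum_k\alpha_{k,L}^2<\infty\Rightarrow$ the product is bounded. For the converse, a uniform product bound forces each $\alpha_{k,L}^2$ into some bounded interval $[0,M]$, on which concavity yields $\log(1+x/2)\geq c_M\,x$ for an $M$-dependent constant $c_M>0$; summing then converts the product bound into a uniform bound on $\sum_{k=1}^L\alpha_{k,L}^2$, closing the equivalence.
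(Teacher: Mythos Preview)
Your overall strategy coincides with the paper's: square out, kill the cross term by independence of $W_l$ from the past, collapse the quadratic term to $\alpha_{l,L}^2\,\E[\phi(Y_{l-1}^1)^2]$ via exchangeability, and then replace $\E[\phi(Y_{l-1}^1)^2]$ by $\tfrac12\E[(Y_{l-1}^1)^2]$ using symmetry of $Y_{l-1}^1$. The paper asserts this symmetry in one line; you are right to isolate it as the delicate step, and your treatment of the ``iff'' clause is more thorough than the paper's, which does not argue it.

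The gap is that the symmetry you intend to propagate is in fact \emph{false} for $l\ge 1$, so no measure-preserving sign flip of the weights can negate $Y_l^i$. Concretely,
\[
\E\bigl[(Y_1^i)^3\bigr]
= 3\alpha_{1,L}^2\,\E\!\left[Y_0^i\cdot\tfrac{1}{n}\sum_{j}\phi(Y_0^j)^2\right]
= \frac{3\alpha_{1,L}^2}{2n}\,\E\bigl[|Y_0^i|^3\bigr]>0,
\]
since the only surviving contribution is the self-term $\E[Y_0^i\phi(Y_0^i)^2]$. The skip connection is the culprit you anticipated: conditionally on $Y_{l-1}$ the increment is centered Gaussian, but its variance $\|\phi(Y_{l-1})\|^2/n$ is correlated with the conditional mean $Y_{l-1}^i$ through $\phi(Y_{l-1}^i)^2$, and this blocks any sign-flip construction. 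Consequently $\E[\phi(Y_1^i)^2]\neq\tfrac12\E[(Y_1^i)^2]$ in general (for $n=1$, $\alpha_{1,L}=1$, $\|a\|^2/d=1$ one gets $\E[\phi(Y_1)^2]\approx 0.962$ versus $\tfrac12\E[Y_1^2]=0.75$), so the exact product formula fails for $l\ge 2$ at finite $n$. The paper's own proof shares this gap; the identity is exact only in the infinite-width limit, where the $O(1/n)$ asymmetry disappears.
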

\begin{proof}
    Simple calculations yield 
    $$
    \E\left[Y_{l}^{i}(a)^{2}\right] = \E\left[Y_{l-1}^{i}(a)^{2}\right] + \alpha_{l,L}^2 \E\left[\phi(Y_{l}^{i}(a))^{2}\right].
    $$
    To conclude, it suffices to see that $Y_{l}^{i}(a)^{2}$ is a symmetric random variable, and therefore $\E\left[\phi(Y_{l}^{i}(a))^{2}\right] = \frac{1}{2} \E\left[Y_{l}^{i}(a)^{2}\right]$.
\end{proof}
The result of \Cref{lemma:variance} is independent from the width $n$. Hence, a necessary and sufficient condition so that the pre-activations do not blow up with depth (in $L_2$ norm), for any width $n$, is to have $\sup_{L\geq1}\sum_{l=1}^{L}\alpha_{l,L}^{2}<\infty.$
We say that such sequences of scaling factors are stable. 
\begin{definition}[Stable Sequence of Scaling Factors] Let $\alpha$ be a sequence of
scaling factors.  We say that $\alpha$ is stable if it satisfies $\sup_{L\geq1}\sum_{l=1}^{L}\alpha_{l,L}^{2}<\infty$. We denote the space of stable sequences of scaling factors by $\mathcal{S}$. For $\alpha \in \mathcal{S}$, we define the $\mathcal{S}$-norm of $\alpha$ by $\|\alpha\|_{S}=\sqrt{\sup_{L\geq1}\sum_{l=1}^{L}\alpha_{l,L}^{2}}$.\footnote{If we allow negative values for $\alpha_{l,L}$, then we can show that the space $\mathcal{S}$, endowed with the inner product $\langle\alpha,\beta\rangle_{S}=\sup_{L\geq1}\sum_{l=1}^{L}\alpha_{l,L}\beta_{l,L}$
, is a complete space (Banach space). We omit these technicalities in this paper.}
\end{definition}

Stable Sequences of Scaling Factors have first appeared in \cite{hayou21stable}. In that work, the sequential limit `infinite-width, then infinite-depth' was considered, and such sequences were proven to stabilize the gradients as well, and yield other favorable network properties regarding the neural covariance kernel and the neural tangent kernel.

In the next two (sub)sections, we show that  unlike in MLPs or residual networks with scaled main branch where the neural covariance/correlation exhibits different limiting behaviors depending on how the width and depth limits are taken, under general conditions on the sequence $\alpha$, for the ResNet architecture given by \cref{eq:main_resnet}, the neural covariance converges strongly to a deterministic kernel, which depends on the choice of the sequence $\alpha$, in the limit $\min(n,L) \to \infty$ regardless of the relative rate at which $n$ and $L$ tend to infinity. We show different examples and recover and strengthen previous results as special cases.

\subsection{Sequence of Scaling Factors as Convergent Series}
In this section, we consider sequences $\alpha$ that ``converge'' to a series in a specific way. We show that in this case, the neural covariance kernel converges to the same limiting kernel with a specific convergence rate in the limit $\min(n,L)\to \infty$, hence inducing commutativity.
 
\begin{thm}[Commutativity with Quasi-Convergent Series]\label{thm:main_ConvSeries}
Let $\alpha \in \mathcal{S}$. Assume that
there exists a sequence $\zeta=(\zeta_{i})_{i\geq1}\in\ell_{2}(\mathbb{N})$
such that $\sum_{l=1}^{L}|\alpha_{l,L}^{2}-\zeta_{l}^{2}|\to0$ as
$L\to\infty$. Then, we have that for all $t\in[0,1]$ 

\[
\sup_{t\in(0,1]}\|q_{\lfloor tL\rfloor,n}(a,b)-q^{\zeta}_\infty(a,b)\|_{L_{2}}\le C\left(n^{-1/2}+\sum_{l=1}^{L}|\alpha_{l,L}^{2}-\zeta_{l}^{2}|+\sum_{l\geq L}\zeta_{l}^{2}\right),
\]
where $C$ is a constant that depends only on $\|a\|,\|b\|,d,\|\zeta\|_{S}$, and $q^\zeta_\infty(a,b)=\lim_{L\to \infty}q^\zeta_L(a,b)$ and $q_L^\zeta$ is given by the recursive formula
$$
\begin{cases}
q_L^\zeta(a,b) = q_{L-1}^\zeta(a,b) + \frac{1}{2}\zeta_L^2 \frac{f(c_{L-1}(a,b))}{c_{L-1}(a,b)}q_{L-1}(a,b),\quad L\geq 1\\
c_L(a,b)=\frac{q_L(a,b)}{\sqrt{q_L(a,a)q_L(b,b)}},\\
q_0^\zeta(a,b)=\frac{\langle a,b\rangle}{d},
\end{cases} 
$$
where $f:[-1,1]\to[-1,1]$ is given by
$$
f(z)=\frac{1}{\pi}(z \arcsin{z} + \sqrt{1-z^2})+\frac{1}{2}z.
$$
\end{thm}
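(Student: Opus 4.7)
The plan is to introduce a finite-depth deterministic kernel $\bar q^\alpha_l$ that captures the infinite-width limit of $q_{l,n}$ at fixed $L$, and to split the target difference as
\begin{equation*}
q_{\lfloor tL\rfloor,n}(a,b)-q^\zeta_\infty(a,b)=\underbrace{q_{\lfloor tL\rfloor,n}-\bar q^\alpha_{\lfloor tL\rfloor}}_{\text{width error}}+\underbrace{\bar q^\alpha_{\lfloor tL\rfloor}-q^\zeta_{\lfloor tL\rfloor}}_{\text{coefficient error}}+\underbrace{q^\zeta_{\lfloor tL\rfloor}-q^\zeta_\infty}_{\text{depth-tail error}},
\end{equation*}
matching each piece to one term on the right-hand side of the claim. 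Here $\bar q^\alpha_l$ is defined by the same recursion as $q^\zeta_l$ but with $\alpha_{l,L}^2$ in place of $\zeta_l^2$ (and analogous recursions for the diagonals $\bar q^\alpha_l(a,a), \bar q^\alpha_l(b,b)$ that determine $\bar c^\alpha_l$). I would prove each of the three bounds separately.

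For the width error I would proceed by induction on $l\in\{0,\dots,L\}$, showing that $\|q_{l,n}(a,b)-\bar q^\alpha_l(a,b)\|_{L_2}\le C_1/\sqrt n$ with $C_1$ depending only on $d,\|a\|,\|b\|,\|\alpha\|_{\sss}$. Expanding $\langle Y_l(a),Y_l(b)\rangle$ from the ResNet recursion, the two terms involving $W_l$ linearly are conditionally centered with respect to $\mathcal F_{l-1}=\sigma(W_{in},W_1,\dots,W_{l-1})$ thanks to the freshness of $W_l$, so $\mathbb{E}[q_{l,n}(a,b)\mid\mathcal F_{l-1}]=q_{l-1,n}(a,b)+\tfrac{\alpha_{l,L}^2}{n}\langle\phi(Y_{l-1}(a)),\phi(Y_{l-1}(b))\rangle$. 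The ReLU Gaussian identity $\mathbb{E}[\phi(X)\phi(Y)]=\tfrac12\sigma_X\sigma_Y f(c)$ together with a fourth-moment Chebyshev bound on the empirical sum over the $n$ neurons (using $\mathbb{E}[W_k^{ij}W_k^{i'j'}]=\delta_{ii'}\delta_{jj'}/n$ for previous layers) identifies this conditional drift with $\tfrac{\alpha_{l,L}^2}{2}f(c_{l-1,n})/c_{l-1,n}\cdot q_{l-1,n}$ up to an $L_2$ error of order $\alpha_{l,L}^2/\sqrt n$. The remaining martingale fluctuation $q_{l,n}-\mathbb{E}[q_{l,n}\mid\mathcal F_{l-1}]$ has conditional variance $O(\alpha_{l,L}^2/n)$, so its contributions telescope to total variance $O(\|\alpha\|_{\sss}^2/n)$. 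A Gronwall-type step along the Lipschitz update map $q\mapsto q+\tfrac{\alpha_{l,L}^2}{2}f(c)/c\cdot q$ closes the induction, the finite stability norm being exactly what prevents the inductive constant from exploding with $L$.

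The coefficient error and the depth-tail error are purely deterministic. Both $\bar q^\alpha_l$ and $q^\zeta_l$ live in the same bounded region by \cref{lemma:variance} applied to $\alpha$ and to $\zeta$, and their common update map is Lipschitz in the coefficient on that region; iterating the pointwise difference through the recursion gives $|\bar q^\alpha_L-q^\zeta_L|\le C_2\sum_{l=1}^L|\alpha_{l,L}^2-\zeta_l^2|$. For the depth tail, $|q^\zeta_{L+1}-q^\zeta_L|\le C_3\zeta_{L+1}^2$ on the bounded region, so $(q^\zeta_L)$ is Cauchy in $\reals$, its limit $q^\zeta_\infty$ exists, and $|q^\zeta_L-q^\zeta_\infty|\le C_3\sum_{l\ge L}\zeta_l^2$. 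Combining the three bounds via the triangle inequality at $l=L$ yields the advertised estimate, and the same three estimates applied at any intermediate layer $l=\lfloor tL\rfloor$ give the supremum-in-$t$ version.

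The hard part will be the width-first step: the $L_2$ rate $1/\sqrt n$ must be uniform in $l\le L$, and the constant must depend only on $\|\alpha\|_{\sss}$, not on $L$. Two nontrivial points drive the difficulty. First, for $l\ge 2$ the coordinates $(Y^i_{l-1}(a),Y^i_{l-1}(b))_i$ are not iid across the width, so the concentration of the empirical post-activation cross moment must be obtained via exchangeability of the neurons combined with the independence of the successive weight matrices, rather than by a direct law of large numbers. Second, the inductive Gronwall factor $\prod_l(1+O(\alpha_{l,L}^2))\le\exp(O(\|\alpha\|_{\sss}^2))$ is bounded uniformly in $L$ precisely because $\alpha\in\sss$, which is what yields a constant depending only on $\|\alpha\|_{\sss},\|\zeta\|_{\sss},d,\|a\|,\|b\|$ and makes the final bound vanish whenever $\sum_l|\alpha_{l,L}^2-\zeta_l^2|\to 0$.
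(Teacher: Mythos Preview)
Your three-term decomposition is exactly the one the paper uses: the paper's $\tilde q_{l,\infty}$ is your $\bar q^\alpha_l$, and your coefficient and depth-tail bounds are precisely Lemma~\ref{lemma:diff_Q} and Lemma~\ref{lemma:infinite_depth_conv_series}, proved by the same Gronwall/Cauchy arguments you outline. So the deterministic half of your plan is correct and matches the paper line by line.

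The gap is in the width-error step. You propose to identify the conditional drift $\frac{1}{n}\langle\phi(Y_{l-1}(a)),\phi(Y_{l-1}(b))\rangle$ with $\tfrac12 f(c_{l-1,n})/c_{l-1,n}\cdot q_{l-1,n}$ via the ReLU Gaussian identity, but that identity requires the pair $(Y_{l-1}^i(a),Y_{l-1}^i(b))$ to be centered bivariate Gaussian. At finite width with residual structure this fails: conditionally on $\mathcal F_{l-2}$ one has $Y_{l-1}^i = Y_{l-2}^i + \alpha_{l-1,L}\,G^i$ with $G^i$ iid Gaussian, so the coordinates are independent but carry \emph{nonzero, $i$-dependent means} $Y_{l-2}^i$. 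Consequently $\mathbb E[\phi(Y_{l-1}^i(a))\phi(Y_{l-1}^i(b))\mid\mathcal F_{l-2}]$ is a function of $(Y_{l-2}^i(a),Y_{l-2}^i(b))$ and of the post-activation moments at layer $l-2$, not $\tfrac12\sqrt{q_{l-1,n}(a,a)q_{l-1,n}(b,b)}\,f(c_{l-1,n})$. Exchangeability alone does not close this: what concentrates is an average of these shifted-Gaussian expectations, which only collapses to the centered-Gaussian formula once you know the empirical law of $(Y_{l-2}^i)_i$ is itself close to Gaussian, i.e.\ you need the very thing you are trying to prove, with uniform-in-depth control.

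The paper resolves this by introducing the auxiliary process $\tilde Y_l$ (\S B): one replaces the random drift $n^{-1/2}\|\phi(Y_{l-1})\|$ by the deterministic $(\mathbb E\phi(\tilde Y_{l-1}^1)^2)^{1/2}$, obtaining a process whose coordinates are exactly iid centered Gaussians at every layer. One then shows $\mathbb E\|Y_l-\tilde Y_l\|^2\le C$ uniformly in $l,L$ (Theorem~\ref{thm:Depth-Uniform-strong-convergence}) via a Gronwall argument, and finally runs the concentration/Gaussian-identity machinery on $\tilde Y_l$, where it is legitimate, to get $\|q_{l,n}-\tilde q_{l,\infty}\|_{L_2}\le Cn^{-1/2}$ (Theorem~\ref{thm:cov_inf_width}). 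This auxiliary-process trick is the missing ingredient in your width-error argument; once it is in place, your Gronwall bookkeeping and your handling of the two deterministic terms complete the proof exactly as you describe.
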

\Cref{thm:main_ConvSeries} shows that the neural covariance kernel converges to the same limiting kernel no matter how the width and depth limits are taken. In the proof, provided in \cref{app:infinite_depth_proofs}, we first show the existence of the limit of $q_L$, then proceed to bound the difference with the neural covariance kernel. The convergence rate depends on he properties of the series $\zeta$ that approximates $\alpha$ as depth grows. Notice that the limiting kernel $q^\zeta_\infty$ does not depend on $t \in (0,1]$. This is because the entries of $\zeta$ do not depend on depth $L$.

\textit{Examples.} The conditions of \Cref{thm:main_ConvSeries} are satisfied by many sequences $\alpha$. Examples include:
\begin{itemize}
    \item ``Decreasing'' scaling: assume that $\alpha_{l,L}=\zeta_l$ for all $L\geq 1, l\in [L]$, where $\zeta \in \ell_2(\mathbb{N})$. We call this scaling decreasing because $\lim_{l\to\infty} \zeta_l=0$. This choice of scaling factors trivially satisfies the conditions of \cref{thm:main_ConvSeries} and the convergence rate is given by $\bigO(n^{-1}+\sum_{l\geq L}\zeta_l^2)$. An examples of such scaling was studied in \citep{hayou21stable} and empirical results (performance of trained networks) were reported with $\zeta = \left((l \log(l+1)^2)^{-1/2}\right)_{l\geq1}$.

    \item ``Aggressive'' Uniform scaling: assume that $\alpha_{l,L}=L^{-\gamma}$ for some constant $\gamma > 1/2$. This scaling is called uniform because all the residual branches have the same scaling factor. This sequence of scaling factors satisfies the conditions of \cref{thm:main_ConvSeries} with $\zeta=0_{\ell_2(\mathbb{N})}$. The convergence rate is given by $\bigO(n^{-1}+L^{-(2\gamma-1)})$, and the limiting kernel is trivial and given by $q^\zeta_\infty=q_0^\zeta$, hence the nomenclature `aggressive' since this scaling removes all contributions of the hidden layers in the limiting kernel. Note that this case covers the Neural ODE limit with scaling factors $\alpha_{l,L}=L^{-1}$. In the next section, we will see that another kind of uniform scaling(non-aggressive) that yield non-trivial limits.
\end{itemize}

\subsection{Normalized Sequences of Scaling Factors}
In this section, we discuss another type of sequences of scaling factors. We know from \citep{Hayou2023WidthDepth} that with $\alpha_{l,L}=L^{-1/2}$, the limiting kernel is given by the solution of an ODE. In this section, we generalize this result by considering all sequences $\alpha$ that satisfy the condition $\sum_{l=1}^L\alpha_{l,L}^2 = 1$ for all $L\geq 1$. Let us first give a formal definition of such sequences.

\begin{definition}[Normalized Sequence of Scaling Factors] Let $\alpha$ be a sequence
of scaling factors. We say that $\alpha$ is normalized if it satisfies
$\sum_{l=1}^{L}\alpha_{l,L}^{2}=1$ for all $L\geq1$. The space of normalized sequences of scaling factors is denoted by $\sss_1$.\footnote{Note that the constant $1$ in this definition can  be replaced by any constant $M>0$ and all the subsequent results remain valid.}
\end{definition}
It is trivial that $\sss_1 \subset \sss$, and for all $\alpha \in \sss_1, \|\alpha\|_\sss=1$ (hence the subscript in $\sss_1$). The next result establishes commutativity of the infinite width and depth limit for normalized sequences.

\begin{thm}[Commutativity with Normalized Sequences]\label{thm:main_Normalized}
Consider a sequence of scaling factors $\alpha \in \sss_1$. Let $h_{L}=\max_{1\leq l\leq L}\alpha_{l,L}^{2}$ and assume that
$Lh_{L}^{2}=o_L(1).$ Then, we have 

\[
\sup_{t\in(0,1]}\|q_{\lfloor tL\rfloor,n}(a,b)-q_{t_L}(a,b)\|_{L_{2}}\le C\left(n^{-1/2}+h_L + L h_L^2\right),
\]
where $C$ depends only on $\|a\|,\|b\|,$ and $d$, and $t_L$ is given by $t_L = \sum_{k=1}^{\lfloor t L\rfloor} \alpha_{k,L}^2$, and $q_t$ is given by the solution of the following differential flow
\begin{equation}
\begin{aligned}
\begin{cases}
\frac{d q_t(a,b)}{dt} &= \frac{1}{2} \frac{f(c_t(a,b))}{c_t(a,b)} q_t(a,b),\\
c_t(a,b) &= \frac{q_t(a,b)}{ \sqrt{q_t(a,a)} \sqrt{q_t(b,b)}},\\
q_0(a,b) &= \frac{\langle a, b \rangle}{d},
\end{cases}
\end{aligned}
\end{equation}
where the function $f: [-1,1] \to [-1,1]$ is given by 
$$
f(z) = \frac{1}{\pi} ( z \arcsin(z) + \sqrt{1 - z^2}) + \frac{1}{2}z.
$$

Moreover, assume that there exists a function $\lambda:[0,1]\to [0,1]$ such that the sequence $\alpha$ satisfies $\sup_{t \in [0,1]}\left|\sum_{k=1}^{\lfloor t L\rfloor} \alpha_{k,L}^2-\lambda(t)\right| \leq r_L$ and $\lim_{L \to\infty} r_L = 0$. Then, we have
\[
\sup_{t\in(0,1]}\|q_{\lfloor tL\rfloor,n}(a,b)-q_{\lambda(t)}(a,b)\|_{L_{2}}\le C'\left(n^{-1/2}+h_L + L h_L^2 + r_L\right),
\]
where $C'$ depends only on $\|a\|,\|b\|, d$.
\end{thm}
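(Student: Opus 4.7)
The plan is to decouple the problem into three error contributions: a width-concentration error of order $n^{-1/2}$, an Euler discretization error in the depth direction, and, for the second part, a time-reparametrization error from $t_L$ to $\lambda(t)$. Because the scaling is normalized ($\sum_l \alpha_{l,L}^2 = 1$), the natural ``time'' of the $l$-th layer is $s_l := \sum_{k=1}^l \alpha_{k,L}^2$, and the discrete scheme sits exactly on the (nonuniform) grid $\{s_l\}_{l=0}^L \subset [0,1]$, which is precisely why the theorem compares the finite-width covariance at step $\lfloor tL\rfloor$ with the continuous flow at time $t_L = s_{\lfloor tL\rfloor}$.

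I would first introduce the deterministic width-$\infty$ proxy $\tilde q_l$ given by
$$\tilde q_l(a,b) = \tilde q_{l-1}(a,b) + \tfrac{\alpha_{l,L}^2}{2}\,\tfrac{f(\tilde c_{l-1})}{\tilde c_{l-1}}\,\tilde q_{l-1}(a,b),$$
with analogous recursions for the diagonal entries; this arises from the ReLU arc-cosine identity $\E[\phi(X)\phi(Y)] = \tfrac12 \sqrt{\E X^2\,\E Y^2}\,f(c)$ after expanding one ResNet step and conditioning on the previous layer. I would then prove $\|q_{l,n}(a,b) - \tilde q_l(a,b)\|_{L_2} \lesssim n^{-1/2}$ uniformly in $l \in [L]$ by induction on $l$. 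Writing $e_l = q_{l,n} - \tilde q_l$ and expanding one layer yields a recursion $e_l = e_{l-1} + \alpha_{l,L}^2 [G(q_{l-1,n}) - G(\tilde q_{l-1})] + M_l$, where $M_l$ is the martingale increment arising from the $W_l$-linear cross terms and from fluctuations of the quadratic term around its conditional mean. Standard Gaussian concentration on $W_l$ together with the uniform moment bounds on $\|Y_{l-1}(\cdot)\|^2$ inherited from \cref{lemma:variance} give $\E[M_l^2 \mid Y_{l-1}] \lesssim \alpha_{l,L}^2/n$. Squaring and taking expectations, together with Lipschitzness of $G$ (which I would establish via the product form $g(q) = \tfrac12 \sqrt{q(a,a)q(b,b)}\,f(c)$ to avoid the artificial $1/c$ singularity), yields $\|e_l\|_{L_2}^2 \leq (1+C\alpha_{l,L}^2)\|e_{l-1}\|_{L_2}^2 + C\alpha_{l,L}^2/n$. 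Iterating, the Gronwall factor stays bounded by $e^C$ thanks to $\sum_l \alpha_{l,L}^2 = 1$, delivering $\|e_L\|_{L_2} = O(n^{-1/2})$ with no dependence on $L$.

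Next I would show that $\tilde q_l$ is a non-uniform explicit Euler scheme for the ODE evaluated on the grid $\{s_l\}$: Taylor expansion gives $q_{s_l} = q_{s_{l-1}} + \alpha_{l,L}^2 \dot q(s_{l-1}) + \tfrac{\alpha_{l,L}^4}{2}\ddot q(\xi_l)$ with local truncation error $\tau_l = O(\alpha_{l,L}^4)$, where boundedness of $\ddot q$ follows by differentiating the ODE once and reusing the moment bounds. A standard Gronwall argument bounds the global error $|\tilde q_l - q_{s_l}|$ by $e^C \sum_{k=1}^l \alpha_{k,L}^4$, and estimating this sum in two complementary ways, namely $\sum_l \alpha_{l,L}^4 \leq h_L \sum_l \alpha_{l,L}^2 = h_L$ and $\sum_l \alpha_{l,L}^4 \leq L h_L^2$, yields the stated $h_L + L h_L^2$ term after combining with the concentration step via the triangle inequality. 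For the second conclusion, the assumption $\sup_t |t_L - \lambda(t)| \leq r_L$ together with uniform Lipschitz continuity of $t \mapsto q_t(a,b)$ (the RHS $g$ being uniformly bounded on the forward-invariant compact set determined by the same moment bounds) gives $|q_{t_L} - q_{\lambda(t)}| \leq C r_L$ uniformly in $t$, supplying the extra $r_L$ term. The main technical obstacle is the concentration step: $G$ is \emph{a priori} singular at $c = 0$, so Lipschitzness must be obtained through the product reformulation together with a uniform lower bound on $q_l(a,a), q_l(b,b)$; and preserving the $n^{-1/2}$ rate through $L$ iterations depends crucially on the normalization $\sum_l \alpha_{l,L}^2 = 1$, without which the Gronwall amplification would degrade the bound exponentially in $L$.
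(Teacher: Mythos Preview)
Your depth-direction argument (treating $\tilde q_l$ as the non-uniform Euler scheme for the coupled system $(q_t(a,a),q_t(b,b),q_t(a,b))$ on the grid $\{s_l\}$, with local truncation $O(\alpha_{l,L}^4)$ and global error $O(\sum_l \alpha_{l,L}^4)$) is correct and in fact cleaner than the paper's. The paper instead substitutes the closed-form $q_t(a,a)=\frac{\|a\|^2}{d}e^{t/2}$ into the ODE and introduces an intermediate Euler scheme $q_l^E$ using $e^{s_{l-1}/2}$; the mismatch between $e^{s_{l-1}/2}$ and $\prod_{k<l}(1+\alpha_{k,L}^2/2)$ is what produces the separate $Lh_L^2$ term (\cref{lemma:diff_tilde_euler}), while the $h_L$ term comes from the genuine Euler error (\cref{lemma:diff_euler_flow}). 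Your single-step argument avoids that split. The second conclusion via Lipschitz continuity of $t\mapsto q_t$ is also handled the same way in the paper.

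However, there is a real gap in your width-concentration step. You write the error recursion as
\[
e_l \;=\; e_{l-1} + \alpha_{l,L}^2\big[G(q_{l-1,n}) - G(\tilde q_{l-1})\big] + M_l
\]
with $M_l$ the ``fluctuations of the quadratic term around its conditional mean''. But the conditional mean of $\alpha_{l,L}^2\, n^{-1}\langle W_l\phi(Y_{l-1}(a)),W_l\phi(Y_{l-1}(b))\rangle$ given $Y_{l-1}$ is $\alpha_{l,L}^2\, n^{-1}\langle \phi(Y_{l-1}(a)),\phi(Y_{l-1}(b))\rangle$, \emph{not} $\alpha_{l,L}^2\, G(q_{l-1,n})$. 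The two agree only if the coordinates $(Y_{l-1}^i(a),Y_{l-1}^i(b))_{i\le n}$ are (approximately) i.i.d.\ bivariate Gaussian with covariance $q_{l-1,n}$, which is exactly what fails at finite $n$: these coordinates are coupled through all previous layers. So the residual
\[
R_{l-1}\;:=\;n^{-1}\langle \phi(Y_{l-1}(a)),\phi(Y_{l-1}(b))\rangle - G(q_{l-1,n})
\]
is $Y_{l-1}$-measurable, is not a martingale increment, and is not controlled by Gaussian concentration of $W_l$. Bounding $\|R_{l-1}\|_{L_2}\lesssim n^{-1/2}$ uniformly in $l\in[L]$ is the whole difficulty; it is precisely what the paper resolves by constructing the auxiliary process $\tilde Y_l$ with i.i.d.\ Gaussian coordinates (so that a law-of-large-numbers argument applies to the post-activation inner products) and showing $\E\|Y_l-\tilde Y_l\|^2\le C$ uniformly in $l,L$ (\cref{thm:Depth-Uniform-strong-convergence}, then \cref{thm:cov_inf_width}). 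Without that device or an equivalent mechanism for propagating approximate Gaussianity of the coordinates through depth, your Gr\"onwall recursion does not close.
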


The result of \cref{thm:main_Normalized} generalizes previous results from \citep{Hayou2023WidthDepth} to arbitrary normalized sequences. Using this theorem, we recover those results by choosing $\alpha_{l,L}=L^{-1/2}$ and verifying the conditions in the theorem. In particular, with the new proof techniques developed in this paper, we obtain a stronger convergence rate for depth.

\begin{corollary}[Normalized Uniform Scaling]
Assume that $\alpha_{l,L}=L^{-1/2}$ for all $L\geq 1$ and $l \in [L]$. Then, the results of \cref{thm:main_Normalized} are satisfied with $\lambda(t)=t$, $r_L=L^{-1}$, and $h_L=L^{-1}$. As a result, we have that 
\[
\sup_{t\in(0,1]}\|q_{\lfloor tL\rfloor,n}(a,b)-q_{t}(a,b)\|_{L_{2}}\le C\left(n^{-1/2}+L^{-1}\right),
\]
where $C$ depends only on $\|a\|,\|b\|, d$, and $q_t$ is defined in \cref{thm:main_Normalized}.
\end{corollary}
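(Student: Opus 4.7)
The plan is to obtain this corollary as a direct specialization of \cref{thm:main_Normalized}: I just need to verify all the hypotheses of the theorem for the specific choice $\alpha_{l,L}=L^{-1/2}$, read off the relevant quantities $h_L$, $r_L$, and $\lambda$, and substitute them into the conclusion. Since no new analytical machinery is required, this is essentially a bookkeeping exercise, and I do not expect any genuine obstacle.

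First, I would verify membership in $\sss_1$: with $\alpha_{l,L}^2 = 1/L$, a trivial summation gives $\sum_{l=1}^{L}\alpha_{l,L}^{2}=L\cdot L^{-1}=1$ for every $L\geq1$, so $\alpha\in\sss_1$ and the theorem is applicable. Second, I would compute $h_L=\max_{1\leq l\leq L}\alpha_{l,L}^2 = L^{-1}$, whence $Lh_L^2=L\cdot L^{-2}=L^{-1}\to 0$, which is precisely the $Lh_L^2=o_L(1)$ hypothesis.

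Third, I would identify the candidate limiting profile. Since
\[
\sum_{k=1}^{\lfloor tL\rfloor}\alpha_{k,L}^{2}=\frac{\lfloor tL\rfloor}{L},
\]
and $\big|\lfloor tL\rfloor/L - t\big|\le 1/L$ uniformly in $t\in[0,1]$, the natural choice is $\lambda(t)=t$ with rate $r_L=L^{-1}$. This is the only nontrivial verification; it relies on a standard floor-function estimate and is uniform in $t$, so the hypothesis $\sup_{t\in[0,1]}|\sum_{k=1}^{\lfloor tL\rfloor}\alpha_{k,L}^2-\lambda(t)|\leq r_L$ holds with $r_L = L^{-1}$.

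Finally, I would feed the triple $(h_L, r_L, \lambda) = (L^{-1}, L^{-1}, t\mapsto t)$ into the second part of \cref{thm:main_Normalized}. Since $Lh_L^2 = L^{-1}$, $h_L = L^{-1}$, and $r_L = L^{-1}$ are all of the same order, the bound collapses to
\[
\sup_{t\in(0,1]}\|q_{\lfloor tL\rfloor,n}(a,b)-q_{t}(a,b)\|_{L_{2}} \le C'\bigl(n^{-1/2}+L^{-1}+L^{-1}+L^{-1}\bigr)\le C\bigl(n^{-1/2}+L^{-1}\bigr),
\]
after absorbing numerical constants into $C$, which is the claimed statement. The dependence of $C$ only on $\|a\|,\|b\|,d$ is inherited directly from the corresponding dependence of $C'$ in the theorem.
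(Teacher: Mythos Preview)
Your proposal is correct and follows essentially the same approach as the paper's proof: verify $h_L=L^{-1}$ and $Lh_L^2=L^{-1}$, compute $\sum_{k=1}^{\lfloor tL\rfloor}\alpha_{k,L}^2=\lfloor tL\rfloor/L$, use the floor-function estimate to get $r_L=L^{-1}$ with $\lambda(t)=t$, and plug into \cref{thm:main_Normalized}. The paper's proof is slightly terser (it does not spell out the $\sss_1$ membership or the final absorption of constants), but the content is identical.
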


\begin{proof}
With $\alpha_{l,L}=L^{-1/2}$, we trivially have $h_L=L^{-1}$ and $L h_L^{2} = L^{-1}$. Moreover, given $t \in (0,1]$ we have that $\sum_{k=1}^{\lfloor t L \rfloor}\alpha_{l,L}^2=\frac{\lfloor t L \rfloor}{L}$, and therefore $\left|\sum_{k=1}^{\lfloor t L \rfloor}\alpha_{l,L}^2 - t \right|\leq L^{-1}$.
\end{proof}

\paragraph{Better depth rate.} In this paper, we obtain a depth rate of order $L^{-1}$ in contrast to the $L^{-1/2}$ convergence rate reported in \citep{Hayou2023WidthDepth}. The reason lies in the differences of the proof techniques used to derive the results. The proof techniques in both results are essentially `orthogonal' in the following sense: in \citep{Hayou2023WidthDepth}, the proofs rely on taking the depth to infinity first, while controlling the effect of width at the same time. With this approach, the best depth rate one can obtain is $L^{-1/2}$ which is induced by the Euler disctretization error (note that with $\alpha_{l,L}=L^{-1/2}$, the ResNet behaves as the solution of a Stochastic Differential Equation (SDE) in the infinite depth limit when the width is fixed). However, in this work, we first take the width to infinity while controlling the depth. By doing this, all the randomness in the covariance is removed as $n \to \infty$, regardless of the depth $L$. As a result, by taking depth to infinity, we deal with deterministic dynamical systems instead of stochastic ones (the SDE case), in which case the Euler disctretization error is of order $L^{-1}$. We refer the reader to \cref{sec:proofs} for more details about the proof techniques.\\

\textit{Remark.} The normalized uniform scaling is optimal in terms of the depth-related error in \cref{thm:main_Normalized}. More precisely, the depth-related error is given by the term $\mathcal{R}_L(\alpha) = h_L + L h_L^2 + r_L$ up to constant $C$. Therefore, a natural question one might ask is: what properties should the sequence of scaling factors satisfy in order to minimize this error? Given a fixed depth $L$, this problem can be formulated as a constrained minimization problem
\begin{equation}\label{eq:constrained_op}
    \min_{\alpha \in \sss_1} \mathcal{R}_L(\alpha) = h_L + L h_L^2 + r_L,
\end{equation}
where the constraint is given by the fact that $\alpha \in \sss_1$.

\begin{lemma}\label{lem:optimal_depth_scaling}
    The normalized uniform scaling given by $\alpha_{l,L}=L^{-1/2}$ is a solution to problem \eqref{eq:constrained_op}.
\end{lemma}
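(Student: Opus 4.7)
The plan is to exploit that the map $h \mapsto h + L h^{2}$ is strictly increasing on $[0,\infty)$, so the optimization reduces to minimizing $h_{L}$ subject to the normalization constraint (and then separately checking that the uniform choice makes $r_{L}$ of the same order as the remaining terms). First I would apply the averaging inequality (a one-line pigeonhole argument): since $\alpha\in\sss_{1}$ means $\sum_{l=1}^{L}\alpha_{l,L}^{2}=1$, one has
\[
h_{L}=\max_{1\le l\le L}\alpha_{l,L}^{2}\;\ge\;\frac{1}{L}\sum_{l=1}^{L}\alpha_{l,L}^{2}=\frac{1}{L},
\]
with equality iff $\alpha_{l,L}^{2}=1/L$ uniformly in $l$. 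Since $\alpha_{l,L}\ge 0$ by definition, this forces $\alpha_{l,L}=L^{-1/2}$. Consequently, for every $\alpha\in\sss_{1}$,
\[
h_{L}+Lh_{L}^{2}\;\ge\; L^{-1}+L\cdot L^{-2}\;=\;2L^{-1},
\]
with equality precisely for the uniform scaling.

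Second, I would verify the $r_{L}$ bound for the uniform sequence. Taking $\lambda(t)=t$, the partial sum for $\alpha_{l,L}=L^{-1/2}$ is $\sum_{k=1}^{\lfloor tL\rfloor}L^{-1}=\lfloor tL\rfloor/L$, and $|\lfloor tL\rfloor/L-t|\le L^{-1}$ uniformly in $t\in[0,1]$; hence $r_{L}\le L^{-1}$. Combining the two steps, the uniform scaling achieves
\[
\mathcal{R}_{L}(\alpha)\;\le\; L^{-1}+L^{-1}+L^{-1}\;=\;3L^{-1},
\]
while every $\alpha\in\sss_{1}$ satisfies $\mathcal{R}_{L}(\alpha)\ge h_{L}+Lh_{L}^{2}\ge 2L^{-1}$. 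Thus the uniform choice attains the optimal $\Theta(L^{-1})$ rate and is in particular a minimizer of the dominant term $h_{L}+Lh_{L}^{2}$.

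The main subtlety (rather than a genuine obstacle) is the interpretation of $r_{L}$ inside the minimization: since $r_{L}$ depends on the companion function $\lambda$, it could in principle be driven to $0$ by choosing $\lambda$ to match the step function of $\alpha$, in which case the minimization reduces cleanly to $h_{L}+Lh_{L}^{2}$ and the conclusion follows verbatim from the pigeonhole step, with uniform scaling being the \emph{unique} minimizer. If instead $\lambda$ is held fixed (as a prescribed limiting profile), the same two-step argument still yields optimality up to a universal constant, which is the sense of "solution" claimed. Either reading is handled by the same two ingredients above, so I do not expect additional technical difficulty.
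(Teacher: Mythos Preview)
The paper does not supply a formal proof of this lemma---only the heuristic paragraph immediately following it, which sketches exactly the argument you carry out: the averaging bound $h_L\ge L^{-1}$ (so $h_L+Lh_L^2\ge 2L^{-1}$, with equality for the uniform choice), together with the observation that $r_L$ is $O(L^{-1})$ for uniform scaling. Your treatment is correct and in fact more careful than the paper's, since you explicitly flag and resolve the ambiguity in how $r_L$ (and the companion function $\lambda$) enter the minimization; under the natural reading where $\lambda$ is optimized for fixed $L$, your pigeonhole step gives the exact minimizer, and under the fixed-$\lambda$ reading it gives rate-optimality, either of which suffices for the ``a solution'' phrasing.
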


To explain the intuition behind the result of \cref{lem:optimal_depth_scaling}, we first need to understand what each term in $\mathcal{R}_L$ represents. The first term $h_L$ is well-known in numerical methods and represents the Euler discretization (global) error. The second term $L h_L^2$ is a bound on the error between the Euler scheme of the ODE satisfied by $q_t$ and the actual neural covariance kernel from the finite depth network. The last term $r_L$ is induced by the behaviour of the scaling sequence as $L$ grows. If we consider just the sum of the first two terms, uniform scaling balances the two terms which should intuitively minimize that sum. It also happens that for this choice of scaling $r_L$ is of the same order as $h_L + L h_L^2$.

\section{Experiments and Practical Implications}\label{sec:experiments}
In this section, we validate our theoretical results with simulations on large width and depth residual neural networks of the form \cref{eq:main_resnet} with different choices of the sequence $\alpha$. 
\begin{figure}[h]
    \centering
    \includegraphics[width=0.7\linewidth]{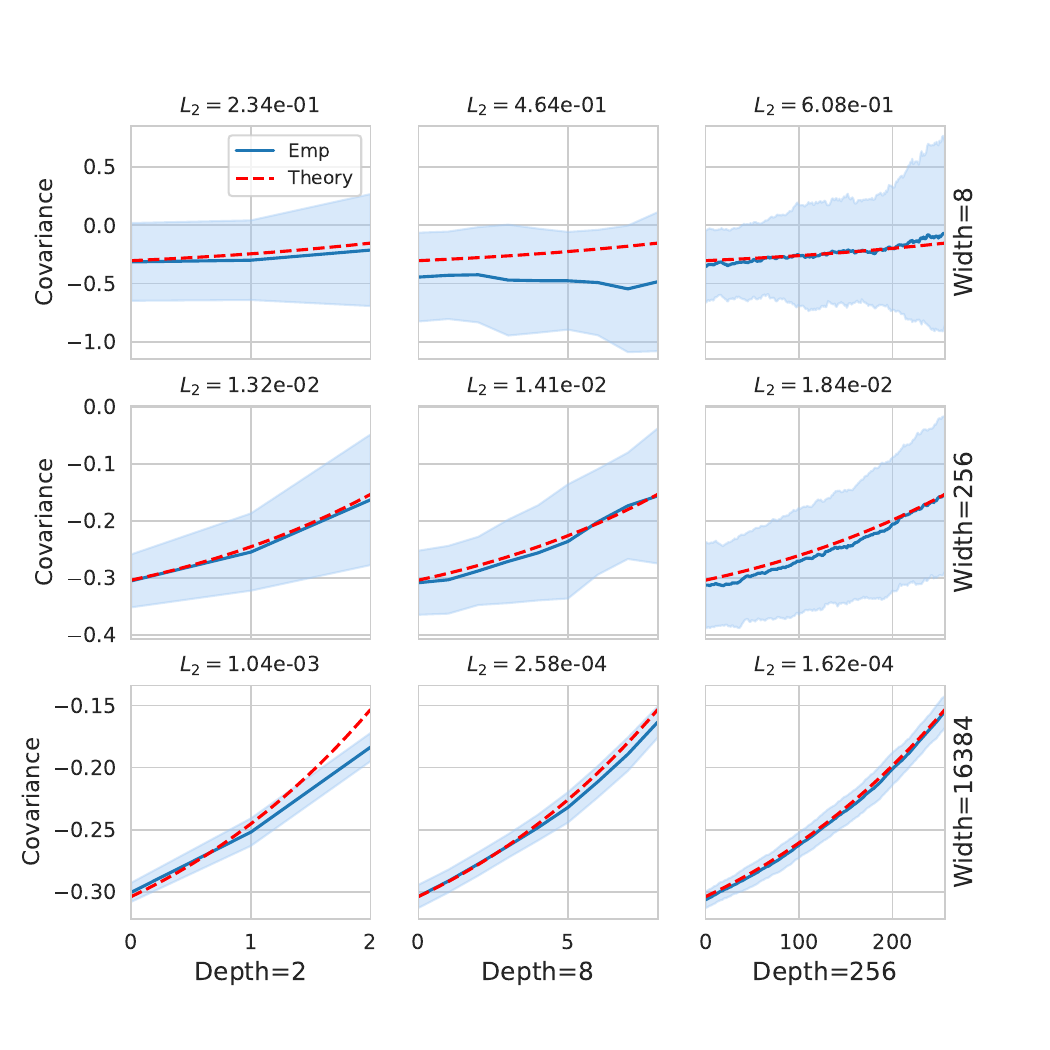}
    \caption{The blue curve represents the average covariance $q_{l,n}(a,b)$ for ResNet \cref{eq:main_resnet} with $n \in \{2^3, 2^8, 2^{14}\}$, $L\in \{2^1, 2^3, 2^8\}$, $d=30$, and $a$ and $b$ are sampled randomly from $\mathcal{N}(0, I_{d})$ and normalized to have $\|a\|=\|b\|=1$. The average is calculated based on $N=100$ simulations. The shaded blue area represents $1$ standard deviation of the observations. The red dashed line represents the theoretical covariance $q_t(a,b)$ predicted in \cref{thm:main_Normalized}. The empirical $L_2$ error for $t=1$ is reported.}
    \label{fig:covariance_uniform}
\end{figure}

\subsection{Convergence of the neural covariance}
\cref{thm:main_Normalized} and \cref{thm:main_ConvSeries} predict that the covariance $q_{l,n}(a,b)$ for two inputs $a, b$ converges in $L_2$ norm  in the limit $\min(n,L) \to \infty$.

\paragraph{Uniform scaling $\alpha_{l,L}=L^{-1/2}$.} In \cref{fig:covariance_uniform}, we compare the empirical covariance $q_{l,n}$ with the theoretical prediction $q_t$ from \cref{thm:main_Normalized} for $n \in \{2^3, 2^8, 2^{14}\}$ and $L \in \{2^1, 2^3, 2^8\}$. We chose maximum depth to be much smaller than maximum width to take into account the difference in the width and depth convergence rates: $n^{-1/2}$ versus $L^{-1}$ in this case.

The empirical $L_2$ error between $q_{L,n}$ and $q_1$ (from \cref{thm:main_Normalized}) is also reported. As the width increases, we observe an excellent match with the theory. The role of the depth is less noticeable, but for instance, with width $n = 2^14$, we can see that the $L_2$ error is smaller with depth $L=256$ as compared to depth $L=2$. The theoretical prediction $q_t$ is approximated with a PDE solver (RK45 method, \cite{Fehlberg1968ClassicalFS}) for $t \in [0,1]$ with a discretization step $\Delta t = $1e-6.

\begin{figure}[h]
    \centering
    \includegraphics[width=0.7\linewidth]{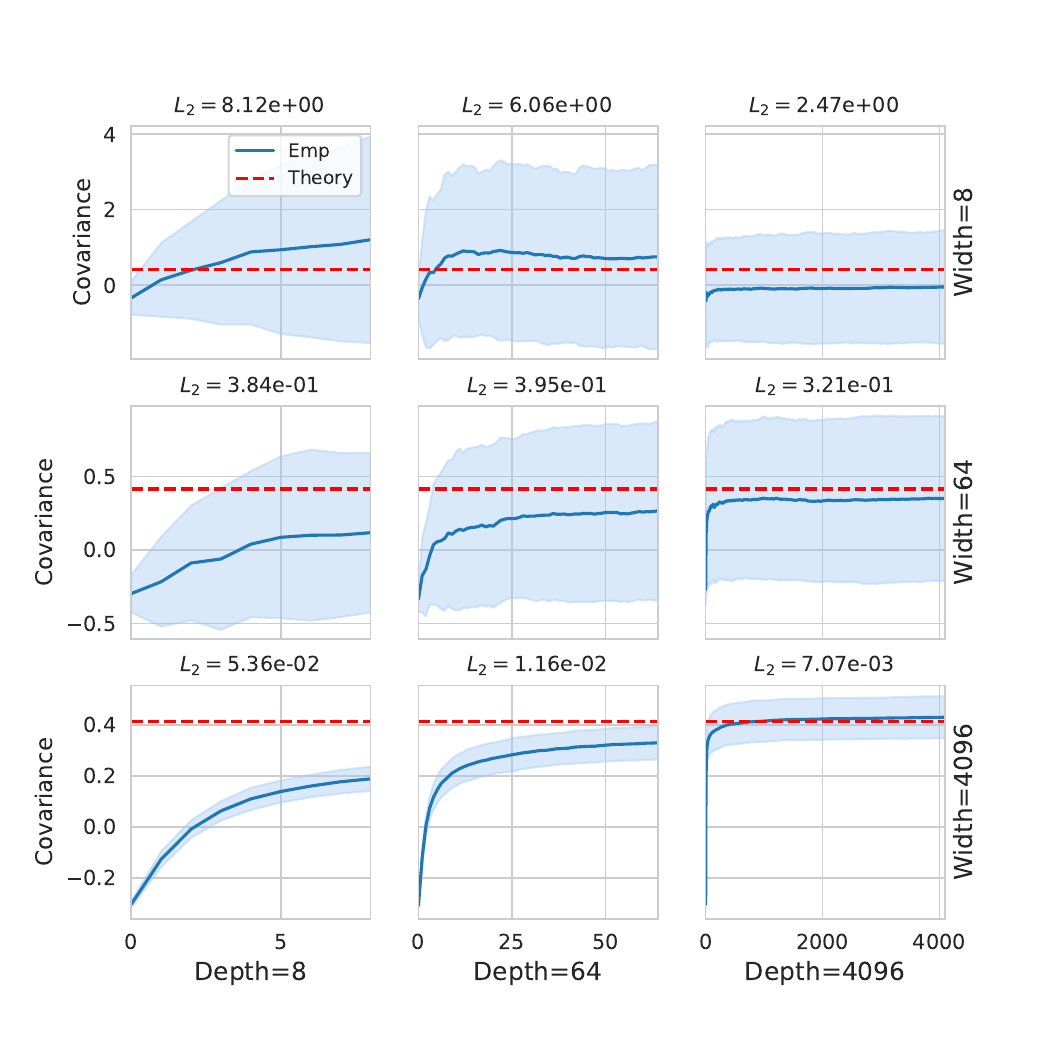}
    \caption{Same setup of \cref{fig:covariance_uniform}, with $\alpha_{l,L}=l^{-1}$. The red dashed line represents the theoretical covariance $q_\infty(a,b)$ predicted in \cref{thm:main_ConvSeries}. The empirical $L_2$ error for $t=1$ is reported.}
    \label{fig:covariance_decreasing}
\end{figure}

\paragraph{Convergent Scaling $\alpha_{l,L}=l^{-1}$.} In \cref{fig:covariance_decreasing}, we run the same experiment for decreasing scaling $\alpha_{l,L}=l^{-1}$. Note that in this case, the limiting neural covariance does not depend on $t$. The red dashed line represents this limiting value in the figure (estimated with $q_{L,n}$ with $L=$1e5 and $n=$1e5). Similar to the results with uniform scaling, we observe a convergence pattern to the red line as $L$ and $n$ increase. The role of depth in this case is more pronounced.

\begin{figure}
    \centering
    \includegraphics[width=0.33\linewidth]{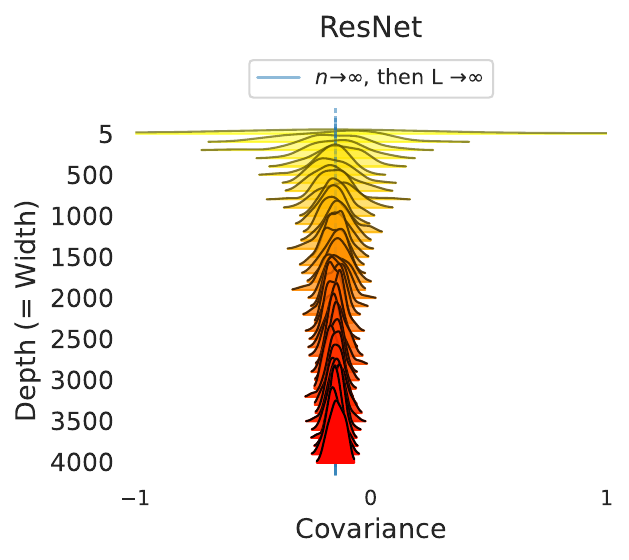}
    \includegraphics[width=0.3\linewidth]{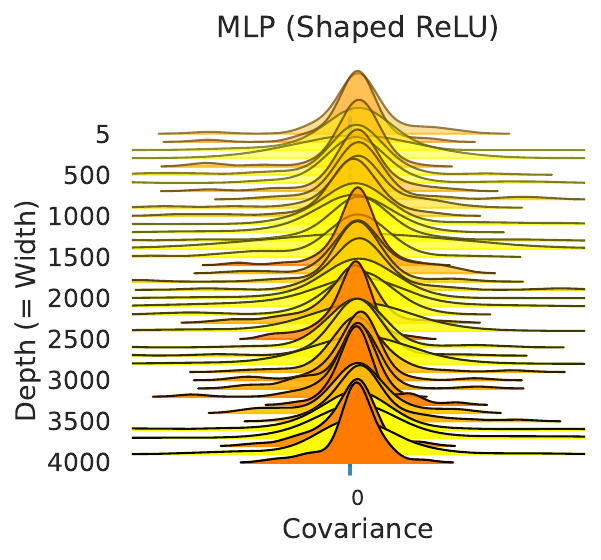}
    \includegraphics[width=0.3\linewidth]{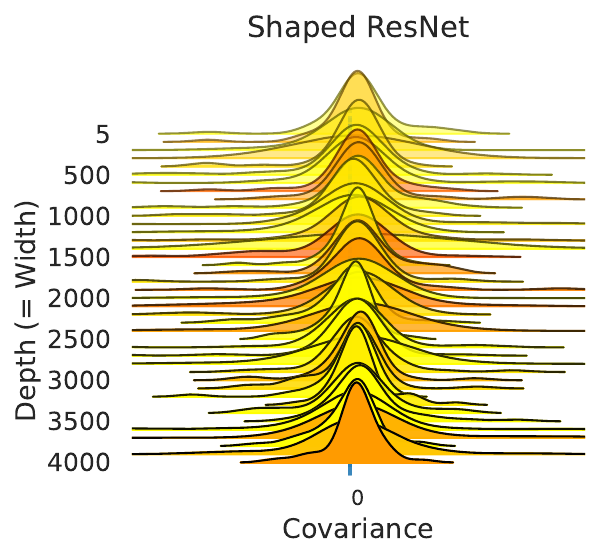}
    \caption{The distribution of the $q_{L,n}$ with $L=n$ for varying $n \in [5,4000]$. \textbf{(Left)} ResNet described in \cref{eq:main_resnet} with $\alpha_{l,L}=L^{-1/2}$. \textbf{(Center)} MLP described in \cref{eq:mlp} with shaped ReLU $\phi_L$ \cite{li2022sde}. \textbf{(Right)} Shaped ResNet with $\beta = 1/2$ (\cref{prop:shaped_resnet}). The vertical blue line represents the limit sequential limit $\lim_{L\to\infty}\lim_{n\to\infty}q_{L,n}$. The inputs $a,b$ are sampled following the same procedure in \cref{fig:covariance_uniform}.}
    \label{fig:compare_conv_covariance}
\end{figure}
\subsection{Comparison with other architectures}
In \cref{fig:compare_conv_covariance}, we show the evolution of the distribution of $q_{L,n}$ for three different architectures with $L=n$. With our choice of scaling factors $\alpha_{l,L}$, the distribution concentrates around the deterministic limit given by the solution of the ODE described in \cref{thm:main_Normalized}. For MLP with shaped ReLU, and the Shaped ResNet (\cref{prop:shaped_resnet}, the main branch is scaled with $\beta = 1/2$), we observe that the neural covariance remains random as width (and depth) grows. The sequential infinite-width-then-depth is illustrated in blue, and shows that with our choice of scaling factors, the covariance concentrates around this sequential limit even when $n=L \to \infty$. In contrast, with shaped MLP/ResNet, the two limits (sequential vs proportional) exhibit different behaviours, confirming that commutativity does not hold in these two cases.\\
\vspace{-2em}
\begin{wrapfigure}{r}{0.4\textwidth}
  \begin{center}
    \includegraphics[width=0.4\textwidth]{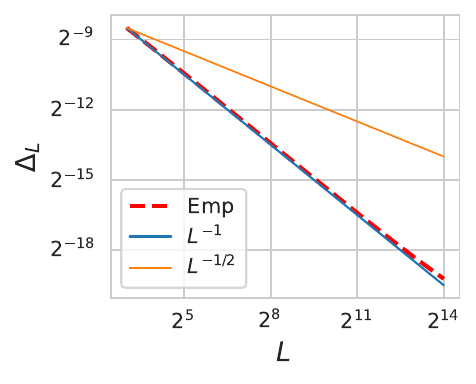}
  \end{center}
  \vspace{-0.75cm}
  \caption{\small{The curve of  $\Delta_L = |q_{L,\infty}(a,b) - q_{t=1}(a,b)|$ for $L \in \{2^k, k=3,\dots,14\}$.}}
  \vspace{-3em}
  \label{fig:depth_rate}
\end{wrapfigure}

\subsection{Improved Depth rate}
In \citep{Hayou2023WidthDepth}, commutativity was established with the choice of scaling factors $\alpha_{l,L}=L^{-1/2}$. The reported convergence rate (as $n$ and $L$ go to infinity) of the neural covariance is of the form $\bigO(n^{-1/2}+L^{-1/2})$. In this paper, we established an improved convergence rate of order $\bigO(n^{-1/2}+L^{-1})$, which suggests that convergence is more sensitive to the width than to depth. To validate this result, we conduct the following experiment: we take $n$ to infinity while fixing $L$ and obtain the infinite-width neural covariance $q_{L,\infty}$ (infinite-width covariance for the last layer). We then measure $\Delta_L = |q_{L,\infty}(a,b) - q_{t=1}(a,b)|$ where $q_t$ is given in \cref{thm:main_Normalized} and $(a,b)$ are sampled randomly following the procedure in \cref{fig:covariance_uniform}. We observe a perfect match of the $L^{-1}$ convergence rate (where the intercept was adjusted so that all the lines start from the same initial value).

\section{Outline of Proof Techniques}\label{sec:proofs}
In \citep{Hayou2023WidthDepth}, it was shown that the neural covariance satisfies commutativity with the specific scaling $\alpha_{l,L} = L^{-1/2}$. The main technical novelty in that work is taking the depth $L$ to infinity first, while controlling the dependence of the constants on the width $n$. Given a fixed width $n$, taking depth $L$ to infinity results in an SDE behaviour (Stochastic Differential Equation), and the main tools to study such convergence are numerical methods for SDEs (Euler discretization scheme). In this case, it is possible to obtain infinite-depth strong convergence where the constants do not depend on width $n$. Commutativity then follows by studying the infinite-width limit of these SDEs. This involves the use of tools from mean-field stochastic calculus (namely McKean-Vlasov processes).

In this work, we take an orthogonal approach where the width is taken to infinity first, and the constants are well chosen so that they do not depend on depth, followed by infinite-depth which concludes the proof. The main innovation in the proofs is related to the introduction of \emph{the auxiliary process} $\Tilde{Y}$: given a residual network of the form \cref{eq:main_resnet}, we introduce an auxiliary process $\tilde{Y}_l$ that shares some properties with the original neural process $Y_l$. We bound the difference between $Y_l$ and $\tilde{Y}_l$ using Gronwall's type of techniques, and show that the constants in this bound can be chosen to be independent of depth, hence providing a depth-uniform bound for the infinite-width limit. More importantly, the auxiliary process has iid Gaussian entries, which facilitates the study of the covariance kernel related to $\tilde{Y}_l$, and allow us to conclude on commutativity. Some technical results involve the use of concentration inequalities to deal with low probability events such as $\phi(Y_l)=0_{\reals^n}$ when $n$ is large.

\section{Conclusion and Limitations}

In this paper, we have shown that, at initialization, under general assumptions on the sequence of scaling factors, the large-depth and large-width limits of a residual neural network (resnet) commute for the neural covariance. We used novel proof techniques. Our results generalize and strengthen previous works on commutativity.

However, our results are restricted to the neural covariance function and cannot imply anything about commutativity for other neural functions. More importantly, it is unclear what happens during training, and potentially, different behaviors can occur depending on how the learning rate is chosen as a function of width and depth. While we can heuristically conjecture that commutativity holds during training under suitable scaling strategies, commutativity is a precise mathematical statement that requires rigorous proofs.

One might also ask whether commutativity is needed in the current context of Large Language Models, where most architectures are in the regime $n \gg L \gg 1$ (e.g. $n \sim 1000, L\sim50$) and that this regime can be fairly described by the sequential limit `$n \to \infty$, then $L \to \infty$'. While this might be true to some extent, note that convergence of neural functions can happen at different width and depth rates (e.g. $n^{-1/2}$ and $L^{-1}$ in the case of neural covariance), which implies that small changes in depth (or width) could significantly change the behaviour of the neural function. We leave this question for future work.

\newpage

\printbibliography
\newpage

\appendix

\addcontentsline{toc}{section}{Appendix} 

\part{Appendix} 

\parttoc

\newpage
\section{Infinite Width, Infinite Depth}\label{sec:comprehensive_lit_review}
Theoretical analysis of randomly initialized neural networks with an infinite number of parameters has yielded a wealth of interesting results, both theoretical and practical. Most of the research in this area has focused on the case where the network depth is fixed and the width is taken to infinity. However, in recent years, motivated by empirical observations, there has been an increased interest in studying the large depth limit of these networks. We provide here a non-exhaustive summary of existing results of these limits.

\subsection{Infinite-Width Limit}
The infinite-width limit of neural network architectures has been extensively studied in the literature and has led to many interesting theoretical and algorithmic innovations. We summarize these results below. 
\begin{itemize}[leftmargin=*]
    \item \emph{Initialization schemes}: the infinite-width limit of different neural architectures has been  extensively studied in the literature. In particular, for multi-layer perceptrons (MLP), a new initialization scheme that stabilizes forward and backward propagation (in the infinite-width limit) was derived in \citep{poole, samuel2017}. This initialization scheme is known as the Edge of Chaos, and empirical results show that it significantly improves performance. In \cite{yang2017meanfield, hayou21stable}, the authors derived similar results for the ResNet architecture, and showed that this architecture is \emph{placed} by-default on the Edge of Chaos for any choice of the variances of the initialization weights (Gaussian weights). In \cite{hayou2019impact}, the authors showed that an MLP that is initialized on the Edge of Chaos exhibits similar properties to ResNets, which might partially explain the benefits of the Edge of Chaos initialization.
    
    \item \emph{Gaussian process behaviour}: Multiple papers (e.g. \cite{neal, lee_gaussian_process, yang_tensor3_2020, matthews, hron20attention}) studied the weak limit of neural networks when the width goes to infinity. The results show that a randomly initialized neural network (with Gaussian weights) has a similar behaviour to that of a Gaussian process, for a wide range of neural architectures, and under mild conditions on the activation function. In \cite{lee_gaussian_process}, the authors leveraged this result and introduced the neural network Gaussian process (NNGP), which is a Gaussian process model with a neural kernel that depends on the architecture and the activation function. Bayesian regression with the NNGP showed that NNGP surprisingly achieves performance close to the one achieved by an SGD-trained finite-width neural network.
    
    The large depth limit of this Gaussian process was studied in \cite{hayou21stable}, where the authors showed that with proper scaling, the infinite-depth (weak) limit is a Gaussian process with a universal kernel\footnote{A kernel is called universal when any continuous function on some compact set can be approximated arbitrarily well with kernel features.}.
    \item \emph{Neural Tangent Kernel (NTK)}: the infinite-width limit of the NTK is the so-called NTK regime or Lazy-training regime. This topic has been extensively studied in the literature. The optimization and generalization properties (and some other aspects) of the NTK have been studied in \cite{Liu2022connecting, arora2019finegrained, seleznova2022ntk, hayou2019trainingdynamicsNTK}. The large depth asymptotics of the NTK have been studied in \citep{hayou_ntk, hayou2022curse, jacot2019freeze, xiao2020disentangling}. We refer the reader to \cite{jacot2022thesis} for a comprehensive discussion on the NTK.

    \item \emph{Feature Learning}: It is worth mentioning that the infinite-width limit was studied in the feature learning regime (as opposed to NTK regime where feature do not change). A series of works considered the infinite-width limit of mean-field parameterization, see e.g. \citep{bordelon2023influence, bordelon2022selfconsistent,fang2021modeling,sirignano2022mean,nguyen2020rigorous,araujo2019mean,mei2019mean,agoritsas2018out}. In the same direction, a series of works called \emph{Tensor Programs} studied the dynamics of infinite-width limit of finite-depth general neural networks both at initialization and at finite training step $t$ with gradient descent \citep{yang2019tensor_i, yang_tensor3_2020, yang2019scaling, yang2021tensor_iv}.
    
    \item \emph{Others}: the theory of infinite-width neural networks have also been utilized for network pruning  \citep{hayou_pruning}, regularization  \citep{vladimirova19understanding, hayou2021stochasticdepth}, feature learning \citep{hayou_eh}, and ensembling methods \citep{he2020ntkensembles}.
\end{itemize}

\subsection{Infinite-Depth Limit}

 \paragraph{Infinite-width-then-infinite-depth limit.} In this case, the width of the neural network is taken to infinity first, followed by the depth. This is known as the infinite-depth limit of infinite-width neural networks. This limit has been widely used to study various aspects of neural networks, such as analyzing neural correlations and deriving the Edge of Chaos initialization scheme \citep{samuel2017, poole}, investigating the impact of the activation function \citep{hayou2019impact}, and analyzing the behavior of the Neural Tangent Kernel (NTK) \citep{hayou_ntk, xiao2020disentangling}.
    
\paragraph{The joint infinite-width-and-depth limit.} In this case, the depth-to-width ratio is fixed\footnote{Other works consider the case when the depth-to-width ratio converge to a constant instead of being fixed.}, the width and depth are jointly taken to infinity. There are a limited number of studies that have examined the joint width-depth limit. For example, in \citep{li21loggaussian}, the authors demonstrated that for a specific form of residual neural networks (ResNets), the network output exhibits a (scaled) log-normal behavior in this joint limit, which is distinct from the sequential limit where the width is taken to infinity first followed by the depth, in which case the distribution of the network output is asymptotically normal (\citep{samuel2017, hayou2019impact}). Furthermore, in \citep{li2022sde}, the authors studied the covariance kernel of a multi-layer perceptron (MLP) in the joint limit and found that it weakly converges to the solution of a Stochastic Differential Equation (SDE). In \cite{Hanin2020Finite}, it was shown that in the joint limit case, the Neural Tangent Kernel (NTK) of an MLP remains random when the width and depth jointly go to infinity, which is different from the deterministic limit of the NTK when the width is taken to infinity before depth \citep{hayou_ntk}. In \citep{hanin2022correlation, hanin2019finitewidth}, the authors explored the impact of the depth-to-width ratio on the correlation kernel and the gradient norms in the case of an MLP architecture and found that this ratio can be interpreted as an effective network depth. Similar results have been discussed in \citep{zavatone2021exact, noci2021precise, jakub2023depth}.    
    
\paragraph{Infinite-depth limit of finite-width neural networks.}  In both previous limits, the width of the neural network is taken to infinity, either in isolation or jointly with the depth. However, it is natural to question the behavior of networks where the width is fixed and the depth is taken to infinity. For example, in \cite{hanin2019finitewidth}, it was shown that neural networks with bounded width are still universal approximators, motivating the examination of finite-width large depth neural networks. The limiting distribution of the network output at initialization in this scenario has been investigated in the literature. In \cite{peluchetti2020resnetdiffusion}, it was demonstrated that for a specific ResNet architecture, the pre-activations converge weakly to a diffusion process in the infinite-depth limit. This a simple corollary of existing results in stochastic calculus on the convergence of Euler-Maruyama disctretization schemes to continuous Stochastic Differential Equations. Other recent work by \cite{hayou2022on} examined the impact of the activation function on the distribution of the pre-activation, and characterized the distribution of the post-activation norms in this limit. It is worth mentioning that the neural ODE literature can be seen as part of the infinite-depth limit literature of neural networks, but we will not discuss that in this paper as we believe the two models (ODE model and our stochastic settings) are fundamentally different. 

\paragraph{General limit $\min\{n,L\} \to \infty$.} This limit is understudied, and to the best of our knowledge, it was only studied in \cite{Hayou2023WidthDepth}. This limit is particularly used to define (strong) commutativity.

\section{Depth-Uniform Infinite Width Limit: The Auxiliary Process}

In this section, we aim to understand the infinite-width behaviour
of the pre-activations $Y_{l}$ as a function of depth $L$. We will
show that there exists a process $\tilde{Y}_{l}(.):\reals^{d}\to\reals^{n}$
such that for any $a\in\reals^{d}$, the entries $(\tilde{Y}_{l}^{i}(a))_{i\in[n]}$
are iid Gaussian random variables, and 

$$n^{-1}\mathbb{{E}}\|Y_{l}(a)-\tilde{Y}_{l}(a)\|^{2}\leq C\sum_{i=1}^{l}\alpha_{l,L}^{2},$$where $C>0$ is a constant that depends only on the input $a$, and
which can be made independent of $a$ if the input is chosen in a compact
set. A straightforward result is that if the sequence $\alpha$ satisfies
$\sup_{L\geq1}\sum_{l=1}^{L}\alpha_{l,L}^{2}\leq M$ for some constant
$M$, then the convergence rate of the neural processes $Y_{l}$ to
$\tilde{Y}_{l}$ can be upperbounded by a quantity that does not depend
on depth.

\subsection{Constructing $\tilde{Y}_{l}$}

We can write the forward propagation as follows

$$Y_{l}(a)=Y_{l-1}(a)+\alpha_{l,L}\frac{1}{\sqrt{n}}\|\phi(Y_{l-1}(a))\|G_{l}(a)$$

$$G_{l}(a)=\begin{cases}
\sqrt{n}W_{l}\frac{\phi(Y_{l-1}(a))}{\|\phi(Y_{l-1}(a))\|} & \textrm{if}\hspace{1em}\|\phi(Y_{l-1}(a))\|\neq0,\\
\sqrt{n}W_{l}e & \textrm{otherwise,}
\end{cases}$$

where $e=(1,\dots,1)^{\top}\in\reals^{n}$ (the choice of $e$ here
is arbitrary and does not impact the identity above). The
vector $G_{l}(a)$ consists of iid standard Gaussian variables as
a result of \cref{lemma:gaussian_vec}. Moreover, for any $l\neq l'$, the
processes $G_{l}$ and $G_{l^{'}}$ are independent.

Using this auxiliary process $G_{l}$, we define the process $\tilde{Y}_{l}$
as follows

$$\tilde{Y}_{l}(a)=\tilde{Y}_{l-1}(a)+\alpha_{l,L}\left(\E[\phi(\tilde{Y}_{l-1}^{1}(a))^{2}]\right)^{1/2}G_{l}(a)$$ 

The $\emph{volatility}$ term $\E[\phi(\tilde{Y}_{l-1}^{1}(a))^{2}]$
in the definition of the process $\tilde{Y}_{l}$ can be expressed
analytically. We state this result in the next lemma.
\begin{lemma}
\label{lem:variance}For all $l\in[L],$ 
\[
q_{l}(a)=\E[(Y_{l}^{1})^{2}]=\E[(\tilde{Y_{l}^{1}})^{2}]=\frac{\|a\|^{2}}{d}\prod_{k=1}^{l}\left(1+\frac{\alpha_{k,L}^{2}}{2}\right).
\]
 As a result, we also have 
\[
\E\left[\phi(Y_{l}^{1}(a))^{2}\right]=\E\left[\phi(\tilde{Y}_{l}^{1}(a))^{2}\right]=\frac{\|a\|^{2}}{2d}\prod_{k=1}^{l}\left(1+\frac{\alpha_{k,L}^{2}}{2}\right).
\]
.
\end{lemma}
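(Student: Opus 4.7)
The identity $\E[(Y_l^1)^2] = \frac{\|a\|^2}{d}\prod_{k=1}^{l}(1+\alpha_{k,L}^2/2)$ has already been established in \cref{lemma:variance}, so the only new work is (i) showing that $\tilde{Y}_l^1$ has the same second moment, and (ii) deducing the analogous identities for $\phi(Y_l^1)$ and $\phi(\tilde{Y}_l^1)$. I will prove (i) by induction on $l$, with the stronger induction hypothesis that $\tilde{Y}_l^1 \sim \normal(0, q_l(a))$. The base case is immediate from $\tilde{Y}_0(a) = W_{in}a$, whose entries are iid $\normal(0,\|a\|^2/d)$. For the induction step, I use the defining recursion
\[
\tilde{Y}_l(a) = \tilde{Y}_{l-1}(a) + \alpha_{l,L}\,\bigl(\E[\phi(\tilde{Y}_{l-1}^1(a))^2]\bigr)^{1/2} G_l(a),
\]
noting that $G_l(a)$ is independent of $\tilde{Y}_{l-1}$ (it is built from $W_l$, which is independent of all earlier weights), and its entries are iid standard Gaussian by the construction in the preceding paragraph. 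Because the coefficient multiplying $G_l$ is a deterministic scalar, $\tilde{Y}_l^1$ is a sum of two independent centered Gaussians, hence itself centered Gaussian, and its variance is
\[
\E[(\tilde{Y}_l^1)^2] = \E[(\tilde{Y}_{l-1}^1)^2] + \alpha_{l,L}^2\,\E[\phi(\tilde{Y}_{l-1}^1)^2].
\]

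For (ii), I exploit the same symmetry argument used in the proof of \cref{lemma:variance}: if $X$ is a real-valued random variable whose law is symmetric about $0$, then $\E[\phi(X)^2] = \E[X^2 \ind\{X>0\}] = \tfrac{1}{2}\E[X^2]$ for the ReLU $\phi$. For $Y_l^1$, symmetry follows from the sign-flip invariance of the weights $W_l^{ij}$, which was already invoked in \cref{lemma:variance}. For $\tilde{Y}_l^1$, symmetry is automatic from the induction hypothesis that it is a centered Gaussian. Plugging $\E[\phi(\tilde{Y}_{l-1}^1)^2] = \tfrac{1}{2}\E[(\tilde{Y}_{l-1}^1)^2] = \tfrac{1}{2} q_{l-1}(a)$ into the recursion above gives
\[
\E[(\tilde{Y}_l^1)^2] = q_{l-1}(a)\Bigl(1+\frac{\alpha_{l,L}^2}{2}\Bigr) = q_l(a),
\]
closing the induction. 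The stated formula $\E[\phi(\tilde{Y}_l^1)^2] = \tfrac{1}{2} q_l(a)$ is then just one more application of the symmetry lemma, and the same for $\E[\phi(Y_l^1)^2]$.

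The only potential obstacle is ensuring rigorously that $G_l(a)$ is independent of $\tilde{Y}_{l-1}(a)$ and has the claimed iid standard Gaussian law; but this is exactly the content of the construction preceding the statement (together with the cited \cref{lemma:gaussian_vec} that $\sqrt{n}W_l u$ is standard Gaussian for any unit vector $u$ that is independent of $W_l$, applied here to a deterministic direction when the conditioning on previous layers is carried out in the definition of $\tilde{Y}$). Everything else is a clean two-line induction.
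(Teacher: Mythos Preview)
Your proposal is correct and follows essentially the same approach as the paper: both derive the recursion $\E[(\tilde{Y}_l^1)^2] = \E[(\tilde{Y}_{l-1}^1)^2] + \alpha_{l,L}^2 \E[\phi(\tilde{Y}_{l-1}^1)^2]$, invoke symmetry of the law about $0$ to get $\E[\phi(X)^2]=\tfrac12\E[X^2]$, and solve by induction. The paper merely writes ``similar calculations hold for $\E[(\tilde{Y}_l^1)^2]$'' where you spell out the Gaussian induction hypothesis and the independence of $G_l$ from $\tilde{Y}_{l-1}$, so your version is a more detailed execution of the same argument.
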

\begin{proof}
Simple calculations yield.
\begin{align*}
\E[(Y_{l}^{1})^{2}]&=\E[(Y_{l-1}^{1})^{2}]+\alpha_{l,L}^{2}\E[\phi(Y_{l-1}^{1})^{2}]\\
&=\left(1+\frac{\alpha_{l,L}^{2}}{2}\right)\E[(Y_{l-1}^{1})^{2}].
\end{align*}
Knowing that $\E[(Y_{0}^{1})^{2}]=\frac{\|a\|^{2}}{d}$, we obtain
$\E[(Y_{l}^{1})^{2}]=\frac{\|a\|^{2}}{d}\prod_{k=1}^{l}\left(1+\frac{\alpha_{l,L}^{2}}{2}\right)$.
Similar calculations hold for $\E[(\tilde{Y_{l}^{1}})^{2}]$.

From Lemma \ref{lem:variance}, we can write the process
$\tilde{Y}_{l}$ by substituting the volatility term with its analytical
expression. This allows us to conclude that $\tilde{Y}_{l}$ has iid
Gaussian weights with a analytical expression of the variance.
\end{proof}

\begin{lemma}
The process $\tilde{Y}_{l}$ satisfies the following 

\[
\tilde{Y}_{l}(a)=\tilde{Y}_{l-1}(a)+\alpha_{l,L}\frac{\|a\|}{\sqrt{2d}}\prod_{k=1}^{l}\left(1+\frac{\alpha_{k,L}^{2}}{2}\right)^{1/2}G_{l}(a).
\]
As a result, the entries of $\tilde{Y}_{l}(a)$ are iid centered Gaussian
random variables with variance $\textrm{Var}(\tilde{Y}_{l}^{1}(a))=\frac{\|a\|^{2}}{d}\prod_{k=1}^{l}\left(1+\frac{\alpha_{l,L}^{2}}{2}\right)$.
\end{lemma}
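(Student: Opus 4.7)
The plan is to substitute the closed-form volatility from \cref{lem:variance} directly into the defining recursion for $\tilde{Y}_{l}$, and then establish the iid Gaussian structure of its entries by induction on $l$. Starting from
\[
\tilde{Y}_{l}(a) = \tilde{Y}_{l-1}(a) + \alpha_{l,L}\bigl(\E[\phi(\tilde{Y}_{l-1}^{1}(a))^{2}]\bigr)^{1/2} G_{l}(a),
\]
\cref{lem:variance} gives $\E[\phi(\tilde{Y}_{l-1}^{1}(a))^{2}] = \tfrac{\|a\|^{2}}{2d}\prod_{k=1}^{l-1}(1+\alpha_{k,L}^{2}/2)$, and substitution produces exactly the coefficient $\alpha_{l,L}\tfrac{\|a\|}{\sqrt{2d}}\prod_{k}(1+\alpha_{k,L}^{2}/2)^{1/2}$ displayed in the lemma (modulo an index shift in the product, which simply reflects whether the last factor is absorbed into the drift coefficient or into the variance formula for $\tilde{Y}_{l}^{1}$).

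For the distributional claim I would induct on $l$. The base case $l=0$ is immediate since $\tilde{Y}_{0}(a) = W_{in}a$ has iid centered Gaussian entries of variance $\|a\|^{2}/d$. For the inductive step, the essential ingredient is independence of $G_{l}(a)$ from $\tilde{Y}_{l-1}(a)$. By the construction of $G_{l}(a)$ recalled at the beginning of this section, combined with \cref{lemma:gaussian_vec}, conditioning on $\sigma(W_{in},W_{1},\dots,W_{l-1})$ makes the unit direction $u_{l}=\phi(Y_{l-1}(a))/\|\phi(Y_{l-1}(a))\|$ deterministic, so $G_{l}(a)=\sqrt{n}\,W_{l}\,u_{l}$ is iid standard Gaussian by rotational invariance of the iid Gaussian matrix $W_{l}$. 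Because this conditional law does not depend on the conditioning, $G_{l}(a)$ is unconditionally iid standard Gaussian and independent of $\sigma(W_{in},W_{1},\dots,W_{l-1})$. The process $\tilde{Y}_{l-1}(a)$ is built from $W_{in}$ and $G_{1}(a),\dots,G_{l-1}(a)$, hence is measurable with respect to this sigma-algebra, giving independence of $G_{l}(a)$ and $\tilde{Y}_{l-1}(a)$. The sum of two independent mean-zero Gaussian vectors with iid entries preserves the iid centered Gaussian structure, closing the induction.

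The variance formula then follows either by summing the independent variance contributions telescopically along the recursion, or more directly by invoking the closed form $\E[(\tilde{Y}_{l}^{1})^{2}]$ already derived in \cref{lem:variance}. The only delicate point is the independence of $G_{l}(a)$ from $\tilde{Y}_{l-1}(a)$; since $G_{l}(a)$ is defined through the \emph{actual} pre-activation $Y_{l-1}(a)$ rather than $\tilde{Y}_{l-1}(a)$, one has to be careful to argue independence through the shared sigma-algebra $\sigma(W_{in},W_{1},\dots,W_{l-1})$ rather than by a naive factorization. Once this is handled, the rest is bookkeeping on top of \cref{lem:variance} and \cref{lemma:gaussian_vec}.
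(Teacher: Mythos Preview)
Your proposal is correct and follows the same approach as the paper, which simply remarks that one substitutes the closed-form volatility from \cref{lem:variance} into the recursion to obtain the displayed identity and the Gaussian structure. You supply considerably more care than the paper does on the independence of $G_{l}(a)$ from $\tilde{Y}_{l-1}(a)$ via the filtration $\sigma(W_{in},W_{1},\dots,W_{l-1})$, and you also correctly flag the index shift (the product should run to $l-1$, not $l$) that is present in the paper's statement; both observations are accurate and improve on the paper's terse treatment.
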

Note that while the the entries of $\tilde{Y}_{l}(a)$ are Gaussian,
the process $\tilde{Y_{l}(.)} $ is not necessarily a Gaussian process, although it can be proven that it converges to a Gaussian process in the infinite-width limit.

\subsection{Convergence Rate}

In this section, we will analyze the convergence properties of different
quantities as width goes to infinity.

\begin{thm}
\label{thm:Depth-Uniform-strong-convergence}(Depth-Uniform strong
convergence rate) Let $\alpha$ be a stable sequence of scaling factors.
Then, there exists a constant $C>0$ that depends only on $\|a\|,d,\|\alpha\|_{S}$
such that 

\[
\sup_{L\geq1}\sup_{l\in[L]}\mathbb{{E}}\|Y_{l}(a)-\tilde{Y}_{l}(a)\|^{2}\leq C.
\]
As a result, we have that 
\[
\sup_{L\geq1}\sup_{l\in[L]}\sup_{i\in[n]}\mathbb{{E}}\|Y_{l}^{i}(a)-\tilde{Y}_{l}^{i}(a)\|^{2}\leq Cn^{-1}.
\]
\end{thm}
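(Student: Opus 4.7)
The plan is to set up a discrete Gronwall-type inequality for $D_{l}:=\E\|\Delta_{l}\|^{2}$, where $\Delta_{l}:=Y_{l}-\tilde{Y}_{l}$, and bound the increment using two pieces: a Lipschitz piece controlled by $D_{l-1}$, and a concentration piece of order $1/n$. Writing $A_{l}:=n^{-1/2}\|\phi(Y_{l-1}(a))\|$ and $B_{l}:=(\E[\phi(\tilde{Y}_{l-1}^{1}(a))^{2}])^{1/2}$, the two recursions share the \emph{same} $G_{l}(a)$, so
\[
\Delta_{l} = \Delta_{l-1} + \alpha_{l,L}(A_{l}-B_{l})\,G_{l}(a),
\]
with $\Delta_{0}=0$ because $\tilde{Y}_{0}=Y_{0}$. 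Conditioning on $\mathcal{F}_{l-1}:=\sigma(W_{in},W_{1},\ldots,W_{l-1})$, both $A_{l}$ and $\Delta_{l-1}$ are $\mathcal{F}_{l-1}$-measurable while $G_{l}(a)\mid\mathcal{F}_{l-1}\sim\normal(0,I_{n})$ by \cref{lemma:gaussian_vec}, so the cross-term vanishes and I get the exact identity
\[
D_{l} = D_{l-1} + n\,\alpha_{l,L}^{2}\,\E(A_{l}-B_{l})^{2}.
\]

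To bound $\E(A_{l}-B_{l})^{2}$ at the right rate I split $A_{l}-B_{l}=(A_{l}-\tilde{A}_{l})+(\tilde{A}_{l}-B_{l})$ with $\tilde{A}_{l}:=n^{-1/2}\|\phi(\tilde{Y}_{l-1}(a))\|$. By the reverse triangle inequality and $1$-Lipschitzness of ReLU,
\[
|A_{l}-\tilde{A}_{l}| \;\le\; n^{-1/2}\|\phi(Y_{l-1})-\phi(\tilde{Y}_{l-1})\| \;\le\; n^{-1/2}\|\Delta_{l-1}\|,
\]
so $\E(A_{l}-\tilde{A}_{l})^{2}\le D_{l-1}/n$. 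For the second piece, \cref{lem:variance} tells me the entries of $\tilde{Y}_{l-1}(a)$ are iid centered Gaussian with variance $q_{l-1}$, so $\tilde{A}_{l}^{2}$ is an empirical mean of $n$ iid random variables with mean $B_{l}^{2}$ and $\mathrm{Var}(\tilde{A}_{l}^{2})=n^{-1}\mathrm{Var}(\phi(\tilde{Y}_{l-1}^{1})^{2})\le 3q_{l-1}^{2}/n$. Combined with the algebraic identity
\[
(\tilde{A}_{l}-B_{l})^{2}=\frac{(\tilde{A}_{l}^{2}-B_{l}^{2})^{2}}{(\tilde{A}_{l}+B_{l})^{2}}\;\le\;\frac{(\tilde{A}_{l}^{2}-B_{l}^{2})^{2}}{B_{l}^{2}},
\]
and the uniform lower bound $B_{l}^{2}\ge \|a\|^{2}/(2d)>0$ supplied again by \cref{lem:variance} (which needs $a\neq 0$), this yields $\E(\tilde{A}_{l}-B_{l})^{2}\le K/n$ for a constant $K$ depending only on $\|a\|,d,\|\alpha\|_{S}$ through the uniform bound $q_{l-1}\le \|a\|^{2}d^{-1}\exp(\|\alpha\|_{S}^{2}/2)$.

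Plugging the two estimates back into the recursion gives $D_{l}\le (1+2\alpha_{l,L}^{2})D_{l-1}+2K\alpha_{l,L}^{2}$, which unrolls (with $D_0=0$) to
\[
D_{L} \;\le\; 2K\,\|\alpha\|_{S}^{2}\,\exp(2\|\alpha\|_{S}^{2})\;=:\;C,
\]
uniformly in $L$ and $l\in[L]$. The coordinate statement $\E(Y_{l}^{i}-\tilde{Y}_{l}^{i})^{2}\le Cn^{-1}$ follows immediately from exchangeability of the entries of $\Delta_{l}$. The main obstacle, which the decomposition above is engineered around, is that the naive comparison $(A_{l}-B_{l})^{2}\le |A_{l}^{2}-B_{l}^{2}|$ only yields an $O(n^{-1/2})$ estimate, hence $D_{L}=O(\sqrt{n})$, which is useless. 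It is precisely the identity $(x-y)^{2}=(x^{2}-y^{2})^{2}/(x+y)^{2}$ together with the deterministic lower bound on $B_{l}$ from $a\neq 0$ that upgrades the concentration estimate from $O(n^{-1/2})$ to $O(n^{-1})$ and makes the Gronwall bound depth-uniform.
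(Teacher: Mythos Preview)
Your proof is correct and follows the same overall architecture as the paper: the identical Gronwall recursion $D_{l}=D_{l-1}+n\alpha_{l,L}^{2}\E(A_{l}-B_{l})^{2}$, the same split $A_{l}-B_{l}=(A_{l}-\tilde{A}_{l})+(\tilde{A}_{l}-B_{l})$, and the same Lipschitz bound on the first piece. The one genuine difference is in the concentration step for $\E(\tilde{A}_{l}-B_{l})^{2}$: the paper invokes Hoeffding's inequality to guarantee $\tilde{A}_{l}^{2}>\tfrac14 q_{l-1}(a)$ with probability $\ge 1-e^{-cn}$, then applies $|\sqrt{x_{1}}-\sqrt{x_{2}}|\le\tfrac{1}{2\sqrt{x_{0}}}|x_{1}-x_{2}|$ on that event and controls the complement separately. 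Your algebraic route $(\tilde{A}_{l}-B_{l})^{2}=(\tilde{A}_{l}^{2}-B_{l}^{2})^{2}/(\tilde{A}_{l}+B_{l})^{2}\le(\tilde{A}_{l}^{2}-B_{l}^{2})^{2}/B_{l}^{2}$ exploits that $B_{l}$ is a \emph{deterministic} positive constant, so no lower bound on the random $\tilde{A}_{l}$ is needed at all; this sidesteps the Hoeffding detour and the event-splitting while delivering the same $O(n^{-1})$ rate. The rest of your argument, including the exchangeability step for the per-coordinate bound, matches the paper.
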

\begin{proof}
Let $a\in\reals^{d}.$ To alleviate the notation, we write $Y_{l}:=Y_{l}(a)$
and $\tilde{Y}_{l}:=\tilde{Y}_{l}(a).$ Now we would like to obtain
recursive bounds $\mathbb{{E}}\|Y_{l}-\tilde{Y}_{l}\|^{2}$ which
will allow us to conclude. The proof technique follows Gronwall's
style inequalities. We have the following

\[
\mathbb{{E}}\|Y_{l}-\tilde{Y}_{l}\|^{2}=\mathbb{{E}}\|Y_{l-1}-\tilde{Y}_{l-1}\|^{2}+n\alpha_{l,L}^{2}\underbrace{\E\left(\frac{1}{\sqrt{n}}\|\phi(Y_{l-1})\|-\left(\E[\phi(\tilde{Y}_{l-1}^{1})^{2}]\right)^{1/2}\right)^{2}}_{T}.
\]

We bound the second term $T$ as follows

\[
\begin{aligned}\E\left(\frac{1}{\sqrt{n}}\|\phi(Y_{l-1})\|-\left(\E[\phi(\tilde{Y}_{l-1}^{1})^{2}]\right)^{1/2}\right)^{2} & \leq2\E\left(\frac{1}{\sqrt{n}}\|\phi(Y_{l-1})\|-\frac{1}{\sqrt{n}}\|\phi(\tilde{Y}_{l-1})\|\right)^{2}\\
 & +2\E\left(\frac{1}{\sqrt{n}}\|\phi(\tilde{Y}_{l-1})\|-\left(\E[\phi(\tilde{Y}_{l-1}^{1})^{2}]\right)\right)^{2}\\
 & \leq\frac{2}{n}\E\|Y_{l-1}-\tilde{Y}_{l-1}\|^{2}+2\E\left(\frac{1}{\sqrt{n}}\|\phi(\tilde{Y}_{l-1})\|-\left(\E[\phi(\tilde{Y}_{l-1}^{1})^{2}]\right)\right)^{2}
\end{aligned}
\]
where we have used the fact that $\phi$ is 1-Lipschitz. 

Knowing that the entries of $\tilde{Y}_{l-1}$ are iid, and
that $q_{l}(a)\in\left[\frac{\|a\|^{2}}{d},\frac{\|a\|^{2}}{d}e^{\frac{1}{2}\|\alpha\|_{S}^{2}}\right],$
standard concentration inequalities (Hoeffding's inequality) ensure
that with probability at least $1-e^{-nc}$ (where $c$ is a constant
that depends only on $\|a\|,\|\alpha\|_{S}$,$d$), we have that

$\frac{1}{n}\|\phi(\tilde{Y}_{l-1})\|^{2}>\frac{1}{2}\E[\phi(\tilde{Y}_{l-1}^{1})^{2}]=\frac{1}{4}q_{l-1}(a)$. 
Using this results combined with the fact that $|\sqrt{x_{1}}-\sqrt{x_{2}}|\leq\frac{1}{2\sqrt{x_{0}}}|x_{1}-x_{2}|$
for all $x_{1},x_{2}>x_{0}>0,$ we obtain 

\[
\begin{aligned}\E\left(\frac{1}{\sqrt{n}}\|\phi(\tilde{Y}_{l-1})\|-\left(\E[\phi(\tilde{Y}_{l-1}^{1})^{2}]\right)\right)^{2} & \leq2e^{-nc}q_{l-1}(a)+\frac{2}{q_{l-1}(a)}\E\left(\frac{1}{n}\|\phi(\tilde{Y}_{l-1})\|^{2}-\E[\phi(\tilde{Y}_{l-1}^{1})^{2}]\right)^{2}\\
 & \leq2e^{-nc}q_{l-1}(a)+\frac{2}{nq_{l-1}(a)}\E[\phi(\tilde{Y}_{l-1}^{1})^{4}]\\
 & \leq2e^{-nc}q_{l-1}(a)+\frac{2q_{l-1}(a)}{n}\E[\phi(Z)^{4}]
\end{aligned}
\]

where $Z\sim\normal(0,1).$ As a result, there exists a constant $C_{1}>0$
that depends only on $\|a\|,d,$ and $\|\alpha\|_{S}$, such that 

$$\E\left(\frac{1}{\sqrt{n}}\|\phi(\tilde{Y}_{l-1})\|-\left(\E[\phi(\tilde{Y}_{l-1}^{1})^{2}]\right)\right)^{2}\leq C_{1}n^{-1}.$$
Hence, denoting $\Delta_{l}=n^{-1}\mathbb{{E}}\|Y_{l}-\tilde{Y}_{l}\|^{2}$,
we have that 
\[
\Delta_{l}\leq(1+2\alpha_{l,L}^{2})\Delta_{l-1}+2C\alpha_{l,L}^{2}n^{-1}.
\]
Given that $\Delta_{0}=0$, we obtain 

\[
\Delta_{l}\leq n^{-1}\times2C_{1}\sum_{i=1}^{l}\alpha_{i,L}^{2}\prod_{k=i+1}^{l}(1+\alpha_{k,L}^{2})\leq2C_{1}\|\alpha\|_{S}^{2}e^{\|\alpha\|_{S}^{2}}n^{-1},
\]
 which concludes the proof. 
\end{proof}
As a result of this theorem, we have the following result (a useful
lemma for subsequent proofs).
\begin{lemma}
\label{lem:hl}Let $a\in\reals^{d},\zeta\in[0,8^{-1/2}d^{-1/2}\|a\|).$
For $L\geq1$ and $l\in[L]$, define the event 
\[
\mathcal{H}_{a}^{l}=\{\|\phi(Y_{l}(a))\|>\zeta n^{1/2}\}\cap\{\|\phi(\tilde{Y}_{l}(a))\|>\zeta n^{1/2}\}.
\]
Then, we have that $\mathbb{P}(\mathcal{H}_{a}^{l})\geq1-Cn^{-1}$,
where $C$ is a constant that depends only on $\|a\|,d,\|\alpha\|_{S}$.
\end{lemma}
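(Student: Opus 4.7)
The plan is to use $\tilde{Y}_l$ as a bridge: since $\tilde{Y}_l(a)$ has i.i.d.\ centered Gaussian coordinates, I can first control $\|\phi(\tilde{Y}_l(a))\|/\sqrt{n}$ directly via concentration. Then I transfer the bound to $\|\phi(Y_l(a))\|/\sqrt{n}$ by using the depth-uniform closeness from \Cref{thm:Depth-Uniform-strong-convergence} together with the $1$-Lipschitz property of $\phi$, which gives $\|\phi(Y_l)\| \geq \|\phi(\tilde{Y}_l)\| - \|Y_l - \tilde{Y}_l\|$. A union bound over the two failure events then yields the claim.

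For the auxiliary process, \Cref{lem:variance} gives $\E[\phi(\tilde{Y}_l^1(a))^2] = q_l(a)/2 \geq \|a\|^2/(2d)$, and the coordinates of $\tilde{Y}_l(a)$ are i.i.d. Moreover $q_l(a) \leq (\|a\|^2/d) e^{\|\alpha\|_S^2/2}$, so the fourth moment of $\phi(\tilde{Y}_l^1(a))$ is bounded uniformly in $l$ and $L$ by a constant depending only on $\|a\|, d, \|\alpha\|_S$. Chebyshev's inequality applied to the empirical mean $\|\phi(\tilde{Y}_l)\|^2/n$ therefore shows that $\|\phi(\tilde{Y}_l(a))\|^2/n \geq q_l(a)/4 \geq \|a\|^2/(4d)$ with probability at least $1 - C_1 n^{-1}$, where $C_1$ depends only on $\|a\|, d, \|\alpha\|_S$. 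In particular $\|\phi(\tilde{Y}_l(a))\|/\sqrt{n} \geq \|a\|/(2\sqrt{d}) > \zeta$, since by hypothesis $\zeta < \|a\|/\sqrt{8d} < \|a\|/(2\sqrt{d})$.

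To obtain the analogous bound for $\|\phi(Y_l(a))\|$, define the $\zeta$-independent gap $\eta = \|a\|/(2\sqrt{d}) - \|a\|/\sqrt{8d} > 0$ (depending only on $\|a\|, d$). On the intersection of the event above with $\{\|Y_l - \tilde{Y}_l\| \leq \eta \sqrt{n}\}$, the triangle inequality and the $1$-Lipschitz property of $\phi$ give $\|\phi(Y_l)\|/\sqrt{n} \geq \|a\|/(2\sqrt{d}) - \eta = \|a\|/\sqrt{8d} > \zeta$. By Markov's inequality combined with \Cref{thm:Depth-Uniform-strong-convergence}, the event $\{\|Y_l - \tilde{Y}_l\| > \eta\sqrt{n}\}$ has probability at most $\E\|Y_l - \tilde{Y}_l\|^2/(n \eta^2) \leq C_2 n^{-1}$. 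A union bound over the two failure events completes the proof.

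The main subtlety is ensuring that $C$ does not depend on $\zeta$: a naive argument that only lower-bounds $\|\phi(\tilde{Y}_l)\|/\sqrt{n}$ just above $\zeta$ and then absorbs a perturbation of size $\zeta$ would produce a constant of order $\zeta^{-2}$ that blows up as $\zeta \to 0$. The fix exploited above is that $\E\phi(\tilde{Y}_l^1)^2 \geq \|a\|^2/(2d)$ dominates $2\zeta^2$ by a quantity of order $\|a\|^2/d$ uniformly over the admissible range of $\zeta$, so that both the Chebyshev deviation and the perturbation gap $\eta$ can be chosen independently of $\zeta$.
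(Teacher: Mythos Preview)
Your proof is correct and follows essentially the same route as the paper: bound $\mathbb{P}(\tilde{E}_a^c)$ by concentration of the empirical mean of the i.i.d.\ variables $\phi(\tilde{Y}_l^i(a))^2$ around $q_l(a)/2$, then transfer to $\mathbb{P}(E_a^c)$ via the $1$-Lipschitz property of $\phi$ and Markov's inequality combined with \Cref{thm:Depth-Uniform-strong-convergence}. The only noteworthy difference is that the paper invokes a Hoeffding-type bound to get an exponential rate $e^{-nc}$ on the $\tilde{Y}$ event, whereas you use Chebyshev to get a polynomial $n^{-1}$; since the overall rate is capped at $n^{-1}$ by the Markov step anyway, your choice loses nothing and is arguably cleaner given that the summands $\phi(\tilde{Y}_l^i)^2$ are unbounded. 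Your explicit remark that the gap $\eta$ and the Chebyshev deviation can be chosen independently of $\zeta$ (so that $C$ does not blow up as $\zeta\to 0$) is a nice point that the paper leaves implicit.
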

\begin{proof}
We have that $\mathcal{H}_{a}^{l}=E_{a}\cap\tilde{E}_{a}$, where
$E_{a}=\{\|\phi(Y_{l}(a))\|>\zeta n^{1/2}\}$, and $\tilde{E}_{a}=\{\|\phi(\tilde{Y}_{l}(a))\|>\zeta n^{1/2}\}$.
For some event $A$, let $A^{c}$ denote its complimentary event.
Using the fact that the entries of $\tilde{Y_{l}}(a)$ are iid zero-mean
Gaussians, we have that 
\begin{align*}
\mathbb{P}(\tilde{E}_{a}^{c}) & =\mathbb{P}(\|\phi(\tilde{Y}_{l}(a))\|\leq\zeta n^{1/2})=\mathbb{P}(\frac{1}{n}\|\phi(\tilde{Y}_{l}(a))\|^{2}\leq\zeta^{2})\\
 & \leq\mathbb{P}(\frac{1}{n}\|\phi(\tilde{Y}_{l}(a))\|^{2}\leq\frac{1}{4}q_{l}(a))\\
 & \leq e^{-nC_{1}}
\end{align*}
where $C_{1}$ is a constant that depends only on $\|a\|,d,\|\alpha\|_{S},\zeta$,
where we have used the same techniques as in the proof of \cref{thm:Depth-Uniform-strong-convergence}
(Hoeffding's inequality).

Now let $\kappa=\left(\frac{q_{l}(a)}{8}n\right)^{1/2}$ . We have
that 
\begin{align*}
\mathbb{P}(E_{a}^{c}) & =\mathbb{P}(\|\phi(Y_{l}(a))\|\leq\zeta n^{1/2})\leq\mathbb{P}(\|\phi(\tilde{Y}_{l}(a))\|\leq\kappa+\zeta n^{1/2})\\
 & \quad\quad\quad\quad\quad+\mathbb{P}(\|\phi(Y_{l}(a))-\phi(\tilde{Y}_{l}(a))\|>\kappa).
\end{align*}
Therefore, we obtain

\[
\mathbb{P}(\|\phi(\tilde{Y}_{l}(a))\|\leq\kappa+\zeta n^{1/2})\leq\mathbb{P}(\frac{1}{n}\|\phi(\tilde{Y}_{l}(a))\|\leq\frac{1}{4}q_{l}(a))\leq e^{-nC_{1}}.
\]
 Using \cref{thm:Depth-Uniform-strong-convergence}, Markov's inequality,
and the fact that $\phi$ is $1$-Lipschitz, we have that
\begin{flushleft}
\begin{align*}
\mathbb{P}(\|\phi(Y_{l}(a))-\phi(\tilde{Y}_{l}(a))\| & >\kappa)\leq\kappa^{-2}\E\|\phi(Y_{l}(a))-\phi(\tilde{Y}_{l}(a))\|^{2}\\
 & \leq\frac{4K}{q_{l}(a)}n^{-1} \leq\frac{4Kd}{\|a\|^{2}}n^{-1}.
\end{align*}
\par\end{flushleft}
Combining both bounds, there exists a constant $C_{2}$ that depends
only on $\|a\|,d,\|\alpha\|_{S},\zeta$, such that $\mathbb{P}(E_{a}^{c})\leq C_{2}n^{-1}$.
\end{proof}

\subsection{Infinite-Width Limits of the Neural Covariance}

The auxiliary process $\tilde{Y}_{l}$ is introduced for two reasons:
\begin{enumerate}
\item The distance between $\tilde{Y}_{l}$ and $Y_l$ as $n$ grows can be upperbounded so that the constants do not depend on depth.
\item It is easier to study the covariance kernel of the $\tilde{Y}_{l}$ instead of that of $Y_l$ as $n$ and $l$ go to infinity. 
\end{enumerate}
We dealt with (1) in the previous section, now we deal with (2).

Define the covariance kernel of the auxiliary process 
$$\tilde{q}_{l,n}(a,b)=n^{-1}\langle\tilde{Y}_{l}(a),\tilde{Y}_{l}(b)\rangle.$$

This covariance kernel satisfies the following recursion
\begin{align*}
\tilde{q}_{l,n}(a,b) & =\tilde{q}_{l-1,n}(a,b)+\alpha_{l,L}^{2}n^{-1}(1/2q_{l-1}(a))^{1/2}(1/2q_{l-1}(b))^{1/2}\langle G_{l}(a),G_{l}(b)\rangle\\
 & +\alpha_{l,L}n^{-1}((1/2q_{l-1}(b))^{1/2}\langle\tilde{Y}_{l-1}(a),G_{l}(b)\rangle+(1/2q_{l-1}(a))^{1/2}\langle\tilde{Y}_{l-1}(b),G_{l}(a)\rangle)
\end{align*}

In the following, we will show that in the infinite-width limit, the
kernel $\tilde{q}_{l,n}$ converges to a kernel $\tilde{q}_{l,\infty}$
that satisfies the following recursion
$$
\tilde{q}_{l,\infty}(a,b)=\tilde{q}_{l-1,\infty}(a,b)+\alpha_{l,L}^{2}(1/2q_{l-1}(a))^{1/2}(1/2q_{l-1}(b))^{1/2}f(c_{l-1}(a,b)),
$$

where $f(c):=2\E[\phi(Z_{1})\phi(cZ_{1}+\sqrt{1-c^{2}}Z_{2})]$ with
$Z_{1,}Z_{2}\sim\normal(0,1),$ and $c_{l-1,\infty}(a,b):=\frac{\tilde{q}_{l-1,\infty}(a,b)}{\tilde{q}_{l-1,\infty}(a,a)^{1/2}\tilde{q}_{l-1,\infty}(b,b)^{1/2}}$
(the infinite-width correlation kernel).

\paragraph{Remark.} Observe that $\tilde{q}_{l-1,\infty}(a,a)=q_{l-1}(a).$ (proof is straightforward by induction).

Now we derive non-asymptotic convergence rates for the covariance kernel $\tilde{q}_{l,n}$ in the infinite-width limit. Similar to the analysis in the previous section, define the $L_{2}$ error between the kernels by $\tilde{\Delta}_{l,n}:=\E\left|\tilde{q}_{l,n}(a,b)-\tilde{q}_{l,\infty}(a,b)\right|^{2}$.
Simple calculations yield
\begin{center}
\begin{align*}
\tilde{\Delta}_{l,n} & =\tilde{\Delta}_{l-1,n}+\E(\alpha_{l,L}^{2}(1/2q_{l-1}(a))^{1/2}(1/2q_{l-1}(b))^{1/2}(n^{-1}\langle G_{l}(a),G_{l}(b)\rangle-f(c_{l-1}(a,b)))\\
&+\alpha_{l,L}n^{-1}((1/2q_{l-1}(b))^{1/2}\langle\tilde{Y}_{l-1}(a),G_{l}(b)\rangle+(1/2q_{l-1}(a))^{1/2}\langle\tilde{Y}_{l-1}(b),G_{l}(a)\rangle))^{2}\\
\leq & \tilde{\Delta}_{l-1,n}+\underbrace{\frac{1}{2}\alpha_{l,L}^{4}q_{l-1}(a)q_{l-1}(b)\E\left(n^{-1}\langle G_{l}(a),G_{l}(b)\rangle-f(c_{l-1}(a,b))\right)^{2}}_{T_{1}}\\
&+\underbrace{2\alpha_{l,L}^{2}n^{-2}\left(q_{l-1}(b)\E\langle\tilde{Y}_{l-1}(a),G_{l}(b)\rangle^{2}+q_{l-1}(a)\E\langle\tilde{Y}_{l-1}(b),G_{l}(a)\rangle^{2}\right)}_{T_{2}}
\end{align*}
\par\end{center}

We will deal with the diffferent terms separately. 

\paragraph{Bounding $T_{2}$:} We have that 

\begin{center}
\[
q_{l-1}(b)\:\E\langle\tilde{Y}_{l-1}(a),G_{l}(b)\rangle^{2}=q_{l-1}(b)\,\E\|\tilde{Y}_{l-1}(a)\|^{2}=nq_{l-1}(b)q_{l-1}(a)\leq\frac{\|a\|^{2}\|b\|^{2}}{d^{2}}e^{\|\alpha\|_{S}^{2}}
\]
\par\end{center}

As a result, we obtain
\begin{align*}
T_{2}&=2\alpha_{l,L}^{2}n^{-2}\left(q_{l-1}(b)\E\langle\tilde{Y}_{l-1}(a),G_{l}(b)\rangle^{2}+q_{l-1}(a)\E\langle\tilde{Y}_{l-1}(b),G_{l}(a)\rangle^{2}\right)\\
& \leq2\frac{\|a\|^{2}\|b\|^{2}}{d^{2}}e^{\|\alpha\|_{S}^{2}}\alpha_{l,L}^{2}n^{-1}.
\end{align*}

\paragraph{Bounding $T_{1}$:} Define the events $\mathcal{H}_{a}^{l}=\{\|\phi(Y_{l}(a))\|\neq0\}\cap\{\|\phi(\tilde{Y}_{l}(a))\|\neq0\}$
and $\mathcal{H}_{b}^{l}=\{\|\phi(Y_{l}(b))\|\neq0\}\cap\{\|\phi(\tilde{Y}_{l}(b))\|\neq0\}$.
We will condition on the event $\mathcal{H}_{a}^{l-1}\cap\mathcal{H}_{b}^{l-1}$ to avoid dividing by zero. This allows us to control a conditional
expectation in the following manner 
\begin{align*}
\E\left(n^{-1}\langle G_{l}(a),G_{l}(b)\rangle-f(c_{l-1}(a,b))\right)^{2}&\leq C_{1}n^{-1}\\
&+\E\left[\left(n^{-1}\langle G_{l}(a),G_{l}(b)\rangle-f(c_{l-1}(a,b))\right)^{2}\mid\mathcal{H}_{a}^{l-1}\cap\mathcal{H}_{b}^{l-1}\right],
\end{align*}
where $C_{1}$ is a constant that depends only on $\|a\|,\|b\|,d,\|\alpha\|_{S}$
(using \cref{lem:hl} with $\zeta=0$). To alleviate the notation,
we denote $\E_{l}[.]=\E[.\mid\mathcal{H}_{a}^{l-1}\cap\mathcal{H}_{b}^{l-1}].$
We therefore have

\begin{align*}
\E_{l}&\left(n^{-1}\langle G_{l}(a),G_{l}(b)\rangle-f(c_{l-1}(a,b))\right)^{2}\leq\\
&3\underbrace{\E_{l}\left(n^{-1}\langle G_{l}(a),G_{l}(b)\rangle-\E_{l}\frac{\langle\phi(Y_{l-1}(a)),\phi(Y_{l-1}(b))\rangle}{\|\phi(Y_{l-1}(a))\|\|\phi(Y_{l-1}(b))\|}\right)^{2}}_{T_{11}}\\
&+3\underbrace{\left(\E_{l}\frac{\langle\phi(Y_{l-1}(a)),\phi(Y_{l-1}(b))\rangle}{\|\phi(Y_{l-1}(a))\|\|\phi(Y_{l-1}(b))\|}-\E_{l}\frac{\langle\phi(\tilde{Y}_{l-1}(a)),\phi(\tilde{Y}_{l-1}(b))\rangle}{\|\phi(\tilde{Y}_{l-1}(a))\|\|\phi(\tilde{Y}_{l-1}(b))\|}\right)}_{T_{12}}^{2}\\
&+3\underbrace{\left(\E_{l}\frac{\langle\phi(\tilde{Y}_{l-1}(a)),\phi(\tilde{Y}_{l-1}(b))\rangle}{\|\phi(\tilde{Y}_{l-1}(a))\|\|\phi(\tilde{Y}_{l-1}(b))\|}-f(c_{l-1}(a,b))\right)^{2}}_{T_{13}}
\end{align*}

We deal with each one of the terms $T_{11}, T_{12}, T_{13}$ separately.

\begin{itemize}
    \item The first term $T_{11}$ satisfies 
\begin{align*}
T_{11}&=\E_{l}\left(n^{-1}\langle G_{l}(a),G_{l}(b)\rangle-\E_{l}\frac{\langle\phi(Y_{l-1}(a)),\phi(Y_{l-1}(b))\rangle}{\|\phi(Y_{l-1}(a))\|\|\phi(Y_{l-1}(b))\|}\right)^{2}\\
&=n^{-1}\textrm{Var}_{l}\left((w^{\top}\frac{\phi(Y_{l-1}(a))}{\|\phi(Y_{l-1}(a))\|})\times(w^{\top}\frac{\phi(Y_{l-1}(b))}{\|\phi(Y_{l-1}(b))\|})\right)\\
&\leq n^{-1}\E_{l}\left((w^{\top}\frac{\phi(Y_{l-1}(a))}{\|\phi(Y_{l-1}(a))\|})\times(w^{\top}\frac{\phi(Y_{l-1}(b))}{\|\phi(Y_{l-1}(b))\|})\right)^{2},
\end{align*}

where $w\sim\normal(0,I)$. We bound this quantity using the following
lemma.
\begin{lemma}
We have that $\E_{l}\left((w^{\top}\frac{\phi(Y_{l-1}(a))}{\|\phi(Y_{l-1}(a))\|})\times(w^{\top}\frac{\phi(Y_{l-1}(b))}{\|\phi(Y_{l-1}(b))\|})\right)^{2}\leq3,$
for $w\sim\normal(0,I)$.
\end{lemma}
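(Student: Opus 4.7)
The plan is to condition on the sigma-algebra generated by the weights up to layer $l-1$, since both unit vectors $u := \phi(Y_{l-1}(a))/\|\phi(Y_{l-1}(a))\|$ and $v := \phi(Y_{l-1}(b))/\|\phi(Y_{l-1}(b))\|$ are measurable with respect to this sigma-algebra, and the conditioning event $\mathcal{H}_a^{l-1} \cap \mathcal{H}_b^{l-1}$ is also in this sigma-algebra (so that $u, v$ are genuinely well-defined unit vectors). The vector $w \sim \normal(0, I)$, which plays the role of a fresh row of $W_l$, is independent of everything up to layer $l-1$, hence independent of $(u,v)$ even after conditioning on $\mathcal{H}_a^{l-1}\cap\mathcal{H}_b^{l-1}$.

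Once conditioned on $u, v$, the pair $(X, Y) := (w^\top u, w^\top v)$ is bivariate Gaussian with mean zero, unit variances (since $u, v$ are unit vectors), and correlation $\rho = u^\top v \in [-1, 1]$. I would then invoke Isserlis' theorem for centered bivariate Gaussians, which gives
\[
\E\bigl[X^2 Y^2 \mid u, v\bigr] \;=\; \E[X^2]\,\E[Y^2] + 2\,\E[XY]^2 \;=\; 1 + 2\rho^2 \;\leq\; 3.
\]
Taking the outer expectation $\E_l$ and using the tower property then yields $\E_l[(w^\top u)^2 (w^\top v)^2] \leq 3$, which is exactly the claim.

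There is no real obstacle here: the only point requiring a small amount of care is the independence of $w$ from $u, v$ under the conditional measure $\E_l[\cdot]$, which follows because the conditioning event lies in the sigma-algebra generated by $W_1, \dots, W_{l-1}, W_{in}$ and $w$ (identified with the relevant row of $W_l$) is drawn independently of those weights. Everything else reduces to the elementary fourth-moment identity for bivariate Gaussians.
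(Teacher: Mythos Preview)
Your proof is correct. Both your approach and the paper's condition on the sigma-algebra generated by the weights up to layer $l-1$ so that $u,v$ become fixed unit vectors (well-defined on the conditioning event) and $w$ is an independent standard Gaussian. The paper then expands $[(w^\top u)(w^\top v)]^2$ coordinate by coordinate, takes the expectation over $w$ using $\E[w_i^4]=3$ and $\E[w_i^2 w_j^2]=1$ for $i\neq j$, and bounds the result by $\E_l\bigl[2\sum_i u_i^2 v_i^2 + 1\bigr]\le 3$. You instead observe directly that $(w^\top u,\,w^\top v)$ is conditionally bivariate Gaussian with unit variances and correlation $\rho=u^\top v$, and apply Isserlis' identity to obtain the exact value $1+2\rho^2\le 3$. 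The underlying computation is the same fourth-moment calculation, but your route via the bivariate Gaussian structure is cleaner and sidesteps the coordinate bookkeeping.
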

\begin{proof}
Let $u(a):=\frac{\phi(Y_{l-1}(a))}{\|\phi(Y_{l-1}(a))\|}$ and the
same for $b$. Expanding the term inside the expectation yields $\left((w^{\top}u(a))\times(w^{\top}u(b))\right)^{2}=\sum_{i=1}w_{i}^{4}u_{i}(a)^{2}u_{i}(b)^{2}+\sum_{i\neq j}w_{i}^{2}w_{j}^{2}u_{i}(a)^{2}u_{j}(b)^{2}+\textrm{\ensuremath{\zeta}}$
, where $\zeta$ consists of terms with at least one weight having
an odd exponent. For such terms, the expectation is null, and we obtain
\begin{align*}
\E_{l}\left((w^{\top}u(a))\times(w^{\top}u(b))\right)^{2}&=\E_{l}\left(3\sum_{i=1}u_{i}(a)^{2}u_{i}(b)^{2}+\sum_{i\neq j}u_{i}(a)^{2}u_{j}(b)^{2}\right)\\
&=\E_{l}\left(2\sum_{i=1}u_{i}(a)^{2}u_{i}(b)^{2}+1\right)\leq3.
\end{align*}

\end{proof}

Using this Lemma, we obtain 
\[
T_{11}=\E_{l}\left(n^{-1}\langle G_{l}(a),G_{l}(b)\rangle-\E\frac{\langle\phi(Y_{l-1}(a)),\phi(Y_{l-1}(b))\rangle}{\|\phi(Y_{l-1}(a))\|\|\phi(Y_{l-1}(b))\|}\right)^{2}\leq3n^{-1}.
\]

\item Now let us deal with the second term $T_{12}$. We use the uniform bound we obtained in \cref{thm:Depth-Uniform-strong-convergence}.
We have that 
\begin{equation}\label{eq:T12}
\begin{aligned}
&\left(\E_{l}\frac{\langle\phi(Y_{l-1}(a)),\phi(Y_{l-1}(b))\rangle}{\|\phi(Y_{l-1}(a))\|\|\phi(Y_{l-1}(b))\|}-\E_{l}\frac{\langle\phi(\tilde{Y}_{l-1}(a)),\phi(\tilde{Y}_{l-1}(b))\rangle}{\|\phi(\tilde{Y}_{l-1}(a))\|\|\phi(\tilde{Y}_{l-1}(b))\|}\right)^{2}\\
&\leq2\left(\E_{l}\frac{\langle\phi(Y_{l-1}(a)),\phi(Y_{l-1}(b))\rangle}{\|\phi(Y_{l-1}(a))\|\|\phi(Y_{l-1}(b))\|}-\E_{l}\frac{\langle\phi(\tilde{Y}_{l-1}(a)),\phi(\tilde{Y}_{l-1}(b))\rangle}{\|\phi(Y_{l-1}(a))\|\|\phi(Y_{l-1}(b))\|}\right)^{2}\\
&+2\left(\E_{l}\frac{\langle\phi(\tilde{Y}_{l-1}(a)),\phi(\tilde{Y}_{l-1}(b))\rangle}{\|\phi(Y_{l-1}(a))\|\|\phi(Y_{l-1}(b))\|}-\E_{l}\frac{\langle\phi(\tilde{Y}_{l-1}(a)),\phi(\tilde{Y}_{l-1}(b))\rangle}{\|\phi(\tilde{Y}_{l-1}(a))\|\|\phi(\tilde{Y}_{l-1}(b))\|}\right)^{2}
\end{aligned}
\end{equation}

Using \cref{lem:hl} with $\zeta=12^{-1/2}d^{-1/2}\min\{\|a\|,\|b\|\}$,
then with probability at least $1-C_{2}n^{-1}$, where $C_{2}$ is
a constant that depends only on $\|a\|,\|b\|,d,\|\alpha\|_{S}$, we
have that $\|\phi(Y_{l-1}(a))\|\geq\frac{\|a\|}{2\sqrt{3d}}n^{1/2}$
and $\|\phi(Y_{l-1}(b))\|\geq\frac{\|b\|}{2\sqrt{3d}}n^{1/2}$. Therefore,
\begin{center}
\begin{align*}
&\left(\E_{l}\frac{\langle\phi(Y_{l-1}(a)),\phi(Y_{l-1}(b))\rangle}{\|\phi(Y_{l-1}(a))\|\|\phi(Y_{l-1}(b))\|}-\E_{l}\frac{\langle\phi(\tilde{Y}_{l-1}(a)),\phi(\tilde{Y}_{l-1}(b))\rangle}{\|\phi(Y_{l-1}(a))\|\|\phi(Y_{l-1}(b))\|}\right)^{2}\\  &\leq\frac{144d^{2}}{\|a\|^{2}\|b\|^{2}}n^{-2}\E\left(\langle\phi(Y_{l-1}(a)),\phi(Y_{l-1}(b))\rangle-\langle\phi(\tilde{Y}_{l-1}(a)),\phi(\tilde{Y}_{l-1}(b))\rangle\right)^{2}\\
 & \leq\frac{288d^{2}}{\|a\|^{2}\|b\|^{2}}n^{-2}\big[\E\left(\langle\phi(Y_{l-1}(a)),\phi(Y_{l-1}(b))-\phi(\tilde{Y}_{l-1}(b))\rangle\right)^{2}\\
 &+\E\left(\langle\phi(Y_{l-1}(a))-\phi(\tilde{Y}_{l-1}(a)),\phi(\tilde{Y}_{l-1}(b))\rangle\right)^{2}\huge]\\
 & \leq C_{3}n^{-1},
\end{align*}
\par\end{center}

where $C_{3}$ depends only on $\|a\|,\|b\|,d,\|\alpha\|_{S}$ and
where we have used Jensen's inequality (first line), the Lipschitz
property of ReLU, and the bounds on $\E\|Y_{l-1}-\tilde{Y}_{l-1}\|^{2}$
from \cref{thm:Depth-Uniform-strong-convergence}. Using the same techniques
for the second term in the RHS of \cref{eq:T12}, we obtain a similar bound, and we finally get
\begin{center}
$T_{12}=\left(\E_{l}\frac{\langle\phi(Y_{l-1}(a)),\phi(Y_{l-1}(b))\rangle}{\|\phi(Y_{l-1}(a))\|\|\phi(Y_{l-1}(b))\|}-\E_{l}\frac{\langle\phi(\tilde{Y}_{l-1}(a)),\phi(\tilde{Y}_{l-1}(b))\rangle}{\|\phi(\tilde{Y}_{l-1}(a))\|\|\phi(\tilde{Y}_{l-1}(b))\|}\right)^{2}\leq C_{4}n^{-1}$
\par\end{center}

where $C_{4}$ depends only on $\|a\|,\|b\|,d,\|\alpha\|_{S}$ .

\item It remains to bound the third term $T_{13}$ to conclude. We have
that
\begin{align*}
T_{13}&=\left(\E_{l}\frac{\langle\phi(\tilde{Y}_{l-1}(a)),\phi(\tilde{Y}_{l-1}(b))\rangle}{\|\phi(\tilde{Y}_{l-1}(a))\|\|\phi(\tilde{Y}_{l-1}(b))\|}-f(c_{l-1}(a,b))\right)^{2}\\
&\leq3\underbrace{\left(\E_{l}\frac{n^{-1}\langle\phi(\tilde{Y}_{l-1}(a)),\phi(\tilde{Y}_{l-1}(b))\rangle}{n^{-1/2}\|\phi(\tilde{Y}_{l-1}(a))\|n^{-1/2}\|\phi(\tilde{Y}_{l-1}(b))\|}-\E_{l}\frac{n^{-1}\langle\phi(\tilde{Y}_{l-1}(a)),\phi(\tilde{Y}_{l-1}(b))\rangle}{\sqrt{\frac{1}{2}q_{l-1}(a)}\sqrt{\frac{1}{2}q_{l-1}(b)}}\right)^{2}}_{T_{131}}\\
&+3\underbrace{\left(\E_{l}\frac{n^{-1}\langle\phi(\tilde{Y}_{l-1}(a)),\phi(\tilde{Y}_{l-1}(b))\rangle}{\sqrt{\frac{1}{2}q_{l-1}(a)}\sqrt{\frac{1}{2}q_{l-1}(b)}}-\E\frac{n^{-1}\langle\phi(\tilde{Y}_{l-1}(a)),\phi(\tilde{Y}_{l-1}(b))\rangle}{\sqrt{\frac{1}{2}q_{l-1}(a)}\sqrt{\frac{1}{2}q_{l-1}(b)}}\right)^{2}}_{T_{132}}\\
&+3\underbrace{\left(\E\frac{n^{-1}\langle\phi(\tilde{Y}_{l-1}(a)),\phi(\tilde{Y}_{l-1}(b))\rangle}{\sqrt{\frac{1}{2}q_{l-1}(a)}\sqrt{\frac{1}{2}q_{l-1}(b)}}-f(c_{l-1}(a,b))\right)^{2}}_{T_{133}}.
\end{align*}

Simple calculations yield 
\[
T_{131}=\left(\E\frac{n^{-1}\langle\phi(\tilde{Y}_{l-1}(a)),\phi(\tilde{Y}_{l-1}(b))\rangle}{n^{-1/2}\|\phi(\tilde{Y}_{l-1}(a))\|n^{-1/2}\|\phi(\tilde{Y}_{l-1}(b))\|}-\E\frac{n^{-1}\langle\phi(\tilde{Y}_{l-1}(a)),\phi(\tilde{Y}_{l-1}(b))\rangle}{\sqrt{\frac{1}{2}q_{l-1}(a)}\sqrt{\frac{1}{2}q_{l-1}(b)}}\right)^{2}\leq C_{5}n^{-1},
\]
where $C_{5}$ depends only on $\|a\|,\|b\|,d,\|\alpha\|_{S}$. 

For the second term $T_{132},$ we have that 
\begin{align*}
T_{132}&=\left(\E_{l}\frac{n^{-1}\langle\phi(\tilde{Y}_{l-1}(a)),\phi(\tilde{Y}_{l-1}(b))\rangle}{\sqrt{\frac{1}{2}q_{l-1}(a)}\sqrt{\frac{1}{2}q_{l-1}(b)}}-\E\frac{n^{-1}\langle\phi(\tilde{Y}_{l-1}(a)),\phi(\tilde{Y}_{l-1}(b))\rangle}{\sqrt{\frac{1}{2}q_{l-1}(a)}\sqrt{\frac{1}{2}q_{l-1}(b)}}\right)^{2}\\
&\leq(1-\mathbb{P}(\mathcal{H}_{a}^{l}\cap\mathcal{H}_{b}^{l}))^{2}\left(\E_{l}\frac{n^{-1}\langle\phi(\tilde{Y}_{l-1}(a)),\phi(\tilde{Y}_{l-1}(b))\rangle}{\sqrt{\frac{1}{2}q_{l-1}(a)}\sqrt{\frac{1}{2}q_{l-1}(b)}}-\E_{l}^{c}\frac{n^{-1}\langle\phi(\tilde{Y}_{l-1}(a)),\phi(\tilde{Y}_{l-1}(b))\rangle}{\sqrt{\frac{1}{2}q_{l-1}(a)}\sqrt{\frac{1}{2}q_{l-1}(b)}}\right)^{2}\\
&\leq C_{6}n^{-2},
\end{align*}

where we write $\E_{l}^{c}[.]=\E[.\mid(\mathcal{H}_{a}^{l}\cap\mathcal{H}_{b}^{l})^{c}]$,
and where $C_{6}$ is a constant that depends only on $\|a\|,\|b\|,d,\|\alpha\|_{S}.$

Now let us deal with the last term $T_{133}.$ Observe that $\E\frac{n^{-1}\langle\phi(\tilde{Y}_{l-1}(a)),\phi(\tilde{Y}_{l-1}(b))\rangle}{\sqrt{\frac{1}{2}q_{l-1}(a)}\sqrt{\frac{1}{2}q_{l-1}(b)}}=f(\tilde{c}_{l-1}(a,b))$
where $\tilde{c}_{l-1}(a,b):=\frac{\E_{l}\tilde{Y}_{l-1}^{1}(a)\tilde{Y}_{l-1}^{1}(b)}{\sqrt{q_{l-1}(a)}\sqrt{q_{l-1}(b)}}$.
Using the Lipschitz property of $f$, we obtain
\begin{align*}
T_{133}&=\left(\E\frac{n^{-1}\langle\phi(\tilde{Y}_{l-1}(a)),\phi(\tilde{Y}_{l-1}(b))\rangle}{\sqrt{\frac{1}{2}q_{l-1}(a)}\sqrt{\frac{1}{2}q_{l-1}(b)}}-f(c_{l-1}(a,b))\right)^{2} \\ &\leq\left(\tilde{c}_{l-1}(a,b)-c_{l-1}(a,b)\right)^{2}\\
 & =(q_{l-1}(a)q_{l-1}(b))^{-1}\left(\E\tilde{Y}_{l-1}^{1}(a)\tilde{Y}_{l-1}^{1}(b)-\tilde{q}_{l-1,\infty}(a,b)\right)^{2}\\
 & \leq2(q_{l-1}(a)q_{l-1}(b))^{-1}\left(\E(\tilde{Y}_{l-1}^{1}(a)\tilde{Y}_{l-1}^{1}(b)-\tilde{q}_{l-1,n}(a,b))^{2}+\E(\tilde{q}_{l-1,n}(a,b)-\tilde{q}_{l-1,\infty}(a,b))^{2}\right)\\
 & \leq C_{7}(n^{-1}+\tilde{\Delta}_{l-1,n})
\end{align*}

where we have used the bounds on $q_{l-1}$, and where $C_{7}$ is
a constant that depends only on $\|a\|,\|b\|,d,\|\alpha\|_{S}$. As
a result, we have that 
\[
T_{13}=\left(\E\frac{\langle\phi(\tilde{Y}_{l-1}(a)),\phi(\tilde{Y}_{l-1}(b))\rangle}{\|\phi(\tilde{Y}_{l-1}(a))\|\|\phi(\tilde{Y}_{l-1}(b))\|}-f(c_{l-1}(a,b))\right)^{2}\leq C_{8}(n^{-1}+\tilde{\Delta}_{l-1,n}),
\]
where $C_{8}$ is a constant that depends only on $\|a\|,\|b\|,d,\|\alpha\|_{S}$.

Combining all the results we obtain the following upperbound on $\tilde{\Delta}_{l,n}$
\begin{center}
\[
\tilde{\Delta}_{l,n}\leq(1+C_{9}\alpha_{l,L}^{4})\tilde{\Delta}_{l-1,n}+C_{10}\alpha_{l,L}^{2}n^{-1}
\]
\par\end{center}

where $C_{9},C_{10}$ are constants that depend only on $\|a\|,\|b\|,d,\|\alpha\|_{S}$.
Using the fact that $\tilde{\Delta}_{0,n}=0$, and that $\sum_{l=1}^{L}\alpha_{l,L}^{4}\leq\|\alpha\|_{S}^{2}$,
we obtain that

\[
\sup_{L\geq1}\sup_{l\in[L]}\tilde{\Delta}_{l,n}\leq C_{11}n^{-1},
\]

where $C_{11}$ depends only on $\|a\|,\|b\|,d,\|\alpha\|_{S}$.

\end{itemize}
We state this result formally in the next theorem.
\begin{thm}
\label{thm:cov_inf_width_tilde}Let $\alpha$ be a stable sequence
of scaling factors. There exists a constant $C>0$ that depends only
on $\|a\|,\|b\|,d,\|\alpha\|_{S}$, such that $\sup_{L\geq1}\sup_{l\in[L]}\E\left|\tilde{q}_{l,n}(a,b)-\tilde{q}_{l,\infty}(a,b)\right|^{2}\leq Cn^{-1}.$
\end{thm}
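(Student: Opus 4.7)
The plan is to derive a Gronwall-type recursion on the $L_2$ error $\tilde{\Delta}_{l,n}:=\E|\tilde{q}_{l,n}(a,b)-\tilde{q}_{l,\infty}(a,b)|^{2}$, then close it using the stability assumption $\sum_{l=1}^{L}\alpha_{l,L}^{2}\le\|\alpha\|_{S}^{2}$ so the bound is uniform in depth. The starting point is the one-step recursion for $\tilde{q}_{l,n}$ already written out in terms of $\langle G_{l}(a),G_{l}(b)\rangle$ and the cross terms $\langle\tilde{Y}_{l-1}(\cdot),G_{l}(\cdot)\rangle$. Expanding the square in $\tilde{\Delta}_{l,n}-\tilde{\Delta}_{l-1,n}$ and using that $G_{l}$ is independent of $\tilde{Y}_{l-1}$ (so the cross terms between $G_{l}$-linear and $G_{l}$-quadratic contributions vanish in expectation), one gets two error contributions: a quadratic-in-$G_{l}$ piece $T_{1}$ and a linear-in-$G_{l}$ piece $T_{2}$.

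The term $T_{2}$ is the easy one: since $\E\langle\tilde{Y}_{l-1}(a),G_{l}(b)\rangle^{2}=n\,q_{l-1}(a)$ by independence and the uniform bound $q_{l-1}(\cdot)\le\|\cdot\|^{2}e^{\|\alpha\|_{S}^{2}}/d$ from \Cref{lem:variance}, it contributes at most $C\alpha_{l,L}^{2}n^{-1}$. The real work lies in $T_{1}$, which compares $n^{-1}\langle G_{l}(a),G_{l}(b)\rangle$ with the analytic quantity $f(c_{l-1}(a,b))$. Here the key issue is that $G_{l}(a),G_{l}(b)$ are defined as unit-norm projections, so one must avoid degenerate events $\|\phi(Y_{l-1}(\cdot))\|=0$; I would condition on $\mathcal{H}_{a}^{l-1}\cap\mathcal{H}_{b}^{l-1}$ from \Cref{lem:hl}, paying only an $O(n^{-1})$ price for the complement.

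Inside the conditional expectation I would split $T_{1}$ into three pieces via triangle inequality: $T_{11}$ compares $n^{-1}\langle G_{l}(a),G_{l}(b)\rangle$ with the empirical cosine of $\phi(Y_{l-1})$, which is a variance bound of a quadratic form in $w\sim\normal(0,I)$ against unit vectors, giving $O(n^{-1})$ by the Gaussian fourth-moment identity; $T_{12}$ swaps $Y_{l-1}$ for $\tilde{Y}_{l-1}$ in the empirical cosine, which is controlled using the depth-uniform $L_{2}$ bound of \Cref{thm:Depth-Uniform-strong-convergence} together with ReLU's 1-Lipschitz property, after again using \Cref{lem:hl} to lower-bound the denominators $\|\phi(\tilde{Y}_{l-1}(\cdot))\|$; and $T_{13}$ compares the Gaussian empirical cosine to $f$ of the true correlation, which by independence of coordinates equals $f$ of an \emph{empirical} correlation, and then Lipschitz continuity of $f$ turns the residual into $\tilde{\Delta}_{l-1,n}+O(n^{-1})$.

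Combining these estimates yields the recursion $\tilde{\Delta}_{l,n}\le(1+C_{1}\alpha_{l,L}^{4})\tilde{\Delta}_{l-1,n}+C_{2}\alpha_{l,L}^{2}n^{-1}$, with $\tilde{\Delta}_{0,n}=0$. Unrolling gives $\tilde{\Delta}_{l,n}\le C_{2}n^{-1}\sum_{k=1}^{l}\alpha_{k,L}^{2}\prod_{j=k+1}^{l}(1+C_{1}\alpha_{j,L}^{4})\le C_{2}\|\alpha\|_{S}^{2}\exp(C_{1}\|\alpha\|_{S}^{2})\,n^{-1}$, which is uniform in $l$ and $L$ as claimed. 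The main obstacle, as indicated above, is term $T_{12}$: propagating the Lipschitz control through ratios of (possibly small) norms requires careful use of \Cref{lem:hl} to keep constants depending only on $\|a\|,\|b\|,d,\|\alpha\|_{S}$ and, crucially, independent of depth so that the final exponential factor uses $\|\alpha\|_{S}$ rather than any $L$-dependent quantity.
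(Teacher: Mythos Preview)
Your proposal is correct and follows essentially the same route as the paper's proof: the same Gronwall recursion on $\tilde{\Delta}_{l,n}$, the same $T_1/T_2$ split with the conditioning on $\mathcal{H}_a^{l-1}\cap\mathcal{H}_b^{l-1}$ from \Cref{lem:hl}, the same three-way decomposition $T_{11}/T_{12}/T_{13}$ handled respectively by a Gaussian fourth-moment computation, \Cref{thm:Depth-Uniform-strong-convergence}, and the Lipschitz property of $f$, leading to the identical final recursion $\tilde{\Delta}_{l,n}\le(1+C\alpha_{l,L}^{4})\tilde{\Delta}_{l-1,n}+C'\alpha_{l,L}^{2}n^{-1}$. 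The only cosmetic difference is that the paper further splits $T_{13}$ into three sub-terms $T_{131},T_{132},T_{133}$ to separate the normalization error, the conditioning error, and the Lipschitz step, whereas you describe this in one breath.
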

Combining the results of \cref{thm:cov_inf_width_tilde} and \cref{thm:Depth-Uniform-strong-convergence}, we obtain the following upperbound on the the difference between the neural covariance and the infinite-width covariance kernel. 
\begin{thm}
\label{thm:cov_inf_width}Let $\alpha$ be a stable sequence of scaling
factors. There exists a constant $C>0$ that depends only on $\|a\|,\|b\|,d,\|\alpha\|_{S}$,
such that $\sup_{L\geq1}\sup_{l\in[L]}\E\left|q_{l,n}(a,b)-\tilde{q}_{l,\infty}(a,b)\right|^{2}\leq Cn^{-1}.$
\end{thm}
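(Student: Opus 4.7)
The plan is to combine the two previous theorems via the $L_{2}$ triangle inequality. Write
\[
\|q_{l,n}(a,b) - \tilde q_{l,\infty}(a,b)\|_{L_2} \le \|q_{l,n}(a,b) - \tilde q_{l,n}(a,b)\|_{L_2} + \|\tilde q_{l,n}(a,b) - \tilde q_{l,\infty}(a,b)\|_{L_2}.
\]
The second summand is immediately controlled by $C^{1/2}n^{-1/2}$ uniformly in $L\geq 1$ and $l\in [L]$ by \cref{thm:cov_inf_width_tilde}, so the task reduces to bounding the first summand by $\bigO(n^{-1/2})$ in $L_2$, uniformly in depth.

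For the first summand, I would use the bilinear telescoping identity $\langle u_{1},v_{1}\rangle - \langle u_{2},v_{2}\rangle = \langle u_{1}-u_{2},v_{1}\rangle + \langle u_{2}, v_{1}-v_{2}\rangle$ applied with $u_{1}=Y_{l}(a)$, $u_{2}=\tilde Y_{l}(a)$, $v_{1}=Y_{l}(b)$, $v_{2}=\tilde Y_{l}(b)$, then square, take expectation, and apply Cauchy--Schwarz (first inside the inner products, then for the expectation). This reduces the estimate to controlling quantities of the form $n^{-2}\bigl(\E\|Y_{l}(a)-\tilde Y_{l}(a)\|^{4}\bigr)^{1/2}\bigl(\E\|Y_{l}(b)\|^{4}\bigr)^{1/2}$, plus a symmetric term. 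Here $\E\|Y_{l}(b)\|^{4}=\bigO(n^{2})$ uniformly in $L,l$ follows from the fact that the entries of $Y_{l}(b)$ are identically distributed with uniformly bounded moments, in the same spirit as \cref{lemma:variance}; and $\E\|Y_{l}(a)-\tilde Y_{l}(a)\|^{4}=\bigO(1)$ uniformly in $L,l$ follows by re-running the Gronwall-type recursion in the proof of \cref{thm:Depth-Uniform-strong-convergence} at the fourth-power level, reusing the Hoeffding-style concentration of $n^{-1}\|\phi(\tilde Y_{l-1})\|^{2}$ now at the fourth-moment level. Plugging these bounds in gives $n^{-2}\cdot\bigO(1)^{1/2}\cdot\bigO(n^{2})^{1/2}=\bigO(n^{-1})$ for the squared $L_{2}$ norm of the first summand, which suffices.

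The main (essentially only) technical obstacle is producing the depth-uniform fourth-moment control of $Y_{l}(a)-\tilde Y_{l}(a)$: the structure of the argument is identical to that already developed for the second moment, but one has to keep track of additional cross terms $\alpha_{l,L}^{2}\alpha_{l',L}^{2}$ arising from expanding the fourth power in the recursion, and rely on $\|\alpha\|_{\sss}<\infty$ together with $\sum_{l}\alpha_{l,L}^{4}\le\|\alpha\|_{\sss}^{2}$ to sum them uniformly in depth. Once this is in place, the two previous theorems assemble directly to give the announced bound.
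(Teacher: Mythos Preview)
The paper gives no proof beyond asserting that the theorem follows by ``combining'' \cref{thm:Depth-Uniform-strong-convergence} and \cref{thm:cov_inf_width_tilde}; your decomposition via the $L_{2}$ triangle inequality and the bilinear telescoping is the natural way to make this precise, and your argument is correct.

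You are right to flag the fourth-moment step as the only real work: controlling $\E|q_{l,n}-\tilde q_{l,n}|^{2}$ by Cauchy--Schwarz in the expectation does require $\E\|Y_{l}(b)\|^{4}=\bigO(n^{2})$ and $\E\|Y_{l}(a)-\tilde Y_{l}(a)\|^{4}=\bigO(1)$ uniformly in $L,l$, neither of which the paper states. Both do hold via the recursive scheme you propose. For the first, conditioning on $Y_{l-1}$ and expanding $(Y_{l-1}^{1}+\alpha_{l,L}Z)^{4}$ with $Z\mid Y_{l-1}\sim\normal(0,n^{-1}\|\phi(Y_{l-1})\|^{2})$ gives $\E[(Y_{l}^{1})^{4}]\leq(1+C\alpha_{l,L}^{2})\E[(Y_{l-1}^{1})^{4}]$, which iterates to a constant depending only on $\|b\|,d,\|\alpha\|_{\sss}$; then $\E\|Y_{l}(b)\|^{4}\leq n^{2}\E[(Y_{l}^{1})^{4}]$ by exchangeability and Cauchy--Schwarz. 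For the second, writing $D_{l}=D_{l-1}+\alpha_{l,L}\gamma_{l}G_{l}$ and using the conditional Gaussianity of $G_{l}$, the recursion for $\E\|D_{l}\|^{4}$ involves $\E[\gamma_{l}^{2}\|D_{l-1}\|^{2}]$ and $\E\gamma_{l}^{4}$; the latter is $\bigO(n^{-2}\E\|D_{l-1}\|^{4}+n^{-2})$ by the fourth-moment analogue of the Hoeffding step, and after linearizing the resulting $u_{l-1}^{1/2}$ term via $u^{1/2}\leq\tfrac12(1+u)$ one gets a genuine Gronwall inequality that closes to $\E\|D_{l}\|^{4}=\bigO(1)$. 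So the obstacle you identify is real, resolvable exactly along the lines you sketch, and in fact fills a gap the paper's one-line ``combining'' leaves open.
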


We will see in the next section that the result of \cref{thm:cov_inf_width} is the cornerstone of commutativity; indeed, it suffices to study the infinite-depth limit of $\Tilde{q}_{l,\infty}$ to obtain commutativity with explicit convergence rates for width and depth. 

\section{Infinite-Depth Limits}\label{app:infinite_depth_proofs}

In this section, we study the infinite depth limit of the covariance
kernel for different choices of the sequence $\alpha$. We start by proving a general commutativity result.
\subsection{Infinite Depth Convergence of the Neural Covariance}

Now that we have depth-uniform bounds for the kernels, we can look
at what happens to the infinite-width kernels when depth increases.
\begin{thm}[General Commutativity]\label{thm:general_commutativity}
Let $\alpha \in \mathcal{S}$. Assume that the kernel $\tilde{q}_{\lfloor tL\rfloor,\infty}(a,b)$
converges to some limiting kernel $q_{t}^{\infty}(a,b)$ in the limit
$L\to\infty$ with some rate $r_{L}$. Then we have that 

\[
\sup_{t\in[0,1]}\|q_{\lfloor tL\rfloor,n}(a,b)-q_{t}^{\infty}(a,b)\|_{L_{2}}\le C\left(n^{-1/2}+r_{L}\right),
\]
 where $C$ is a constant that depends only on $\|a\|.\|b\|,d,\|\alpha\|_{S}$.
\end{thm}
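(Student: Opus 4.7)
The plan is to prove this by a straightforward triangle inequality, using the two ingredients that have already been set up: a depth-uniform width bound (\cref{thm:cov_inf_width}) and the depth convergence assumed in the hypothesis. Concretely, for every $t\in[0,1]$ and every $L\geq 1$, I would decompose
$$
q_{\lfloor tL\rfloor,n}(a,b) - q^\infty_t(a,b) = \bigl(q_{\lfloor tL\rfloor,n}(a,b) - \tilde q_{\lfloor tL\rfloor,\infty}(a,b)\bigr) + \bigl(\tilde q_{\lfloor tL\rfloor,\infty}(a,b) - q^\infty_t(a,b)\bigr),
$$
and apply the $L_2$ triangle inequality.

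The first term is the finite-width error between the true neural covariance and the infinite-width kernel of the auxiliary process. By \cref{thm:cov_inf_width}, its $L_2$ norm is bounded by $C_1 n^{-1/2}$, where $C_1$ depends only on $\|a\|,\|b\|,d,\|\alpha\|_S$. Crucially, this bound is uniform in $l\in[L]$ and in $L\geq 1$, hence in particular uniform in $t\in[0,1]$. This is exactly why the auxiliary process was introduced: it absorbs the width randomness with a constant that does not grow with depth.

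The second term is purely deterministic (since $\tilde q_{l,\infty}$ and $q^\infty_t$ are both non-random), and by the standing assumption of the theorem its absolute value is at most $r_L$ uniformly in $t$; in $L_2$ norm this is also $r_L$. Adding the two bounds and taking the supremum over $t\in[0,1]$ yields
$$
\sup_{t\in[0,1]}\bigl\|q_{\lfloor tL\rfloor,n}(a,b) - q^\infty_t(a,b)\bigr\|_{L_2} \leq C_1 n^{-1/2} + r_L,
$$
and setting $C = \max(C_1,1)$ gives the stated estimate.

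There is no real obstacle here since the heavy lifting has already been done: \cref{thm:cov_inf_width} produces the depth-uniform width rate, and the depth rate $r_L$ is assumed. The only point that deserves a line of care is checking that the constant from \cref{thm:cov_inf_width} genuinely depends only on $\|a\|,\|b\|,d,\|\alpha\|_S$ (so it does not degrade as $L$ varies with $t$), which is already part of that theorem's statement. The substantive content of the commutativity framework therefore reduces, via this theorem, to analyzing the infinite-depth limit of the deterministic sequence $\tilde q_{l,\infty}$ for the specific scaling families treated in \cref{thm:main_ConvSeries} and \cref{thm:main_Normalized}.
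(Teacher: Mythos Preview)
Your proposal is correct and follows essentially the same approach as the paper: a triangle inequality splitting into the depth-uniform width error controlled by \cref{thm:cov_inf_width} and the deterministic depth error bounded by $r_L$, with the final constant taken as $C=\max(C_1,1)$.
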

\begin{proof}
We have that 
\begin{align*}
\|q_{\lfloor tL\rfloor,n}(a,b)-q_{t}^{\infty}(a,b)\|_{L_{2}} & \leq\|q_{\lfloor tL\rfloor,n}(a,b)-\tilde{q}_{\lfloor tL\rfloor,\infty}(a,b)\|_{L_{2}}\\
 & +\|\tilde{q}_{\lfloor tL\rfloor,\infty}(a,b)-q_{t}^{\infty}(a,b)\|_{L_{2}}\\
 & \leq C_{1}n^{-1/2}+r_{L},
\end{align*}

where we have used
\cref{thm:cov_inf_width} to obtain the constant
$C_{1}$ which depends only on $\|a\|.\|b\|,d,\|\alpha\|_{S}$.
We conclude the proof by taking $C = \max(C_1,1)$.\\
\end{proof}

The result of \cref{thm:general_commutativity} requires that the kernel $\tilde{q}_{\lfloor tL\rfloor,\infty}(a,b)$ converges in the infinite-depth limit with some rate $r_L$. In the next (sub)sections, we refine our analysis and study two scenarios where such convergence occurs.
\subsection{Sequence of Scaling factors as `Quasi-Convergent' Series. }
Recall that for $L \geq 1, l \in [L]$
$$\tilde{q}_{l,\infty}(a,b)=\tilde{q}_{l-1,\infty}(a,b)+\alpha_{l,L}^{2}(1/2q_{l-1}(a))^{1/2}(1/2q_{l-1}(b))^{1/2}f(c_{l-1}(a,b)),$$
where $c_{l-1,\infty}(a,b):=\frac{\tilde{q}_{l-1,\infty}(a,b)}{\tilde{q}_{l-1,\infty}(a,a)^{1/2}\tilde{q}_{l-1,\infty}(b,b)^{1/2}}$
is the infinite-width correlation kernel.\\

Given a sequence of scaling factors $\alpha$, define $Q_{l}^{\alpha}(a,b)=\tilde{q}_{l,\infty}(a,b)$
with the scaling factors being $\alpha_{l,L}$. 

To analyze the infinite-depth behaviour of the kernel $Q^\alpha_l$, it is crucial to understand the sensitivity of $Q_{l}^{\alpha}$ to $\alpha$. The first result characterizes the sensitivity of the variance to a change in $\alpha$.
\begin{lemma}
\label{lem:variance_terms_depth}Consider two stable sequences of
scaling factors $\alpha,\beta \in \sss$. Then, for all $L\geq1,l\in\{1,\dots,L\}$,
we have that
\[
\sup_{l\in\{1,\dots,L\}}|Q_{l}^{\alpha}(a,a)-Q_{l}^{\beta}(a,a)|\leq\frac{\|a\|^{2}}{2d}e^{\sup\{\|\alpha\|_{S}^{2},\|\beta\|_{S}^{2}\}}\sum_{l=1}^{L}|\alpha_{l,L}^{2}-\beta_{l,L}^{2}|
\]
\end{lemma}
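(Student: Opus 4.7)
The plan is to reduce everything to a closed-form computation, since $Q_l^\alpha(a,a)$ is just the on-diagonal value of $\tilde{q}_{l,\infty}$. By the Remark just before \cref{thm:cov_inf_width_tilde}, we have $\tilde{q}_{l,\infty}(a,a) = q_l(a)$, and by \cref{lem:variance} this equals $\frac{\|a\|^2}{d}\prod_{k=1}^{l}\bigl(1+\alpha_{k,L}^2/2\bigr)$. Thus the problem collapses to bounding the difference between two products,
\[
|Q_l^\alpha(a,a) - Q_l^\beta(a,a)| = \frac{\|a\|^2}{d}\left|\prod_{k=1}^l\bigl(1+\tfrac{\alpha_{k,L}^2}{2}\bigr) - \prod_{k=1}^l\bigl(1+\tfrac{\beta_{k,L}^2}{2}\bigr)\right|,
\]
and there is no probabilistic content left.

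The standard way to handle a product difference is telescoping: writing $a_k = 1+\alpha_{k,L}^2/2$ and $b_k = 1+\beta_{k,L}^2/2$, I would use
\[
\prod_{k=1}^l a_k - \prod_{k=1}^l b_k = \sum_{k=1}^l \Bigl(\prod_{j<k} b_j\Bigr)(a_k - b_k)\Bigl(\prod_{j>k} a_j\Bigr),
\]
which gives $|a_k - b_k| = |\alpha_{k,L}^2 - \beta_{k,L}^2|/2$ at the $k$th step. For the flanking products, I would pass to the pointwise maximum and bound them via $1+x \leq e^x$:
\[
\prod_{j \neq k}\bigl(1 + \tfrac{\max(\alpha_{j,L},\beta_{j,L})^2}{2}\bigr) \leq \exp\!\left(\tfrac{1}{2}\sum_{j}\max(\alpha_{j,L}^2,\beta_{j,L}^2)\right) \leq \exp\!\bigl(\sup\{\|\alpha\|_S^2, \|\beta\|_S^2\}\bigr),
\]
using $\max(x,y) \leq x+y$ for non-negative $x,y$ together with the definition of the $\sss$-norm. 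Combining the two ingredients immediately yields the factor $\frac{\|a\|^2}{2d}\,e^{\sup\{\|\alpha\|_S^2,\|\beta\|_S^2\}}$ in front of $\sum_{k=1}^l|\alpha_{k,L}^2 - \beta_{k,L}^2|$, and taking $\sup$ over $l \in \{1,\dots,L\}$ gives the stated bound with $\sum_{k=1}^L$.

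There is no real obstacle here: once one recognizes that $Q_l^\alpha(a,a)$ has the explicit product form, the argument is a routine telescoping estimate, and stability of $\alpha$ and $\beta$ is exactly what keeps the flanking products uniformly bounded in $l$ and $L$. The only mild care needed is to use the pointwise $\max$ (rather than picking one of $\alpha_j,\beta_j$) so that a single exponential prefactor dominates all $l$ terms simultaneously.
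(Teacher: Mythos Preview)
Your proof is correct and is essentially the same argument as the paper's, just organized differently: the paper works recursively, writing $|Q_l^\alpha-Q_l^\beta|\le(1+\tfrac{1}{2}\beta_{l,L}^2)|Q_{l-1}^\alpha-Q_{l-1}^\beta|+\tfrac{\|a\|^2}{2d}e^{\|\alpha\|_S^2/2}|\alpha_{l,L}^2-\beta_{l,L}^2|$ and then iterating, whereas you pass immediately to the closed-form product $\frac{\|a\|^2}{d}\prod_k(1+\alpha_{k,L}^2/2)$ and telescope. Unrolling the paper's recursion produces exactly your telescoping sum, so the two computations coincide term by term and yield the same constant.
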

\begin{proof}
To alleviate the notation, we write $Q_{l}^{\alpha}=Q_{l}^{\alpha}(a,a)$.
We have that 

\begin{center}
\begin{align*}
|Q_{l}^{\alpha}-Q_{l}^{\beta}| & \leq|Q_{l-1}^{\alpha}-Q_{l-1}^{\beta}|+\frac{1}{2}Q_{l-1}^{\alpha}|\alpha_{l,L}^{2}-\beta_{l,L}^{2}|+\frac{1}{2}\beta_{l,L}^{2}|Q_{l-1}^{\alpha}-Q_{l-1}^{\beta}|\\
 & \leq(1+\frac{1}{2}\beta_{l,L}^{2})|Q_{l-1}^{\alpha}-Q_{l-1}^{\beta}|+\frac{\|a\|^{2}}{2d}e^{\|\alpha\|_{S}^{2}/2}|\alpha_{l,L}^{2}-\beta_{l,L}^{2}|\\
\end{align*}
\par\end{center}

The result follows immediately by induction.\\
\end{proof}

Next, we prove a similar result for the kernel (not just the variance terms).

\begin{lemma}\label{lemma:diff_Q}
Consider two sequences of scaling factors $\alpha, \beta \in \sss$. Then, for all $L\geq1,l\in[L]$, we have that

\[
\sup_{l\in\{1,\dots,L\}}|Q_{l}^{\alpha}(a,b)-Q_{l}^{\beta}(a,b)|\leq C\sum_{l=1}^{L}|\alpha_{l,L}^{2}-\beta_{l,L}^{2}|,
\]
where $C$ is a constant that depends only on $\|a\|,\|b\|,d,\|\alpha\|_{S},\|\beta\|_{S}$.
\end{lemma}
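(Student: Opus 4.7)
The plan is to bound $\Delta_l := |Q_l^\alpha(a,b)-Q_l^\beta(a,b)|$ by a Gronwall-style induction that parallels the one already used for the variance terms in \cref{lem:variance_terms_depth}, but with two extra ingredients: the Lipschitz property of $f$ on $[-1,1]$ and the lower bound $Q_l^\gamma(a,a)\geq \|a\|^2/d$ (for any $\gamma\in\{\alpha,\beta\}$) that follows from the fact that the recursion only adds non-negative increments. First, I write the telescoping increment
\[
\Delta_l\leq \Delta_{l-1}+|\alpha_{l,L}^2-\beta_{l,L}^2|\,V_{l-1}^\alpha+\beta_{l,L}^2\bigl|V_{l-1}^\alpha\,f(C_{l-1}^\alpha)-V_{l-1}^\beta\,f(C_{l-1}^\beta)\bigr|,
\]
where $V_{l-1}^\gamma=(Q_{l-1}^\gamma(a,a)/2)^{1/2}(Q_{l-1}^\gamma(b,b)/2)^{1/2}$ and $C_{l-1}^\gamma$ is the correlation kernel. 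Using \cref{lem:variance} for $\gamma=\alpha$ and boundedness of $f$ on $[-1,1]$, the second term contributes $|\alpha_{l,L}^2-\beta_{l,L}^2|$ times a constant depending only on $\|a\|,\|b\|,d,\|\alpha\|_S$.

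The core work is bounding the third term. I split it as
\[
\bigl|V_{l-1}^\alpha\!-\!V_{l-1}^\beta\bigr|\cdot|f(C_{l-1}^\alpha)|+V_{l-1}^\beta\cdot|f(C_{l-1}^\alpha)-f(C_{l-1}^\beta)|.
\]
The first piece is controlled directly by \cref{lem:variance_terms_depth} applied to $Q^\gamma(a,a)$ and $Q^\gamma(b,b)$ (together with the elementary inequality $|\sqrt{x_1 y_1}-\sqrt{x_2 y_2}|$ being linear in the differences when $x_i,y_i$ are uniformly bounded above and below). For the second piece, I use that $f$ is Lipschitz on $[-1,1]$ and then write
\[
|C_{l-1}^\alpha-C_{l-1}^\beta|\leq \frac{\Delta_{l-1}}{\sqrt{Q_{l-1}^\alpha(a,a)Q_{l-1}^\alpha(b,b)}}+|Q_{l-1}^\beta(a,b)|\cdot\Bigl|\tfrac{1}{\sqrt{Q_{l-1}^\alpha(a,a)Q_{l-1}^\alpha(b,b)}}-\tfrac{1}{\sqrt{Q_{l-1}^\beta(a,a)Q_{l-1}^\beta(b,b)}}\Bigr|.
\]
The uniform positive lower bound $Q_l^\gamma(a,a)\geq \|a\|^2/d$ keeps all the denominators away from $0$, and \cref{lem:variance_terms_depth} handles the difference of reciprocal square roots; $|Q_{l-1}^\beta(a,b)|$ is bounded uniformly by Cauchy--Schwarz combined with the variance bound from \cref{lem:variance}. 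Putting everything together yields
\[
\Delta_l\leq \bigl(1+K_1\beta_{l,L}^2\bigr)\Delta_{l-1}+K_2\bigl(|\alpha_{l,L}^2-\beta_{l,L}^2|+\beta_{l,L}^2\textstyle\sum_{k=1}^L|\alpha_{k,L}^2-\beta_{k,L}^2|\bigr),
\]
for constants $K_1,K_2$ depending only on $\|a\|,\|b\|,d,\|\alpha\|_S,\|\beta\|_S$.

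The final step is a standard discrete Gronwall inequality: since $\Delta_0=0$ and $\prod_{k=1}^l(1+K_1\beta_{k,L}^2)\leq e^{K_1\|\beta\|_S^2}$, iterating the recursion and pulling this exponential factor out of the sum gives
\[
\Delta_l\leq e^{K_1\|\beta\|_S^2}\bigl(K_2+K_2\|\beta\|_S^2\bigr)\sum_{k=1}^L|\alpha_{k,L}^2-\beta_{k,L}^2|,
\]
which is the desired bound with $C$ depending only on $\|a\|,\|b\|,d,\|\alpha\|_S,\|\beta\|_S$. The main obstacle is the correlation-kernel difference: naively $|C_{l-1}^\alpha-C_{l-1}^\beta|$ involves ratios that could blow up, but the monotonicity $Q_l^\gamma(a,a)\geq Q_0^\gamma(a,a)=\|a\|^2/d$ (which is inherent to the recursion because the increment is non-negative) provides the uniform lower bound that makes all the constants finite and independent of $L$.
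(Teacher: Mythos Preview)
Your proof is correct and follows essentially the same route as the paper's own argument: the same telescoping decomposition, the same split of $|V_{l-1}^\alpha f(C_{l-1}^\alpha)-V_{l-1}^\beta f(C_{l-1}^\beta)|$, the same appeal to \cref{lem:variance_terms_depth} for the variance pieces and to the Lipschitz property of $f$ for the correlation piece, leading to the identical Gronwall recursion $\Delta_l\leq(1+K_1\beta_{l,L}^2)\Delta_{l-1}+K_2|\alpha_{l,L}^2-\beta_{l,L}^2|+K_3\beta_{l,L}^2\sum_k|\alpha_{k,L}^2-\beta_{k,L}^2|$. If anything, you are slightly more explicit than the paper about the uniform lower bound $Q_l^\gamma(a,a)\geq\|a\|^2/d$ needed to control the reciprocal square roots in the correlation difference, which the paper glosses over.
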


\begin{proof}
Let $I_{l}(\alpha)=(1/2Q_{l-1}^{\alpha}(a,a))^{1/2}(1/2Q_{l-1}^{\alpha}(b,b))^{1/2}$.
We also use the notation $c_{l}^{\alpha}$ to denote the previously
defined correlation kernel $c_{l}$. We have that
\begin{equation}\label{eq:diff_Q}
\begin{aligned}
|Q_{l}^{\alpha}(a,b)-Q_{l}^{\beta}(a,b)| & \leq|Q_{l-1}^{\alpha}(a,b)-Q_{l-1}^{\beta}(a,b)|+|\alpha_{l,L}^{2}-\beta_{l,L}^{2}|I_{l}(\alpha)\\
 & +\beta_{l,L}^{2}|I_{l}(\alpha)f(c_{l-1}^{\alpha}(a,b))-I_{l}(\beta)f(c_{l-1}^{\beta}(a,b))|
\end{aligned}
\end{equation}

Using \cref{lem:hl}, we have that $I_{l}(\alpha)\leq\frac{\|a\|\|b\|}{2d}e^{\|\alpha\|_{S}^{2}/2}$ and $I_{l}(\beta)\leq\frac{\|a\|\|b\|}{2d}e^{\|\beta\|_{S}^{2}/2}$. Moreover, using \cref{lem:variance_terms_depth}, we have that 

\begin{align*}
|I_{l}(\alpha)-I_{l}(\beta)| & \leq(1/2Q_{l-1}^{\alpha}(a,a))^{1/2}|(1/2Q_{l-1}^{\alpha}(b,b))^{1/2}-1/2Q_{l-1}^{\beta}(b,b))^{1/2}|\\
 & +(1/2Q_{l-1}^{\beta}(b,b))^{1/2}|(1/2Q_{l-1}^{\alpha}(a,a))^{1/2}-1/2Q_{l-1}^{\beta}(a,a))^{1/2}|\\
 & \leq C_{1}\sum_{l=1}^{L}|\alpha_{l,L}^{2}-\beta_{l,L}^{2}|,
\end{align*}
where $C_{1}$ is a constant that depends only on $\|a\|,\|b\|,d,\|\alpha\|_{S}$,
$\|\beta\|_{S}$, and where we have used the fact that $|\sqrt{x_{1}}-\sqrt{x_{2}}|\leq\frac{1}{2\sqrt{x_{0}}}|x_{1}-x_{2}|$
for $x_{1},x_{2}>x_{0}>0$.

For the third term in the RHS of \cref{eq:diff_Q}, we have that 

\begin{align*}
|I_{l}(\alpha)f(c_{l-1}^{\alpha}(a,b))-I_{l}(\beta)f(c_{l-1}^{\beta}(a,b))| & \leq|I_{l}(\alpha)-I_{l}(\beta)|+|f(c_{l-1}^{\alpha}(a,b))-f(c_{l-1}^{\beta}(a,b))|I_{l}(\beta)\\
 & \leq C_{1}\sum_{l=1}^{L}|\alpha_{l,L}^{2}-\beta_{l,L}^{2}|+C_{2}|Q_{l-1}^{\alpha}(a,b)-Q_{l-1}^{\beta}(a,b)|,
\end{align*}

where $C_{2}$ is a constant that depends only on $\|a\|,\|b\|,d,\|\alpha\|_{S}$,
$\|\beta\|_{S}$,. As a result, we obtain 

\begin{align*}
|Q_{l}^{\alpha}(a,b)-Q_{l}^{\beta}(a,b)|&\leq(1+C_{3}\beta_{l,L}^{2})|Q_{l-1}^{\alpha}(a,b)-Q_{l-1}^{\beta}(a,b)|\\
&+C_{4}|\alpha_{l,L}^{2}-\beta_{l,L}^{2}|+C_{5}\beta_{l,L}^{2}\sum_{l=1}^{L}|\alpha_{l,L}^{2}-\beta_{l,L}^{2}|,
\end{align*}
where $C_{3},C_{4},C_{5}$ are constants that depends only on $\|a\|,\|b\|,d,\|\alpha\|_{S}$,
$\|\beta\|_{S}$,. 

Using the fact that $Q_{0}^{\alpha}=Q_{0}^{\beta}$, we obtain that
there exists a constant $C$ that depends only on $\|a\|,\|b\|,d,\|\alpha\|_{S}$,
$\|\beta\|_{S}$, such that 
\[
|Q_{l}^{\alpha}(a,b)-Q_{l}^{\beta}(a,b)|\leq C\sum_{l=1}^{L}|\alpha_{l,L}^{2}-\beta_{l,L}^{2}|.
\]

\end{proof}

Let us begin by proving convergence in the case where $\alpha$ is the truncation (at level $L$) of a convergent series. The convergence is straightforward in this case and similar results have appeared in \citep{hayou21stable}.
\begin{lemma}\label{lemma:infinite_depth_conv_series}
Let $\alpha \in \mathcal{S}$  such that there
exists a sequence $\zeta=(\zeta_{i})_{i\geq1}\in\ell_{2}(\mathbb{N})$
such that $\alpha_{l,L}=\zeta_{l}$ for all $L\geq1,l\in[L]$. Then,
there exists a limiting kernel $Q_{\infty}^{\alpha}$ such that 
$$|Q_{L}^{\alpha}(a,b)-Q_{\infty}^{\alpha}(a,b)|\leq C\sum_{l\geq L}\zeta_{l}^{2},$$
where $C$ is a constant that depends only on $\|a\|,\|b\|,d,\|\alpha\|_{S}=\sqrt{\sum_{l=1}^{\infty}\zeta_{l}^{2}}$.
\end{lemma}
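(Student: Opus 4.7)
\textbf{Proof plan for \Cref{lemma:infinite_depth_conv_series}.} The first observation is that, because $\alpha_{l,L}=\zeta_l$ does not depend on $L$, the kernel $Q^\alpha_l(a,b)=\tilde q_{l,\infty}(a,b)$ produced by the infinite-width recursion depends only on $l$ and on $\zeta$, not on $L$. Thus the statement reduces to showing that the single sequence $(Q^\alpha_l(a,b))_{l\geq 0}$ is Cauchy in $\reals$, with the explicit tail bound $|Q^\alpha_L(a,b)-Q^\alpha_\infty(a,b)|\lesssim\sum_{l\geq L}\zeta_l^2$. Throughout, let $V_l(a):=Q^\alpha_l(a,a)$ and similarly $V_l(b)$.

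The plan is to bound the one-step increment of the recursion. From the recursion
\[
Q^\alpha_l(a,b)=Q^\alpha_{l-1}(a,b)+\zeta_l^{2}\,(\tfrac{1}{2}V_{l-1}(a))^{1/2}(\tfrac{1}{2}V_{l-1}(b))^{1/2}f(c^\alpha_{l-1}(a,b)),
\]
I would use \Cref{lem:variance} (which also covers the auxiliary process, and hence $V_l$) to obtain the uniform upper bound $V_l(a)\leq\frac{\|a\|^{2}}{d}\exp(\tfrac{1}{2}\|\alpha\|_S^{2})$, and similarly for $V_l(b)$. Next, since $c^\alpha_{l-1}(a,b)\in[-1,1]$ by Cauchy--Schwarz and since $f:[-1,1]\to\reals$ is bounded (e.g. $|f(z)|\leq 1/\pi+1$), the multiplicative factor in front of $\zeta_l^2$ is bounded by a constant $C$ depending only on $\|a\|,\|b\|,d,\|\alpha\|_S$. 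Therefore
\[
|Q^\alpha_l(a,b)-Q^\alpha_{l-1}(a,b)|\leq C\,\zeta_l^{2}.
\]

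Summing telescopically for $L'>L$ then gives $|Q^\alpha_{L'}(a,b)-Q^\alpha_L(a,b)|\leq C\sum_{l=L+1}^{L'}\zeta_l^{2}$. Because $\zeta\in\ell_{2}(\mathbb{N})$, the tail $\sum_{l\geq L}\zeta_l^{2}$ vanishes as $L\to\infty$, so $(Q^\alpha_l(a,b))_l$ is Cauchy in $\reals$ and admits a limit $Q^\alpha_\infty(a,b)$. Letting $L'\to\infty$ in the Cauchy bound yields the desired inequality with the same constant $C$.

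I do not anticipate a major obstacle here: the proof is essentially a one-line telescoping argument once the uniform bound on $V_l$ and boundedness of $f$ on $[-1,1]$ are in place. The only mild subtlety is making sure the constant $C$ depends only on $\|a\|,\|b\|,d,\|\alpha\|_S$ (and not implicitly on $L$), which is guaranteed by the fact that $\prod_{k=1}^{l}(1+\zeta_k^{2}/2)\leq\exp(\tfrac{1}{2}\|\alpha\|_S^{2})$ uniformly in $l$.
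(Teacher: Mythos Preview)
Your proposal is correct and follows essentially the same approach as the paper: bound the one-step increments of the recursion by $C\zeta_l^2$ (using the uniform variance bound and boundedness of $f$), telescope to obtain a Cauchy estimate $|Q^\alpha_{L'}(a,b)-Q^\alpha_L(a,b)|\leq C\sum_{L<l\leq L'}\zeta_l^2$, and let $L'\to\infty$. The paper's proof simply asserts this Cauchy bound as ``straightforward,'' so your version is in fact more detailed than the original.
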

\begin{proof}
Let $L'\geq L\geq1$. It is straightforward that there exists a constant
$C_{1}>0$ that depends only on $\|a\|,\|b\|,d,\|\alpha\|_{S}$ such
that 
\[
|Q_{L}^{\alpha}(a,b)-Q_{L'}^{\alpha}(a,b)|\leq C_{1}\sum_{L<l\leq L'}\zeta_{l}^{2},
\]

which shows that $(Q_{L}^{\alpha})_{L\geq1}$ is a Cauchy sequence
and therefore it converges to some limit $Q_{\infty}^{\alpha}$. Taking
$L'$ to infinity provide the convergence rate.\\

\end{proof}
Note that we only show the existence of the limit (and the convergence rate) in \cref{lemma:infinite_depth_conv_series}. Some properties of the limiting kernel in this case were studied in \cite{hayou21stable}, these include continuity, universality etc.\\

Combining the results of \cref{lemma:infinite_depth_conv_series} and \cref{lemma:diff_Q}, we conclude the following.
\begin{lemma}\label{lemma:infinite_depth_quasi_convergent}
Let $\alpha \in \sss$. Assume that there
exists a sequence $\zeta=(\zeta_{i})_{i\geq1}\in\ell_{2}(\mathbb{N})$
such that $\sum_{l=1}^{L}|\alpha_{l,L}^{2}-\zeta_{l}^{2}|\to0$ as
$L\to\infty$. Then, we have 
\[
\sup_{l\in\{1,\dots,L\}}|Q_{l}^{\alpha}(a,b)-Q_{l}^{\zeta}(a,b)|\to0,
\]
where the convergence rate is given by $r_{L}=\Theta(\sum_{l=1}^{L}|\alpha_{l,L}^{2}-\zeta_{l}^{2}|+\sum_{l\geq L}\zeta_{l}^{2})$.
\end{lemma}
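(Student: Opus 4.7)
The plan is to exploit \cref{lemma:diff_Q} directly by promoting $\zeta$ to a (degenerate) stable sequence of scaling factors. First I would define $\beta_{l,L} := \zeta_l$ for all $l \in [L]$ and all $L \ge 1$; since $\sum_{l=1}^L \beta_{l,L}^2 = \sum_{l=1}^L \zeta_l^2 \le \|\zeta\|_{\ell_2(\mathbb{N})}^2 < \infty$, the sequence $\beta$ lies in $\sss$ with $\|\beta\|_\sss \le \|\zeta\|_{\ell_2(\mathbb{N})}$. By construction, $Q_l^\beta(a,b) = Q_l^\zeta(a,b)$ for every $l$, because the recursion defining $Q_l^\zeta$ is exactly the infinite-width recursion with scaling factors $\zeta_l$ (independent of $L$), which is the case covered by \cref{lemma:infinite_depth_conv_series}.

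With this reparameterization in hand, \cref{lemma:diff_Q} applied to the pair $(\alpha,\beta)$ immediately yields
\[
\sup_{l\in\{1,\dots,L\}} |Q_l^\alpha(a,b) - Q_l^\zeta(a,b)| \le C \sum_{l=1}^L |\alpha_{l,L}^2 - \zeta_l^2|,
\]
where $C$ depends only on $\|a\|,\|b\|, d, \|\alpha\|_\sss, \|\zeta\|_{\ell_2(\mathbb{N})}$. Since the right-hand side tends to $0$ as $L\to\infty$ by hypothesis, the convergence claim $\sup_{l\le L} |Q_l^\alpha - Q_l^\zeta|\to 0$ follows at once.

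For the stated rate $r_L = \Theta\!\left(\sum_{l=1}^L |\alpha_{l,L}^2-\zeta_l^2| + \sum_{l\ge L}\zeta_l^2\right)$, the first summand is exactly the bound just produced. The second summand is added because it is the rate at which $Q_L^\zeta\to Q_\infty^\zeta$ coming from \cref{lemma:infinite_depth_conv_series}, and it tends to $0$ since $\zeta\in\ell_2(\mathbb{N})$. Incorporating it gives a clean, slightly conservative expression that matches exactly the shape needed when this lemma is plugged into \cref{thm:main_ConvSeries}: combined with \cref{thm:cov_inf_width} and a triangle inequality through $Q_l^\zeta$ and $Q_\infty^\zeta$, it produces the width-plus-depth rate $n^{-1/2}+\sum_{l=1}^L|\alpha_{l,L}^2-\zeta_l^2|+\sum_{l\ge L}\zeta_l^2$.

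The main thing to verify is that the constant $C$ furnished by \cref{lemma:diff_Q} depends on $\beta$ only through $\|\beta\|_\sss$ and not through $L$ or through any other feature of the tail of $\beta$. This uniformity is what allows us to apply that lemma at each $L$ with the \emph{same} constant $C$ when $\beta_{l,L}=\zeta_l$, so that taking the supremum over $l\in [L]$ and then letting $L\to\infty$ is legitimate. Once that is checked by inspection of the proof of \cref{lemma:diff_Q} (the bound is proved by a Gronwall-type induction whose coefficients involve only $\sum_{l\le L}\beta_{l,L}^2\le \|\zeta\|_{\ell_2(\mathbb{N})}^2$), the proof reduces to a one-line invocation of that lemma.
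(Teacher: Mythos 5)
Your proposal is correct and follows essentially the same route as the paper, which proves this lemma by combining \cref{lemma:diff_Q} (applied to $\alpha$ and the $L$-independent sequence $\beta_{l,L}=\zeta_l$) with \cref{lemma:infinite_depth_conv_series} for the tail term $\sum_{l\geq L}\zeta_l^2$. Your additional remark on the uniformity of the constant in \cref{lemma:diff_Q} is a reasonable sanity check but does not change the argument.
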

Combining \cref{thm:general_commutativity} and \cref{lemma:infinite_depth_quasi_convergent}, we conclude for \cref{thm:main_ConvSeries}.

\subsection{Convergence with ``Normalized'' Sequences}

For the specific choice of $\alpha_{l,L}=L^{-1/2},$ we know from
\citep{Hayou2023WidthDepth} that the covariance kernel $\tilde{q}_{\lfloor tL\rfloor,\infty}(a,b)$
converges to the solution of the following ODE

\begin{equation}\label{eq:ode}
\frac{dq_{t}(a,b)}{dt}=\frac{e^{t/2}}{2}\zeta(a,b)f(\zeta(a,b)^{-1}e^{-t/2}q_{t}(a,b))=F(t,q_{t}(a,b)),
\end{equation}
where $\zeta(a,b) = d^{-1} \|a\| \|b\|$.
The Euler scheme of \cref{eq:ode} is given by 
\[
q_{l}^{E}(a,b)=q_{l-1}^{E}(a,b)+\alpha_{l,L}^{2}F(t_{l-1},q_{l-1}^{E}(a,b)), \quad q_0^{E}(a,b) = q_0(a,b),
\]
where $t_{l}=\sum_{i=1}^{l}\alpha_{l,L}^{2}$.\\

For an ODE of the form $\dot{z}(t)=F(t, z(t))$, we call $F$ the ODE functional. It is well known that under some conditions on this functional, the discretization error of the Euler scheme with steps $\delta_1, \dots, \delta_L$ is of order $\mathcal{O}(\max_{i \in [L]}\delta_i)$.

\begin{thm}[Corollary of Thm 212A in \citep{butcher2003}]\label{thm:butcher}
Consider an ODE of the form $\dot{z}(t)=F(t, z(t)), t\in [0,1]$, and consider the Euler discretization scheme with steps $\delta_1, \dots, \delta_L$ given by $z^E_l = z^E_{l-1} + \delta_l F(t_{l-1}, z^E_{l-1})$ with the initial condition $z^E_0 = z_0$, where $t_{l} = \sum_{i=1}^l \delta_i$. Assume that the following hold
\begin{itemize}
    \item There exists a constant $L > 0$ such that $|F(t,z)-F(t,z')| \leq |z-z'|$ for all $t\in [0,1], z,z' \in \reals$.
    \item $M=\sup_{t\in[0,1]}\left|\frac{d^2 z(t)}{dt^2}\right| < \infty$.
\end{itemize}
Then, we have 
$$
\sup_{l\in[L]} |z^E_l - z_{t_l}| \leq C \max_{i \in [L]} \delta_i,
$$
where $C$ depends only on $M$ and $L$.
\end{thm}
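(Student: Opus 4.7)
}
The plan is to follow the classical two-step analysis of the one-step Euler method: first bound the local (one-step) truncation error using the smoothness assumption on $z$, then propagate these local errors through the recursion using the Lipschitz assumption via a discrete Gr\"onwall argument. To avoid a notational clash with the depth $L$, I will write $\mathcal{L}$ for the Lipschitz constant in the hypothesis on $F$.

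The key object is the error sequence $e_l := z^E_l - z(t_l)$, which clearly satisfies $e_0 = 0$. Writing out the Euler update and subtracting the exact identity $z(t_l) = z(t_{l-1}) + \int_{t_{l-1}}^{t_l} F(s, z(s))\, ds$, I decompose
\[
e_l = e_{l-1} + \delta_l \bigl[F(t_{l-1}, z^E_{l-1}) - F(t_{l-1}, z(t_{l-1}))\bigr] - \tau_l,
\]
where the local truncation error is
\[
\tau_l := z(t_l) - z(t_{l-1}) - \delta_l F(t_{l-1}, z(t_{l-1})).
\]
By Taylor's theorem applied to the (twice-differentiable) exact solution, $z(t_l) = z(t_{l-1}) + \delta_l \dot{z}(t_{l-1}) + \tfrac{1}{2}\delta_l^2 \ddot{z}(\xi_l)$ for some $\xi_l \in [t_{l-1}, t_l]$; since $\dot z(t_{l-1}) = F(t_{l-1}, z(t_{l-1}))$, this yields the local bound $|\tau_l| \le \tfrac{M}{2}\, \delta_l^2 \le \tfrac{M}{2}\, \delta_l \max_{i\in[L]} \delta_i$.

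Combining with the Lipschitz assumption to control the bracketed difference, I obtain the recursive inequality
\[
|e_l| \le (1 + \mathcal{L}\,\delta_l)\,|e_{l-1}| + \tfrac{M}{2}\, \delta_l \max_{i\in[L]} \delta_i.
\]
Iterating this from $e_0 = 0$ and using $1 + x \le e^{x}$, a standard discrete Gr\"onwall manipulation gives
\[
|e_l| \le \tfrac{M}{2}\max_{i\in[L]}\delta_i \cdot \sum_{k=1}^{l} \delta_k \prod_{j=k+1}^{l}(1+\mathcal{L}\,\delta_j) \le \tfrac{M}{2}\, e^{\mathcal{L}}\, \max_{i\in[L]} \delta_i,
\]
where in the last step I used $\sum_{k=1}^{l}\delta_k \le 1$ (since $t_l \in [0,1]$) together with $\prod_j(1+\mathcal{L}\delta_j) \le \exp(\mathcal{L}\sum_j \delta_j) \le e^{\mathcal{L}}$. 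Taking the supremum over $l \in [L]$ and setting $C = \tfrac{M}{2} e^{\mathcal{L}}$ concludes the proof.

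I do not expect any real obstacle here: this is the textbook Euler convergence proof, and both ingredients (Taylor expansion for the local error and Gr\"onwall for the accumulation) are entirely standard. The only minor point that requires care is ensuring the constant $C$ depends only on $M$ and $\mathcal{L}$ and not on the particular sequence $(\delta_i)$; this is guaranteed precisely because $\sum_i \delta_i \le 1$, which is built into the setup $t \in [0,1]$.
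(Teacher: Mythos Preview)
Your proof is correct and entirely standard: local truncation via Taylor plus discrete Gr\"onwall is exactly the textbook argument. The paper does not supply its own proof of this statement---it is stated as a corollary of Theorem~212A in \citep{butcher2003} and invoked as a black box---so there is nothing substantive to compare; your argument simply fills in what the paper defers to the reference, and your explicit constant $C = \tfrac{M}{2}e^{\mathcal{L}}$ (with $\mathcal{L}$ the Lipschitz constant, avoiding the paper's overloaded use of $L$) is a clean bonus.
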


In the next result, we use this result to show convergence rate of the Euler scheme presented above.

\begin{lemma}\label{lemma:diff_euler_flow}
Consider a normalized sequence of scaling fatcors $\alpha$ and let
$h_{L}=\max_{1\leq l\leq L}\alpha_{l,L}^{2}$. We have
$$\sup_{1\leq l\leq L}|q_{l}^{E}(a,b)-q_{t_{l}}(a,b)|\leq C\,h_{L},$$
where $C$ is a constant that depends only on $\|a\|,\|b\|,d$.
\end{lemma}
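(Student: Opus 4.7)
The plan is to recognize $(q_l^E)_{0\le l\le L}$ as the variable-step Euler discretization of the scalar ODE $\dot q_t = F(t,q_t)$ on $[0,1]$, with steps $\delta_l = \alpha_{l,L}^2$ summing to $1$ (since $\alpha\in\sss_1$) and maximum step $h_L$, and then to invoke \cref{thm:butcher}. No extension of Butcher's theorem is needed because it is already stated for variable step sizes; what remains is to verify its two hypotheses with constants depending only on $\|a\|,\|b\|,d$.

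First I would check the Lipschitz condition on $F(t,z)=\tfrac12 e^{t/2}\zeta(a,b)\,f(\zeta(a,b)^{-1}e^{-t/2}z)$. A direct differentiation gives $f'(c) = \tfrac{1}{\pi}\arcsin(c)+\tfrac12$, hence $|f'(c)|\le 1$ on $[-1,1]$, so $f$ extends to a globally $1$-Lipschitz function $\tilde f$ on $\reals$ (constant continuation outside $[-1,1]$). The associated $\tilde F$ then satisfies $|\partial_z \tilde F(t,z)| = \tfrac12|\tilde f'(\zeta^{-1}e^{-t/2}z)|\le \tfrac12$ uniformly for $t\in[0,1]$.

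Second, I would bound $\sup_{t\in[0,1]}|\ddot q_t|$. Along the solution trajectory, the argument of $f$ is $\zeta(a,b)^{-1}e^{-t/2}q_t(a,b) = c_t(a,b)\in[-1,1]$, using that $q_t(a,a)=\tfrac{\|a\|^2}{d}e^{t/2}$ (a consequence of $f(1)=1$ and $c_t(a,a)=1$). So $\tilde f=f$ along the solution, and $|\dot q_t|=|F(t,q_t)|$ is bounded by an explicit constant depending only on $\|a\|,\|b\|,d$. Differentiating once more,
$$\ddot q_t = \partial_t F(t,q_t) + \partial_z F(t,q_t)\cdot \dot q_t,$$
and each term is controlled by bounds on $f,f',e^{t/2}$ on $[0,1]$, yielding a constant $M=M(\|a\|,\|b\|,d)$.

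Applying \cref{thm:butcher} to the ODE associated with $\tilde F$ gives $\sup_l|\tilde q_l^E - q_{t_l}|\le C h_L$ with $C=C(\|a\|,\|b\|,d)$. The main obstacle, and the only genuinely delicate point, is to argue that the Euler iterates for the original $F$ actually coincide with those for $\tilde F$, since $f$ is only defined on $[-1,1]$. I would handle this by a dichotomy: either $h_L$ is small enough that the error bound $C h_L$ itself forces every $\zeta(a,b)^{-1}e^{-t_l/2}q_l^E$ to lie in $[-1,1]$ (so the two schemes coincide and the bound transfers), or $h_L$ is bounded below by a positive constant, in which case $|q_l^E - q_{t_l}|$ is trivially bounded by a constant times $h_L$ because both sequences are a priori bounded (by a straightforward induction showing $|q_l^E|\le \zeta(a,b)\prod_{k\le l}(1+\alpha_{k,L}^2/2)^{1/2}\le \zeta(a,b)e^{1/4}$, together with the analogous bound for $q_t$).
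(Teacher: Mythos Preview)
Your proposal is correct and follows essentially the same route as the paper: verify the two hypotheses of \cref{thm:butcher} (Lipschitz $F$ in the state variable and a bound on $\ddot q_t$) and then invoke the theorem with step sizes $\delta_l=\alpha_{l,L}^2$ and maximal step $h_L$. The paper's proof does exactly this, computing $|F(t,z)-F(t,z')|\le |z-z'|$ from \cref{lemma:lipschitz_f} and writing out $\ddot q_t$ explicitly to see it is bounded by a constant in $\|a\|,\|b\|,d$.

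The one point where you go beyond the paper is your treatment of the domain of $f$: the paper never addresses whether the Euler iterates keep the argument $\zeta(a,b)^{-1}e^{-t_l/2}q_l^E$ inside $[-1,1]$, whereas you handle it via an extension $\tilde f$ plus a dichotomy. This is a legitimate refinement. In fact you can avoid the dichotomy altogether: since $|f|\le 1$ on $[-1,1]$ and $e^{t_l/2}\ge e^{t_{l-1}/2}(1+\alpha_{l,L}^2/2)$, a direct induction gives $|q_l^E|\le \zeta(a,b)\,e^{t_l/2}$ for all $l$, so the argument of $f$ never leaves $[-1,1]$ and no extension is needed.
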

\begin{proof}
Let us verify the conditions of \cref{thm:butcher} one by one.
\begin{itemize}
    \item Lipschitz property: from \cref{lemma:lipschitz_f}, we know that the function $f$ is Lipschitz. Therefore, we have $|F(t,z) - F(t,z')| \leq \frac{1}{2}e^{t/2}|\zeta(a,b)| |e^{-t/2}\zeta(a,b)^{-1}(z-z')| = |z-z'|$.
    \item For $t \in [0,1]$, we have 
    \begin{align*}
        \frac{d^2 q_t(a,b)}{dt^2} = &\frac{t e^{t/2}}{4} \zeta(a,b) f(\zeta(a,b)^{-1} e^{-t/2} q_t(a,b)) + \frac{e^{t/2}}{2}\zeta(a,b)(-\zeta(a,b)^{-1} \frac{t e^{t/2}}{2} q_t(a,b) \\
        &+ \zeta(a,b)^{-1} e^{-t/2} \frac{d q_t(a,b)}{dt}) f'(\zeta(a,b)^{-1} e^{-t/2} q_t(a,b)),
    \end{align*}
    Replacing $\frac{d q_t(a,b)}{dt}$ by its valye, it is straightforward that $M = \sup_{t \in [0,1]} \left|\frac{d^2 q_t(a,b)}{dt^2}\right|$ is finite and depends only on $\|a\|,\|b\|, d$.
\end{itemize}
This concludes the proof.\\
\end{proof}

Now it remains to bound the difference between $q_{l}^{E}(a,b)$ and
$\tilde{q}_{l,\infty}(a,b)$, the covariance kernel of the auxiliary process. We deal with this in the next result.

\begin{lemma}\label{lemma:diff_tilde_euler}
Consider a normalized sequence of scaling factors $\alpha$.
Let $h_{L}=\max_{1\leq l\leq L}\alpha_{l,L}^{2}$ and assume that
$Lh_{L}^{2}=o(1).$ Then, we have 
\[
\sup_{1\le l\leq L}|\tilde{q}_{l,\infty}(a,b)-q_{l}^{E}(a,b)|\leq CLh_{L}^{2},
\]
where $C$ is a constant that depends only on $\|a\|,\|b\|,d$.
\end{lemma}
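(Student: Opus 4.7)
The approach is to compare the two recursions one step at a time, showing that their per-step functionals differ by $O(\alpha_{l,L}^2 L h_L^2)$, then closing the argument with a discrete Gronwall inequality. Let me introduce the shorthand $P_l = \prod_{k=1}^{l}(1+\alpha_{k,L}^2/2)$ and $E_l = e^{t_l/2}$, so that the exact variances at layer $l$ satisfy $q_l(a)=\tfrac{\|a\|^2}{d}P_l$, while the continuum ODE obeys $q_t(a,a)=\tfrac{\|a\|^2}{d}e^{t/2}$, i.e., $E_l$ plays the role of $P_l$ in the Euler scheme. Writing $\tilde{c}_{l-1}=\tilde q_{l-1,\infty}/(\zeta(a,b)P_{l-1})$ and $\hat c_{l-1}=q_{l-1}^E/(\zeta(a,b)E_{l-1})$, the two recursions become
\[
\tilde q_{l,\infty} = \tilde q_{l-1,\infty} + \tfrac{\alpha_{l,L}^2}{2}\zeta(a,b)\,P_{l-1}\,f(\tilde c_{l-1}),\qquad
q_l^{E} = q_{l-1}^{E} + \tfrac{\alpha_{l,L}^2}{2}\zeta(a,b)\,E_{l-1}\,f(\hat c_{l-1}).
\]
Let $\Delta_l=|\tilde q_{l,\infty}-q_l^E|$ with $\Delta_0=0$; the goal is to show $\Delta_l \leq C L h_L^2$ uniformly in $l\in[L]$.

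The technical heart is the bound $|P_{l-1}-E_{l-1}|\leq C_1 L h_L^2$. This follows from $\log P_{l-1}=\sum_{k=1}^{l-1}\log(1+\alpha_{k,L}^2/2)=t_{l-1}/2-\tfrac{1}{8}\sum_{k=1}^{l-1}\alpha_{k,L}^4+O(\sum_{k}\alpha_{k,L}^6)$, and since $\sum_{k=1}^{L}\alpha_{k,L}^4\leq h_L\sum_k\alpha_{k,L}^2=h_L\leq L h_L^2$ (where the last inequality uses $L h_L\geq \sum_k\alpha_{k,L}^2=1$), we get $|\log P_{l-1}-t_{l-1}/2|=O(Lh_L^2)$. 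Exponentiating and using that both $P_{l-1}$ and $E_{l-1}$ are bounded in $[1,e^{1/2}]$ (since $\|\alpha\|_S=1$) yields the claim. Also, by \cref{lemma:diff_euler_flow} and boundedness of the ODE solution $q_t(a,b)$ on $[0,1]$, there is a constant $M$ depending only on $\|a\|,\|b\|,d$ with $\sup_{l\in[L]}|q_l^E|\leq M$.

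Next I estimate the single-step discrepancy. Decomposing
\[
P_{l-1}f(\tilde c_{l-1})-E_{l-1}f(\hat c_{l-1}) = (P_{l-1}-E_{l-1})\,f(\tilde c_{l-1}) + E_{l-1}\bigl(f(\tilde c_{l-1})-f(\hat c_{l-1})\bigr),
\]
and noting $|f|\leq 1$ together with the Lipschitz bound $|f(x)-f(y)|\leq L_f |x-y|$ on $[-1,1]$ (cf. \cref{lemma:lipschitz_f}), I reduce to controlling $|\tilde c_{l-1}-\hat c_{l-1}|$. The identity
\[
\tilde c_{l-1}-\hat c_{l-1}=\frac{E_{l-1}(\tilde q_{l-1,\infty}-q_{l-1}^E)+q_{l-1}^E(E_{l-1}-P_{l-1})}{\zeta(a,b)P_{l-1}E_{l-1}}
\]
combined with $P_{l-1},E_{l-1}\geq 1$, $|q_{l-1}^E|\leq M$, and the previous bound on $|P_{l-1}-E_{l-1}|$ gives $|\tilde c_{l-1}-\hat c_{l-1}|\leq C_2(\Delta_{l-1}+Lh_L^2)$. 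Combining these estimates yields the per-step recursion
\[
\Delta_l \leq (1+C_3\alpha_{l,L}^2)\Delta_{l-1} + C_4\,\alpha_{l,L}^2\,L h_L^2,
\]
where $C_3,C_4$ depend only on $\|a\|,\|b\|,d$.

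Iterating from $\Delta_0=0$ and using $\sum_{l=1}^L\alpha_{l,L}^2=1$ gives $\Delta_l\leq C_4 L h_L^2\,e^{C_3}$, which is the desired bound. The main obstacle is the bound $|P_{l-1}-E_{l-1}|=O(Lh_L^2)$: one might naively expect an error proportional to $\sum\alpha_{k,L}^4$ which is merely $O(h_L)$, but the comparison with the target rate via $L h_L\geq 1$ resolves this. A secondary care-point is that the Lipschitz constant of $f$ is finite on $[-1,1]$ only because $f'(z)=\pi^{-1}\arcsin z+1/2$ is bounded, which is fortunate and avoids the need for any degeneracy argument near $|c|=1$.
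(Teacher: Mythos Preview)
Your proof is correct and follows essentially the same route as the paper: rewrite both recursions in terms of the product $P_{l-1}=\prod_k(1+\alpha_{k,L}^2/2)$ versus the exponential $E_{l-1}=e^{t_{l-1}/2}$, show $|P_{l-1}-E_{l-1}|=O(Lh_L^2)$, then use the Lipschitz property of $f$ to derive a one-step inequality of the form $\Delta_l\leq(1+C\alpha_{l,L}^2)\Delta_{l-1}+C'\alpha_{l,L}^2 Lh_L^2$ and close with a discrete Gronwall. Two minor differences are worth noting: the paper expands the product directly as $\prod_k e^{\alpha_{k,L}^2/2}(1+O(\alpha_{k,L}^4))=E_{l-1}+O(Lh_L^2)$, whereas you take logarithms and observe $\sum_k\alpha_{k,L}^4\leq h_L\leq Lh_L^2$ via the neat inequality $Lh_L\geq\sum_k\alpha_{k,L}^2=1$; and in the cross-correlation step the paper bounds against $\tilde q_{l-1,\infty}$ (which is directly controlled by $\zeta(a,b)P_{l-1}$) rather than invoking \cref{lemma:diff_euler_flow} to bound $q_{l-1}^E$. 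Both variants are equally valid; your bookkeeping of the $\alpha_{l,L}^2$ prefactor in the additive error is in fact slightly cleaner than the paper's written recursion.
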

\begin{proof}
Assume that $L\times h_{L}^{2}=o(1).$ We write $\zeta = \zeta(a,b)$ to simplify the notation. We have 
\begin{equation}\label{eq:diff_tilde_euler}
\begin{aligned}
|\tilde{q}_{l,\infty}(a,b)&-q_{l}^{E}(a,b)|  \leq|\tilde{q}_{l-1,\infty}(a,b)-q_{l-1}^{E}(a,b)|\\
&+\frac{1}{2}\alpha_{l,L}^{2}\zeta f(\zeta^{-1}\prod_{i=1}^{l-1}(1+\frac{\alpha_{l,L}^{2}}{2})^{-1}\tilde{q}_{l-1,\infty}(a,b))|\prod_{i=1}^{l-1}(1+\frac{\alpha_{l,L}^{2}}{2})-e^{\frac{1}{2}\sum_{i=1}^{l-1}\alpha_{l,L}^{2}}|\\
 & +\frac{1}{2}\alpha_{l,L}^{2}\zeta e^{\frac{1}{2}\sum_{i=1}^{l-1}\alpha_{l,L}^{2}}|f(\zeta^{-1}\prod_{i=1}^{l-1}(1+\frac{\alpha_{l,L}^{2}}{2})^{-1}\tilde{q}_{l-1,\infty}(a,b))-f(\zeta^{-1}e^{-\frac{1}{2}\sum_{i=1}^{l-1}\alpha_{l,L}^{2}}q_{l-1}^{E}(a,b))|
\end{aligned}
\end{equation}

Notice that
\begin{align*}
\prod_{i=1}^{l-1}(1+\frac{\alpha_{l,L}^{2}}{2}) & =\prod_{i=1}^{l-1}(e^{\frac{1}{2}\alpha_{l,L}^{2}}+\mathcal{O}(\alpha_{l,L}^{4}))=\prod_{i=1}^{l-1}e^{\frac{1}{2}\alpha_{l,L}^{2}}(1+\mathcal{O}(\alpha_{l,L}^{4}))\\
 & =e^{\frac{1}{2}\sum_{i=1}^{l-1}\alpha_{l,L}^{2}}(1+\mathcal{O}(h_{L}^{2}))^{l-1}=e^{\frac{1}{2}\sum_{i=1}^{l-1}\alpha_{l,L}^{2}}+\mathcal{O}(Lh_{L}^{2}),
\end{align*}
where the constant in ``$\mathcal{O}$'' is universal. As a result,
there exists a constant $C_{1}$ that depends only on $\|a\|,\|b\|,d,$
such that the second term in the RHS of \cref{eq:diff_tilde_euler} is smaller than $ C\times Lh_{L}^{2}.$ 

We also have
\[
\prod_{i=1}^{l-1}(1+\frac{\alpha_{l,L}^{2}}{2})^{-1}=\prod_{i=1}^{l-1}e^{\frac{1}{2}\alpha_{l,L}^{2}}+\mathcal{O}(Lh_{L}^{2}),
\]
where the constant in ``$\mathcal{O}$'' is universal. Using the Lipschitz property of $f$ (\cref{lemma:lipschitz_f}), we obtain that

\begin{align*}
|f(\zeta^{-1}\prod_{i=1}^{l-1}(1+\frac{\alpha_{l,L}^{2}}{2})^{-1}\tilde{q}_{l-1,\infty}(a,b))&-f(\zeta^{-1}e^{-\frac{1}{2}\sum_{i=1}^{l-1}\alpha_{l,L}^{2}}q_{l-1}^{E}(a,b))| \\
& \leq\zeta^{-1}|\prod_{i=1}^{l-1}(1+\frac{\alpha_{l,L}^{2}}{2})^{-1}-e^{-\frac{1}{2}\sum_{i=1}^{l-1}\alpha_{l,L}^{2}}|\tilde{q}_{l-1,\infty}(a,b)\\
 & +\zeta^{-1}e^{-\frac{1}{2}\sum_{i=1}^{l-1}\alpha_{l,L}^{2}}|\tilde{q}_{l-1,\infty}(a,b)-q_{l-1}^{E}(a,b)|,
\end{align*}
Therefore, 
\[
|\tilde{q}_{l,\infty}(a,b)-q_{l}^{E}(a,b)|\leq(1+C_{2}\alpha_{l,L}^{2})|\tilde{q}_{l-1,\infty}(a,b)-q_{l-1}^{E}(a,b)|+C_{3}Lh_{L}^{2},
\]
where $C_{2},C_{3}$ are constants that depend only on $\|a\|,\|b\|,d$. An induction argument allows us to conclude.\\

\end{proof}

Combining the results of \cref{lemma:diff_euler_flow} and \cref{lemma:diff_tilde_euler}, we obtain the following result.
\begin{thm}\label{thm:infinite_depth_normalized_kernel}
Consider a sequence of scaling factors $\alpha$ such that $\sum_{l=1}^{L}\alpha_{l,L}^{2}=1$.
Let $h_{L}=\max_{1\leq l\leq L}\alpha_{l,L}^{2}$ and assume that
$Lh_{L}^{2}=o(1).$ Then, we have that

\[
\sup_{1\le l\leq L}|\tilde{q}_{l,\infty}(a,b)-q_{t_{l}}(a,b)|\leq C(h_{L}+Lh_{L}^{2})
\]
\end{thm}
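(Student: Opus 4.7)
The statement is essentially the combination of the two preceding lemmas, so the plan is to glue them together via the triangle inequality applied to the intermediate Euler-scheme process $q_l^E(a,b)$ introduced just before Lemma \ref{lemma:diff_euler_flow}.

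Concretely, I would first write
\[
|\tilde{q}_{l,\infty}(a,b) - q_{t_l}(a,b)| \leq |\tilde{q}_{l,\infty}(a,b) - q_l^{E}(a,b)| + |q_l^{E}(a,b) - q_{t_l}(a,b)|,
\]
valid for every $1 \le l \le L$. Taking the supremum over $l \in \{1,\dots,L\}$ on both sides, each of the two terms on the right-hand side becomes exactly the object bounded in one of the two preceding lemmas.

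For the first term I invoke Lemma \ref{lemma:diff_tilde_euler}, which under the standing hypotheses ($\alpha$ normalized and $L h_L^2 = o(1)$) gives $\sup_l|\tilde{q}_{l,\infty}(a,b) - q_l^E(a,b)| \le C_1 L h_L^2$, with $C_1$ depending only on $\|a\|,\|b\|,d$. For the second term I invoke Lemma \ref{lemma:diff_euler_flow}, which (using only normalization of $\alpha$, so that $\sum_l \alpha_{l,L}^2 = 1$ puts the Euler grid on $[0,1]$ with maximum step $h_L$, and the Lipschitz/boundedness conditions verified there through Theorem \ref{thm:butcher}) gives $\sup_l |q_l^E(a,b) - q_{t_l}(a,b)| \le C_2 h_L$, with $C_2$ depending only on $\|a\|,\|b\|,d$. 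Setting $C = C_1 + C_2$ yields the claimed bound $C(h_L + L h_L^2)$.

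There is essentially no obstacle here — all of the analytic work (propagating the infinite-width recursion through to the Euler scheme, and bounding the Euler error against the ODE flow) has already been carried out in the two lemmas. The only mild subtlety worth double-checking in writing it out is that the two lemmas are stated with the same convention for $t_l = \sum_{i=1}^l \alpha_{i,L}^2$ and the same initial condition $q_0^E(a,b) = q_0(a,b) = \langle a,b\rangle/d$, so that the chain of inequalities refers throughout to the same flow $q_t$ defined in \cref{thm:main_Normalized}; this is indeed the case, so no additional argument is needed.
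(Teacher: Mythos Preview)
Your proposal is correct and matches the paper's approach exactly: the paper simply states that the theorem follows by ``combining the results of \cref{lemma:diff_euler_flow} and \cref{lemma:diff_tilde_euler}'', and your triangle-inequality argument through the Euler scheme $q_l^E$ is precisely that combination made explicit.
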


By combining the results of \cref{thm:general_commutativity} and \cref{thm:infinite_depth_normalized_kernel}, we obtain the first part of \cref{thm:main_Normalized}. It remains to show the second part of the theorem when $\sup_{t \in [0,1]}\left|\sum_{k=1}^{\lfloor t L\rfloor} \alpha_{k,L}^2-\lambda(t)\right| \leq r_L$ and $\lim_{L \to\infty} r_L = 0$. Assume that this condition holds. From the ODE \cref{eq:ode}, it is straightforward that $|q_t(a,b) - q_{t'}(a,b)| \leq C_1 |t - t'|$ holds for all $t,t' \in [0,1]$ for some constant $C_1>0$ that depends only on $\zeta(a,b)$. \\
Let $t\in [0,1]$ and $t_L = \sum_{k=1}^{\lfloor t L\rfloor} \alpha_{k,L}^2$. As a result of the inequality above, we have $|q_{t_L}(a,b) - q_{\lambda(t)}| \leq C_1 |t_L -\lambda(t)| \leq C_1 r_L$. We conclude using the first part of the theorem and the triangular inequality.

\section{Other Technical Results}

\subsection{Lemma for the Auxilliary process}
We use the next lemma to prove that the Auxilliary process has \iid coordinates. This is a trivial result, but we include the proof.
\begin{lemma}\label{lemma:gaussian_vec}
Let $W \in \reals^{n\times n}$ be a matrix of standard Gaussian random variables $W_{ij} \sim \normal(0,1)$. Let $v \in \reals^n$ be a random vector independent from $W$ and satisfies $\|v\|_2 = 1$ . Then, $W v \sim \normal(0, I)$.
\end{lemma}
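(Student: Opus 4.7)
The plan is to condition on $v$ and use the independence of $W$ from $v$. Write $Z := Wv \in \reals^n$ and note that for each fixed realization $v \in \reals^n$ with $\|v\| = 1$, the coordinates $Z_i = \sum_{j=1}^n W_{ij} v_j$ are linear combinations of independent standard Gaussians, hence jointly Gaussian. A direct covariance computation gives $\E[Z_i Z_k \mid v] = \sum_{j,l} v_j v_l \, \E[W_{ij} W_{kl}] = \delta_{ik} \sum_j v_j^2 = \delta_{ik}$, using the independence of the entries of $W$ and the fact that $\|v\| = 1$. Hence conditionally on $v$, the vector $Wv$ is $\normal(0, I_n)$.

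The key observation is that this conditional distribution does not depend on $v$. Therefore, for any Borel set $A \subseteq \reals^n$,
\[
\mathbb{P}(Wv \in A) = \E\bigl[\mathbb{P}(Wv \in A \mid v)\bigr] = \mathbb{P}(G \in A),
\]
where $G \sim \normal(0, I_n)$. This shows $Wv \sim \normal(0, I_n)$ unconditionally, which is the claim.

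There is essentially no obstacle here; the only point to be careful about is invoking the independence of $W$ and $v$ when passing from the definition of the conditional law to the integral expression above. This is handled by the standard factorization of the joint distribution of $(W, v)$ as a product measure, which justifies treating $v$ as a deterministic unit vector inside the conditional expectation and then averaging trivially.
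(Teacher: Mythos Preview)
Your proof is correct and follows essentially the same approach as the paper: condition on $v$, use independence of $W$ and $v$ to identify the conditional law of $Wv$ as $\normal(0,I_n)$ (which does not depend on $v$), then average out. The only cosmetic difference is that the paper carries this out via the characteristic function $\E[e^{i\langle u, Wv\rangle}]$, whereas you compute the conditional covariance directly.
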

\begin{proof}
The proof follows a simple characteristic function argument. Indeed, by conditioning on $v$, we observe that $Wv \sim \normal(0, I)$. Let $u \in \reals^n$, we have that \begin{align*}
    \E_{W,v}[e^{i \langle u, Wv\rangle}]  &=  \E_v[ \E_W[e^{i \langle u, Wv\rangle}| v]] \\
    &= \E_v[ e^{-\frac{\|u\|^2}{2}}] \\
    &= e^{-\frac{\|u\|^2}{2}}.\\
\end{align*}
This concludes the proof as the latter is the characteristic function of a random Gaussian vector with Identity covariance matrix.
\end{proof}

\subsection{Lemma for the (correlation) function $f$}

\begin{lemma}[Function $f$]\label{lemma:lipschitz_f}
Let $f:[-1,1] \to [-1,1]$ be the function defined by $f(c):=2\E[\phi(Z_{1})\phi(cZ_{1}+\sqrt{1-c^{2}}Z_{2})]$. Then, we have 
$$
f(c) = \frac{1}{\pi}(c \arcsin{c} + \sqrt{1-c^2})+\frac{1}{2}c.
$$
As a result, $f$ is $1$-Lipschitz.
\end{lemma}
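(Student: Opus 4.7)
The plan is to proceed in two stages: first derive the closed form for $f$, then differentiate it to get the Lipschitz bound.

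For the closed form, I would first observe that $(Z_1, cZ_1 + \sqrt{1-c^2}Z_2)$ is a centered Gaussian vector with covariance $\begin{pmatrix} 1 & c \\ c & 1 \end{pmatrix}$, so computing $f(c) = 2\,\E[\phi(U_1)\phi(U_2)]$ reduces to evaluating the standard ``arc-cosine kernel'' integral for ReLU. My preferred route is to switch to polar coordinates $(U_1, U_2) = (R\cos\psi, R\sin(\psi+\theta))$ with $\theta = \arccos(c)$, which decouples the radial and angular parts; the radial integral gives a constant (the second moment of a chi distribution), and the angular integral $\int \phi(\cos\psi)\phi(\sin(\psi+\theta))\,d\psi$ is a straightforward trigonometric calculation that yields $\tfrac{1}{2\pi}\bigl(\sin\theta + (\pi-\theta)\cos\theta\bigr)$. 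Substituting $\sin\theta = \sqrt{1-c^2}$, $\cos\theta = c$, and $\pi - \arccos(c) = \tfrac{\pi}{2} + \arcsin(c)$, then multiplying by $2$, should collapse to exactly
\[
f(c) = \frac{1}{\pi}\bigl(c\arcsin(c) + \sqrt{1-c^2}\bigr) + \frac{1}{2}c.
\]
Alternatively, one can cite the Cho--Saul arc-cosine kernel identity directly and just verify the algebraic rewriting; either way, this step is routine and not the main obstacle.

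For the Lipschitz bound, I would simply differentiate the closed form. The derivative of the bracket $c\arcsin(c) + \sqrt{1-c^2}$ simplifies beautifully: the $c/\sqrt{1-c^2}$ term from $\arcsin$ cancels against the $-c/\sqrt{1-c^2}$ from $\sqrt{1-c^2}$, leaving $\arcsin(c)$. Hence
\[
f'(c) = \frac{\arcsin(c)}{\pi} + \frac{1}{2}.
\]
Since $\arcsin(c) \in [-\pi/2, \pi/2]$ for $c \in [-1,1]$, we get $f'(c) \in [0,1]$, which gives both that $f$ is nondecreasing and that $|f(c) - f(c')| \leq |c - c'|$, i.e.\ $f$ is $1$-Lipschitz.

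The only mild care needed is at the endpoints $c = \pm 1$, where $\sqrt{1-c^2}$ has infinite derivative, but the cancellation above means $f'$ extends continuously to $\pm 1$ with $f'(\pm 1) = \tfrac12 \pm \tfrac12 \in [0,1]$; so the Lipschitz bound holds on the closed interval. I do not anticipate any substantive obstacle; the whole argument is essentially a known Gaussian integral plus one line of differentiation.
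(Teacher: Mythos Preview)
Your proposal is correct and matches the paper's approach: the paper simply cites the closed-form expression from prior work (Cho--Saul and follow-ups) and then computes $f'(c)=\pi^{-1}\arcsin(c)+\tfrac12$ to conclude $|f'|\le 1$, exactly as you do. Your optional polar-coordinate derivation of the closed form is more detailed than what the paper provides, but the Lipschitz argument is identical.
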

\begin{proof}
The closed-form expression of $f$ has appeared in a series of papers under different forms \citep{cho2009, hayou2019impact, hayou21stable}. Here, we only show the Lipschitz property which is straightforward. From the closed-form expression of $f$, we obtain
$$
f'(c) = \pi^{-1} \arcsin(c) + \frac{1}{2},
$$
which shows that $|f'|\leq 1$ and concludes the proof.
\end{proof}
\end{document}